\newtheorem{theorem}{Theorem}
\newtheorem{lemma}{Lemma}
\newtheorem{remark}[lemma]{Remark}
\newtheorem{example}[lemma]{Example}
\newtheorem{proposition}[lemma]{Proposition}
\newtheorem{fact}{Fact}
\newtheorem{corollary}[lemma]{Corollary}
\newtheorem{definition}[lemma]{Definition}
\renewenvironment{proof}[1][Proof]{\begin{trivlist}
\item[\hskip \labelsep {\bfseries #1}]}{\qed\end{trivlist}}
\newcommand*\R[0]{\mathbb{R}}
\newcommand*\E[1]{\mathbb{E}\left[#1\right]}
\newcommand*\lrb[1]{\left[#1\right]}
\newcommand*\lrp[1]{\left(#1\right)}
\newcommand*\pa[0]{\phi^\mathcal{A}}
\renewcommand*{\qed}{\hfill\ensuremath{\blacksquare}}
\newcommand*{\N}{\mathbb{N}}
\newcommand*{\abs}[1]{\left|#1\right|}
\newcommand*{\brac}[1]{\left(#1\right)}
\newcommand*{\set}[1]{\left\{#1\right\}}
\newcommand{\indp}
{\perp\!\!\!\perp}
\def\rd{\mathrm{d}}
\newcounter{assumption}%
\renewcommand{\theassumption}{\arabic{assumption}}
\newenvironment{assumption}[1][]{\begin{trivlist}\item[] \refstepcounter{assumption}%
 {\bf{Assumption\ \theassumption\ }}{(#1)}.  }{%
 \ifvmode\smallskip\fi\end{trivlist}}
\def\E{\mathbb{E}}
\def\dim{\mathrm{dim}}
\def\R{\mathbb{R}}
\def\cN{\mathcal{N}}
\def\pa{\mathrm{pa}}
\NewDocumentCommand{\yl}{ mO{} }{\textcolor{teal}{\textsuperscript{\textit{YL}}\textsf{{\small[#1]}}}}
\NewDocumentCommand{\tbd}{ mO{} }{\textcolor{red}{\textsuperscript{\textit{TBD}}\textsf{\textbf{\small[#1]}}}}
\newcommand{\sk}[2]{\textit{SkewScore}_{#1}(#2)}
\newcommand{\SkewScore}{\textit{SkewScore}}
\title{A Skewness-Based Criterion for Addressing Heteroscedastic Noise in Causal Discovery}
\author{Yingyu Lin$^{*1}$, Yuxing Huang$^{*2}$, Wenqin Liu$^{*3}$, Haoran Deng$^{*4}$, Ignavier Ng$^{5}$, \\
\textbf{Kun Zhang$^{5,6}$, Mingming Gong$^{3,6}$, Yi-An Ma$^{1}$, Biwei Huang$^{1}$}\\
$^{1}$UC San Diego, $^{2}$New York University, $^{3}$The University of Melbourne, $^{4}$Zhejiang University, \\
$^{5}$Carnegie Mellon University, $^{6}$Mohamed bin Zayed University of Artificial Intelligence \\
\texttt{\{yil208,yianma,bih007\}@ucsd.edu,yh6004@nyu.edu,}\\
\texttt{\{wenqin.liu,mingming.gong\}@unimelb.edu.au,}\texttt{denghaoran@zju.edu.cn,} \\
\texttt{\{ignavierng,kunz1\}@cmu.edu}\\
}
\begin{document}

\maketitle

\def\thefootnote{*}\footnotetext{Equal contribution.}

\begin{abstract}
Real-world data often violates the equal-variance assumption (homoscedasticity), making it essential to account for heteroscedastic noise in causal discovery. In this work, we explore heteroscedastic symmetric noise models (HSNMs), where the effect $Y$ is modeled as $Y = f(X) + \sigma(X)N$, with $X$ as the cause and $N$ as independent noise following a symmetric distribution. We introduce a novel criterion for identifying HSNMs based on the skewness of the score (i.e., the gradient of the log density) of the data distribution. This criterion establishes a computationally tractable measurement that is zero in the causal direction but nonzero in the anticausal direction, enabling the causal direction discovery. We extend this skewness-based criterion to the multivariate setting and propose \texttt{SkewScore}, an algorithm that handles heteroscedastic noise without requiring the extraction of exogenous noise. We also conduct a case study on the robustness of \texttt{SkewScore} in a bivariate model with a latent confounder, providing theoretical insights into its performance. Empirical studies further validate the effectiveness of the proposed method.
\end{abstract}

\section{Introduction}
\label{sec:introduction}
Discovering causal relationships among variables from data, known as causal discovery, is of great interest in many fields such as biology \citep{sachs2005causal} and Earth system science \citep{runge2019inferring}. The primary approaches to causal discovery include constraint-based methods \citep{spirtes1991pc,spirtes2001causation}, which rely on conditional independence tests, and score-based methods \citep{heckerman1995learning,chickering2002optimal,huang2018generalized}, which optimize a certain objective function. These methods often identify the causal structure only up to Markov equivalence \citep{spirtes2001causation,glymour2019review}, since the causal directions in general cannot be determined without prior knowledge or additional assumptions \citep{pearl2009causality}.\looseness=-1

Another line of approaches, based on functional causal models, impose restrictions on the causal mechanisms to make the causal directions identifiable. The key idea is to show that, under these restrictions, no forward and backward models that result in independent noise can co-exist, which leads to an asymmetry that helps identify the causal direction. Examples include linear non-Gaussian models~\citep{shimizu2006lingam}, nonlinear additive noise models \citep{hoyer2008nonlinear}, and post-nonlinear causal models \citep{zhang2009identifiability}. Most of these models assume that the noise terms are homoscedastic, i.e., they have constant variances across different samples. This assumption may limit their applicability because the noise terms are often heteroscedastic in real-world data, such as those in environmental science~\citep{merz2021causes} and robotics~\citep{kersting2007most}.

To address this limitation, there is growing interest in developing more general functional causal models capable of inferring causal directions despite the presence of heteroscedastic noise. Previous approaches for handling heteroscedastic noise typically involve extracting noise terms through nonlinear regressions to estimate conditional means and variances, followed by independence tests on the residuals \citep{strobl2023identifying,immer2023identifiability}. This procedure needs to be repeated for each variable, which can be computationally intensive. Alternative strategies that avoid extracting noise terms include the likelihood-based approach, which often requires the noise to follow Gaussian distributions \citep{immer2023identifiability,khemakhem2021causal,duong2023heteroscedastic}, and the kernel-based criterion proposed by \citet{mitrovic2018causal}, whose theoretical guarantees for handling heteroscedastic noise may not be clear.

In this work, we introduce a novel criterion to address symmetric heteroscedastic noise in causal discovery. This is motivated by the fact that symmetric noise encompasses a wide range of widely used assumptions in noise distributions, such as Gaussian, centered uniform, Laplace, and Student's t distributions, with applications in, e.g., finance \citep{anderson1993linnik}, environmental science \citep{damsleth2018arma}, and psychology \citep{mcgill1962random}.
Specifically, we leverage the skewness of the score (the gradient of the log density) of the data distribution to establish a measurement that is zero in the causal direction but positive in the anticausal direction, thereby uncovering the causal direction.
We extend this skewness-based criterion to the multivariate setting and propose \texttt{SkewScore}, an algorithm that effectively manages heteroscedastic noise without needing to extract the exogenous noise. This approach adopts the two-phase ordering-based search framework for DAGs introduced by \citet{teyssier2005ordering}. First, we estimate the topological order by iteratively identifying the sink node of the causal graph using the skewness-based criterion. Then, we prune the directed acyclic graph (DAG) associated with this topological order using classical methods, such as conditional independence tests \citep{zhang2011kernel}. This reduces the computational complexity, lowering the number of conditional independence tests from exponential to polynomial in the number of vertices.

Furthermore, we conduct a case study to evaluate the robustness of \texttt{SkewScore} in a bivariate model with a latent confounder. Existing approaches based on functional causal models typically do not allow latent confounders, as their presence will violate the independent noise condition required for causal direction identification.  However, our theoretical insights show that in this bivariate model with a latent confounder, when the causal effect between the observed variables dominates that of the latent confounder, \texttt{SkewScore} can still correctly identify the causal direction. Additionally, our method enables the quantification of the latent confounder’s impact within this triangular model. %

\section{Related Work}

\textbf{Heteroscedastic Noise Models (HNMs).} Causal discovery with HNMs has gained significant interest in recent years.
{HNMs relax the assumption of homoscedastic noise, which was generally assumed in previous studies.} 
\citet{xu2022causal} adopted a modeling choice where the variance of the scaled noise variable is a piecewise constant function of its parents, which may restrict the possible mechanisms of the variance.
\citet{tagasovska2020distinguishing, strobl2023identifying, immer2023identifiability} considered a more general class of HNMs, where the conditional variance is a deterministic function of its parents. The works by \citet{tagasovska2020distinguishing} and \citet{immer2023identifiability} focus primarily on identifying pairwise causal relationships. Both \citet{strobl2023identifying} and \citet{duong2023heteroscedastic} include a noise-extraction step to estimate the sink node, with \citet{strobl2023identifying} using mutual information and \citet{duong2023heteroscedastic} using normality as criteria.
\citet{yin2024effective} propose a two-phase continuous optimization framework to learn the causal graph.

\textbf{Latent Confounder.} Causal discovery aims to identify causal relationships from observational data. However, traditional methods typically assume the absence of latent confounders in the causal graph, an assumption that often does not hold in real-world scenarios. To address this challenge, extensive research has focused on learning causal structures that allow latent variables. These approaches include methods based on conditional independence tests \citep{spirtes2001causation, colombo2012learning, akbari2021recursive}, over-complete ICA-based techniques \citep{hoyer2008estimation, salehkaleybar2020learning}, Tetrad condition \citep{silva2006learning, kummerfeld2016causal}, high-order moments \citep{shimizu2009estimation, cai2019triad, xie2020generalized, adams2021identification, chen2022identification}, matrix decomposition techniques \citep{anandkumar2013learning}, mixture oracles \citep{kivva2021learning} and rank constraints \citep{huang2022latent, dong2023versatile}. Nevertheless, these methods either recover the causal graph only up to a broad equivalence class or assume that each latent confounder has at least two pure children that are not adjacent, which forbids the existence of triangle structure involving latent confounders, that is  $X\leftarrow Z\rightarrow Y$  and $X \rightarrow Y$, where $Z$ is the latent variable.

\textbf{Causal Discovery with Score Matching.} A recent class of causal discovery algorithms employs constraints on the score, i.e., the gradient of the log density function, to uniquely identify the causal graph in additive noise models (ANMs). This line of thought determines the topological order by iteratively identifying sink nodes (i.e., leaf nodes) using the score \citep{rolland2022score, montagna2023scalable, sanchez2023diffusion, montagna2023causal}. Interestingly, \citet{montagna2023assumption} empirically indicates that the presence of latent confounders may not significantly disrupt the inference of the topological ordering; score-matching-based approaches still provide reliable ordering. Effective gradient estimation techniques are crucial in connecting the gradient of the log-likelihood to successful causal discovery. Various methods have been proposed to estimate the score from samples generated by an unknown distribution, including score matching \citep{hyvarinen2005estimation, vincent2011connection, song2019generative, song2020sliced}, and kernel score estimators based on Stein's methods \citep{li2017gradient, shi2018spectral, zhou2020nonparametric}. These methods have been successfully applied to generative modeling, representation learning, and addressing intractability in approximate inference algorithms. %

\section{Problem Setup and Preliminaries}
\textbf{Symbols and notations.} Let $\R_+ = (0,\infty)$. We denote by \(C^1(\mathbb{R}^k)\) the set of all continuously differentiable functions on \(\mathbb{R}^d\), and by \(C^1(\mathbb{R}^k, \mathbb{R}_+)\) the subset of \(C^1(\mathbb{R}^k)\) consisting of functions that take only positive values. We denote by \(L^\infty(\mathbb{R}^k)\) the set of all essentially bounded functions on \(\mathbb{R}^k\). 
Furthermore, let $L^{\infty} := \cup_{k=1}^{\infty}L^\infty(\mathbb{R}^k)$, and $C^1 = \cup_{k=1}^{\infty}C^1(\mathbb{R}^k)$. For brevity, we write \(\frac{\partial}{\partial x_i}\) as \(\partial_{x_i}\). For a random variable \(X = (X_1, X_2, \ldots, X_d)\) in \(\mathbb{R}^d\), we represent \((X_1, \ldots, X_{i-1}, X_{i+1}, \ldots, X_d)\) as \({X_{-i}}\), and similarly, \((x_1, \ldots, x_{i-1}, x_{i+1}, \ldots, x_d)\) as \(x_{-i}\). We use \(p_X(x)\), or simply \(p(x)\), to denote the probability density of \(X\), and \(p_{X_i|X_{-i}}(x_i|x_{-i})\) as \(p(x_i|x_{-i})\) for conditional densities. We denote by \(\pa(X_i)\) the set of parents (direct causes) of \(X_i\) in a directed acyclic graph (DAG), and by \(\mathrm{child}(X_i)\) the set of children (direct effects) of \(X_i\). A node with no children, i.e., no outgoing edges, is called a \textit{sink} or \textit{leaf} node.

\subsection{Heteroscedastic Symmetric Noise Models (HSNM)}
\label{sec:pre_HSNM}
We consider a structural causal model over $d$ random variables $\{X_i\}_{i=1}^d$, where each variable $X_i$ corresponds to a node in the directed acyclic graph (DAG) $G$. The Heteroscedastic Symmetric Noise Model (HSNM) is defined as
\begin{equation}
    X_i = f_i(\pa(X_i)) + \sigma_i(\pa(X_i))N_i,\quad i=1,\dots,d,
    \label{model:HSNM_d_nodes}
\end{equation}
where $N_i$ is an exogenous noise variable that follows a symmetric distribution with mean zero, i.e., $p_{N_i}(-n_i)=p_{N_i}(n_i)$. %
The noise terms $\{N_i\}_{i=1}^{d}$ are jointly independent, and each $N_i$ is independent of $\pa(X_i)$ when $\pa(X_i) \neq \emptyset$. 
We assume that \(p_X \in C^1(\mathbb{R}^d, \mathbb{R}_+)\), and for each \(i \in \{1, 2, \dots, d\}\), \(p_{N_i} \in C^1(\mathbb{R}, \mathbb{R}_+)\), with \(f_i, \sigma_i \in C^1\) and \(\sigma_i \in L^\infty\). Additionally, all partial derivatives \(\partial_{x_j} \sigma_i \in L^\infty\). We also assume there exists a constant \(r > 0\) such that \(\inf_{i \in \{1, \dots, d\}} \inf_x \sigma_i(x) \geq r\). Furthermore, we assume that \(p_X\), \(p_N\), \(f_i\), and \(\sigma_i\) satisfy weak regularity conditions to ensure that the derivatives and expectations involved in our analysis are well-defined. %

It is worth noting that symmetric noise encompasses a wide range of widely used assumptions in noise distributions, such as Gaussian, centered uniform, Laplace, and Student's t distributions, with applications in, e.g., finance \citep{anderson1993linnik}, environmental science \citep{damsleth2018arma}, and psychology \citep{mcgill1962random}.

\subsection{Score Function, Score Matching, and Skewness}
In this paper, the \emph{score} function refers to the gradient of the log density, $\nabla p(x)$, which can be estimated using various score matching methods \cite{hyvarinen2005estimation, vincent2011connection, song2019generative, song2020sliced, li2017gradient, shi2018spectral, zhou2020nonparametric}. We introduce two important properties of the score function and define the skewness of the score.

\paragraph{Facts}
\textit{Let \( p(x_1,x_2,\cdots, x_d) \in C^1(\R^d, \R_+)\) be a density function that is strictly postive on $\R^d$. The following two facts hold for every $i=1,2\cdots, d$:}
\begin{fact} $
    \partial_{x_i} \log p(x_i|x_1,\cdots,x_{i-1},x_{i+1},\cdots,x_d) = \partial_{x_i} \log p(x_1,x_2,\cdots, x_d).
$
\label{fact:score_conditional}
\end{fact}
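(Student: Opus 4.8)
The plan is to reduce the identity to the elementary observation that a marginal density does not depend on the variable that has been integrated out. First I would write the conditional density via its definition,
\[
p(x_i \mid x_{-i}) = \frac{p(x_1,\ldots,x_d)}{p(x_{-i})},
\]
where $p(x_{-i}) = \int_{\R} p(x_i,x_{-i})\,dx_i$ is the marginal density of $X_{-i}$. Under the standing assumption $p_X \in C^1(\R^d,\R_+)$ the joint density is strictly positive, so the marginal is strictly positive and the quotient, hence its logarithm, is well defined. Taking logarithms converts the quotient into a difference, $\log p(x_i \mid x_{-i}) = \log p(x_1,\ldots,x_d) - \log p(x_{-i})$.

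Next I would apply $\partial_{x_i}$ to both sides. The single point to note is that $p(x_{-i})$ is a function of the coordinates $x_{-i}$ alone and is therefore constant in $x_i$; consequently $\partial_{x_i}\log p(x_{-i}) = 0$, regardless of any smoothness of the marginal in the remaining variables. Cancelling this vanishing term leaves $\partial_{x_i}\log p(x_i\mid x_{-i}) = \partial_{x_i}\log p(x_1,\ldots,x_d)$, which is exactly the claim. I expect no genuine obstacle here: the argument is entirely pointwise, the sole role of the regularity assumptions is to guarantee that $\log p$ is differentiable in $x_i$ and that the marginal integral is finite and positive so that the logarithm makes sense, and nothing about the HSNM structure beyond strict positivity of the joint density is used.
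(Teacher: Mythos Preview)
Your proposal is correct and matches the paper's own proof essentially line for line: the paper also writes $p(x_i\mid x_{-i}) = p(x_1,\ldots,x_d)/p(x_{-i})$, takes the logarithm, differentiates in $x_i$, and drops the $\log p(x_{-i})$ term because it does not depend on $x_i$.
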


\begin{fact}
 \label{fact:score_mean_0}  $
    \E_{X\sim p} \left[ \partial_{x_i} \log p(X) \right]  = 0.
    $
\end{fact}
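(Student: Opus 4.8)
The plan is to prove Fact~\ref{fact:score_mean_0} by collapsing the $d$-dimensional expectation to a one-dimensional integral along the $x_i$-axis, where integrating a derivative reduces to boundary values that vanish at infinity. Writing the expectation as an integral against the density and using the elementary identity $p(x)\,\partial_{x_i}\log p(x) = \partial_{x_i} p(x)$ (valid since $p>0$), I get $\E_{X\sim p}\left[\partial_{x_i}\log p(X)\right] = \int_{\R^d} \partial_{x_i} p(x)\,dx$. The goal is then to show this integral is zero. By Fubini's theorem I would integrate out the $x_i$ coordinate first, factoring the integral as $\int_{\R^{d-1}}\!\left(\int_{\R} \partial_{x_i} p(x)\,dx_i\right)dx_{-i}$, so that the whole claim rests on the inner single-variable integral.

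A cleaner route that leverages the preceding fact is to combine Fact~\ref{fact:score_conditional} with the tower property of conditional expectation. Since Fact~\ref{fact:score_conditional} gives $\partial_{x_i}\log p(X) = \partial_{x_i}\log p(X_i\mid X_{-i})$, I would write $\E_{X\sim p}\left[\partial_{x_i}\log p(X)\right] = \mathbb{E}_{X_{-i}}\!\left[\,\mathbb{E}_{X_i\mid X_{-i}}\!\left[\partial_{x_i}\log p(X_i\mid X_{-i})\right]\right]$. For each fixed $x_{-i}$ the inner conditional expectation is $\int_{\R} p(x_i\mid x_{-i})\,\partial_{x_i}\log p(x_i\mid x_{-i})\,dx_i = \int_{\R}\partial_{x_i} p(x_i\mid x_{-i})\,dx_i$, which by the fundamental theorem of calculus equals $\lim_{b\to\infty} p(b\mid x_{-i}) - \lim_{a\to-\infty} p(a\mid x_{-i})$. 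Because $p(\cdot\mid x_{-i})$ is a genuine one-dimensional probability density (it integrates to one in $x_i$), this difference vanishes, so the inner expectation is zero for every $x_{-i}$ and the outer expectation is zero as well. This formulation makes the one-dimensional structure transparent and reuses Fact~\ref{fact:score_conditional} directly.

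The main obstacle, in either route, is justifying that the boundary term vanishes, i.e., that the (conditional) density tends to $0$ as $x_i\to\pm\infty$. This is not automatic: a $C^1$ integrable function on $\R$ need not vanish at infinity (one can build thin spikes whose heights do not decay), so the conclusion genuinely requires the ``weak regularity conditions'' assumed in Section~\ref{sec:pre_HSNM}. I would invoke exactly those conditions to ensure both that $\partial_{x_i} p$ is integrable (so that Fubini applies and the interchange of integration order is legitimate) and that $p(x)\to 0$ along each coordinate axis, which is the standard decay hypothesis guaranteeing the vanishing of the boundary terms. With these regularity assumptions in hand the remaining computation is routine, so the proof is essentially a matter of stating the decay condition carefully and applying the fundamental theorem of calculus.
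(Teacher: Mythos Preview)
Your proposal is correct and your first route is exactly the paper's argument: write $\E[\partial_{x_i}\log p(X)]=\int_{\R^{d-1}}\int_\R \partial_{x_i}p(x)\,dx_i\,dx_{-i}$, apply the fundamental theorem of calculus in $x_i$, and conclude from the vanishing of $p$ at $\pm\infty$. Your caution about the boundary term is well-placed---the paper simply asserts the limit is zero ``because $p$ is a probability density,'' whereas you correctly note this requires the stated weak regularity conditions; your second route via Fact~\ref{fact:score_conditional} is a cosmetic repackaging of the same computation at the level of the conditional density.
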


The proofs of Fact \ref{fact:score_conditional} and Fact \ref{fact:score_mean_0} are provided in Appendix~\ref{appen:fact}. These two facts are crucial for deriving Theorem~\ref{thm:identifiability}, a skewness-based criterion for HSNMs. Fact \ref{fact:score_conditional} allows us to work with derivatives of the log joint density rather than directly estimating conditional densities, while Fact \ref{fact:score_mean_0} shows that the expectation of the score is a zero vector, when $p\in C^1(\mathbb{R}^d,\mathbb{R}_+)$.

Skewness measures the asymmetry of a probability distribution. In particular, Pearson's moment coefficient of skewness is defined as the normalized third central moment. For a real-valued random variable \(W\), the unnormalized skewness is given by $\textit{Skew}(W) = \mathbb{E}\left[\left( W - \mathbb{E}[W] \right)^3\right]$. 
Inspired by this, we introduce the notion of skewness for the score function. %

\begin{definition}  \label{def:skew}
Given a probability density $p$ for a random variable $X = (X_1,X_2\cdots, X_d)\in \R^d$ with  $p(x)\in C^1(\mathbb{R}^d,\mathbb{R}_+)$, we define its skewness of the score in the $x_i$-coordinate as:
\begin{equation}
\begin{aligned}
        \sk{x_i}{p}:&=\abs{\E _{X\sim p}\left[\left(\partial_{x_i}\log p(X)-\E_{X\sim p} \left[ \partial_{x_i} \log p(X) \right]\right)^3\right]} \\
        &=\abs{\E _{X\sim p}\left[\left(\partial_{x_i}\log p(X)\right)^3\right]},
\end{aligned}
\end{equation}
where the second equality follows from Fact~\ref{fact:score_mean_0}.
\end{definition}

This measure captures the asymmetry within each coordinate of the score. If the conditional density $p(x_i\mid x_{-i})$ is a symmetric function, then $\sk{x_i}{p}=0$, as discussed in Appendix~\ref{sec:score_visual}. %

\section{Causal Discovery with the Skewness-Based Criterion}
\label{sec:identifiability}
In this section, we focus on identifying causal relationships within heteroscedastic symmetric noise models (HSNM), as formalized in Eq. \eqref{model:HSNM_d_nodes}. We introduce a novel identification criterion, the \textit{skewness-of-score criterion}, which captures the asymmetry between the causal and anti-causal directions to determine the causal order in the infinite sample limit. We begin by providing intuitive explanations of this criterion, present theoretical guarantees for the bivariate causal model, and extend the results to multivariate cases. %
We then present an algorithm based on this criterion in Section \ref{sec:alg}.

\subsection{Identification with the Skewness-Based Criterion}

We begin with the bivariate case to build intuition for the skewness-based criterion. The extension to multivariate models follows naturally, as shown in Corollary~\ref{coro:multi_d}.

Consider the following bivariate heteroscedastic symmetric noise model (HSNM):
\begin{equation}
    Y = f(X) + \sigma(X) N,
\label{eq:HSNM}
\end{equation}
where \(N\) is symmetric exogenous noise independent of \(X\) with mean zero, i.e., $p_n(-n)=p_n(n)$, and $X\indp N$.
We assume the same regularity conditions stated in Section~\ref{sec:pre_HSNM}, specified as Assumption \ref{assump:id_regu_p} and \ref{assump:id_regu_f} in Appendix \ref{appen:proo_identifiability}.

The key insight for the identifiability of the heteroscedastic symmetric noise model (HSNM), as given in Eq. (\ref{eq:HSNM}), is that the conditional distribution \(P(Y\mid X)\) is symmetric, while \(P(X\mid Y)\) is generally asymmetric under mild assumptions on \(f\) and \(\sigma\). This contrast is illustrated in Figure~\ref{fig:symmetry}. To quantify the asymmetry of these conditional distributions, a typical measure is the skewness, defined as the standardized central third moment. However, directly estimating the skewness of the \textit{conditional distributions} from data is challenging due to the complexity of the conditional expectation.

To circumvent this challenge, we instead turn to an auxiliary random variable --- the \emph{score function} $\nabla \log p(X,Y)$, where $(X,Y)$ follows the data distribution $p$. Specifically, Fact~\ref{fact:score_conditional} allows us to convert the partial derivatives of the log conditional density into the gradient of the log joint density, which is the score function. This transformation reframes the problem of analyzing the conditional distribution as one involving the score function, thereby simplifying the estimation process. Consequently, this leads to the identifiability criterion for heteroscedastic symmetric noise models, formally stated in Theorem \ref{thm:identifiability}, with the model restriction given in Assumption \ref{assmp:E_partial_x_neq_0}.

\begin{figure}[h!]
    \centering
    \begin{minipage}{0.45\textwidth}
        \centering
        \includegraphics[width=\textwidth]{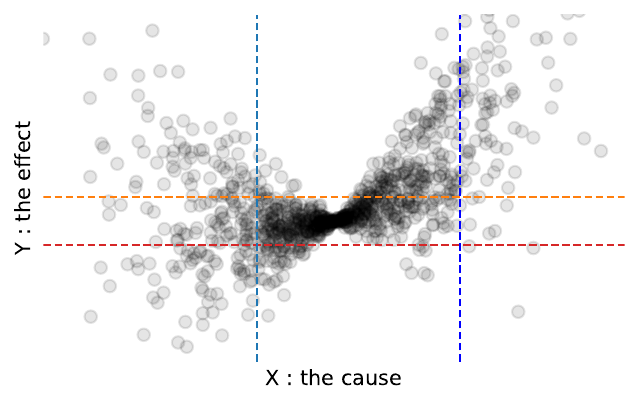}
        \label{fig:big}
    \end{minipage}
    \hspace{0.05\textwidth}
    \begin{minipage}{0.30\textwidth}
        \centering
        \begin{subfigure}{\textwidth}
            \centering
            \includegraphics[width=\textwidth]{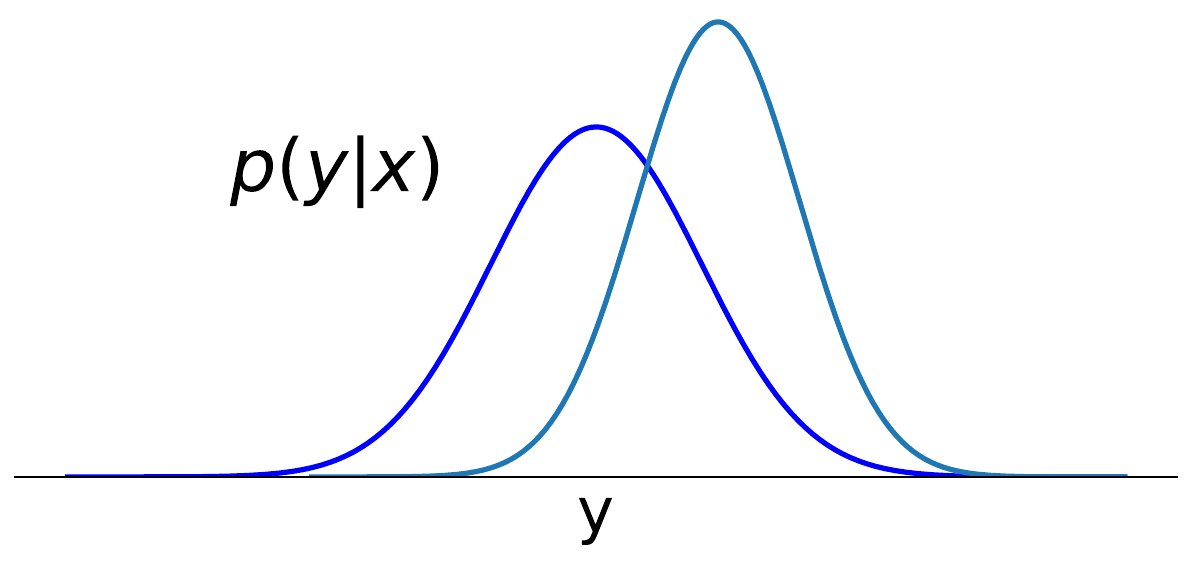}
            
            \label{fig:small1}
        \end{subfigure}
        \vfill
        \begin{subfigure}{\textwidth}
            \centering
            \includegraphics[width=\textwidth]{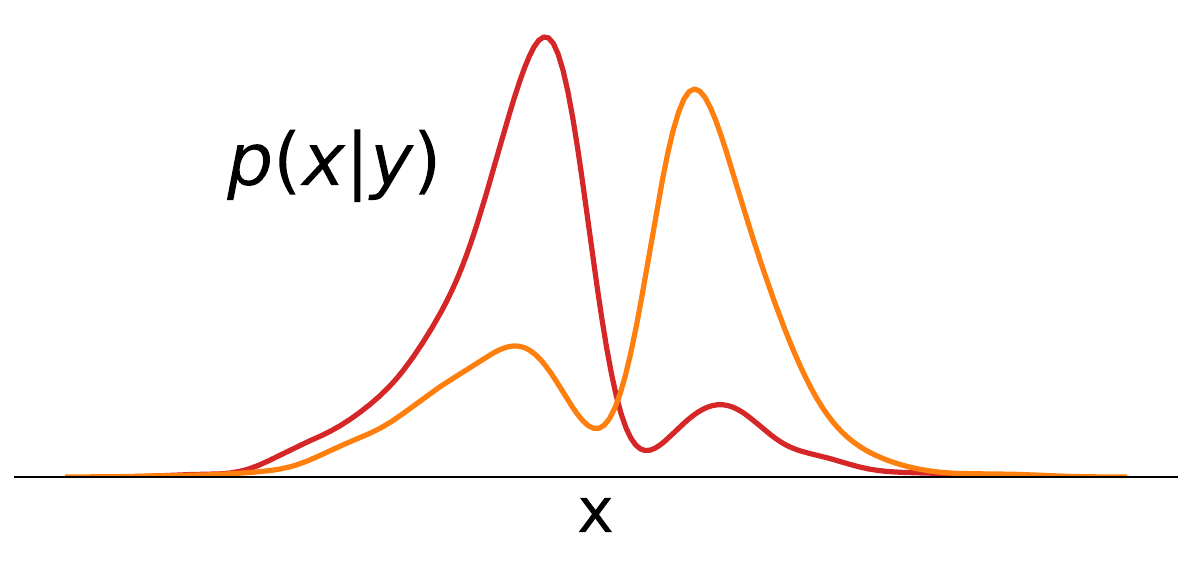}
            \label{fig:small2}
        \end{subfigure}
    \end{minipage}
    \caption{Identifiability of the heteroscedastic symmetric noise model (HSNM). For the causal direction \(X \rightarrow Y\) (left), the conditional distribution \(p(y|x)\) is symmetric for any \(x\) (top right), and the variance of $p(y|x)$ varies with different values of $x$ due to heteroscedasticity. In contrast, \(p(x|y)\) is asymmetric for some \(y\) (bottom right). }%
    \label{fig:symmetry}
\end{figure}

\begin{assumption}[Model Restriction]
\label{assmp:E_partial_x_neq_0}
    Denote  $A(x) = \frac{p'_x(x)}{p_x(x)}-\frac{\sigma'(x)}{\sigma(x)}$, $B(x) = -\frac{f'(x)}{\sigma(x)} $, and $C(x) =\frac{\sigma'(x)}{\sigma(x)^2} $, where \( p'_x \), \( \sigma' \), and \( f' \) represent the derivatives of \( p_x \), \( \sigma \), and \( f \) respectively. Assume the following inequality holds:
    \begin{align*}
         \int \lrb{\left(A(x)+C(x)\tfrac{p_n'(u)u}{p_n(u)}\right)^3+3\left(B(x)\tfrac{p_n'(u)}{p_n(u)}\right)^2\left(A(x)+C(x)\tfrac{p_n'(u)u}{p_n(u)}\right)}p_n(u)p_x(x)\rd x \rd u\neq 0.
    \end{align*}
\end{assumption}

Assumption~\ref{assmp:E_partial_x_neq_0} excludes the scenario where the skewnesses of the score's projections are both zero in the causal and anti-causal directions. In particular, it rules out linear Gaussian additive models. In Proposition~\ref{prop:zero_measure} in Appendix~\ref{sec:assumption1}, we demonstrate that Assumption~\ref{assmp:E_partial_x_neq_0} holds generically by showing that the set of models violating this assumption has a property analogous to having a zero Lebesgue measure. More discussion is provided in Appendix~\ref{sec:assumption1}.

We now present a key result that shows how skewness of the score of the data distribution provides useful information that helps uncover the causal direction.

\begin{theorem}
\label{thm:identifiability}
   Let $(X,Y)$ follows the heteroscedastic symmetric noise model given by Eq.~\eqref{eq:HSNM}.
   Let $p(x,y)$ denote the joint distribution of $(X,Y)$. %
    The following asymmetry holds.
    \begin{enumerate}%
        \item The score function's component in the $Y$-coordinate (the effect direction) follows an unskewed distribution:%
         \begin{equation}
    \sk{y}{p} :=\abs{\E _{X,Y}\left[\left(\partial_{y}\log p(X,Y)\right)^3\right]}= 0. \label{eq:identifiable_1}
        \end{equation}  
    
        \item Moreover, under Assumption \ref{assmp:E_partial_x_neq_0}, the score function's component in the \(X\)-coordinate (the cause direction) follows a skewed distribution: %
    \begin{equation}
        \sk{x}{p}:=\abs{\E _{X,Y}\left[\left(\partial_{x}\log p(X,Y)\right)^3\right]}  \neq 0. \label{eq:identifiable_2}
        \end{equation}  
    \end{enumerate}
\end{theorem}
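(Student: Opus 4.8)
The plan is to compute both score components explicitly from the joint density and then exploit the symmetry of the noise $N$ to control the parity (odd versus even) of the resulting integrands. Writing the joint density as $p(x,y)=p_X(x)\,p(y\mid x)$ with the location--scale form $p(y\mid x)=\sigma(x)^{-1}p_N\!\lrp{(y-f(x))/\sigma(x)}$, I would change variables from $(x,y)$ to $(x,n)$ with $n=(y-f(x))/\sigma(x)$; under the model this standardized residual equals the exogenous noise and is independent of $X$, so the joint law of $(X,n)$ factorizes as $p_X(x)\,p_N(n)$. The single fact that drives everything is that, since $p_N$ is even, its logarithmic derivative $g:=p_N'/p_N$ is an odd function.

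For the effect direction (Eq.~\eqref{eq:identifiable_1}), I would use Fact~\ref{fact:score_conditional} to reduce $\partial_y\log p(x,y)$ to $\partial_y\log p(y\mid x)$ and differentiate, obtaining $\partial_y\log p(X,Y)=\sigma(X)^{-1}g(N)$. Cubing and using $X\indp N$ gives $\mathbb{E}[(\partial_y\log p)^3]=\mathbb{E}[\sigma(X)^{-3}]\int g(n)^3 p_N(n)\,\rd n$. Since $g$ is odd, $g^3 p_N$ is an odd integrable function, so the integral vanishes and $\sk{y}{p}=0$.

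For the cause direction (Eq.~\eqref{eq:identifiable_2}), a direct chain-rule computation of $\partial_x\log p$ (differentiating $\log p_X(x)-\log\sigma(x)+\log p_N(n)$ and using $\partial_x n = -f'(x)/\sigma(x)-n\,\sigma'(x)/\sigma(x)$) yields an expression of the form $P(x,n)+Q(x,n)$, where $P=A(x)+C(x)\,n\,g(n)$ collects the terms even in $n$ and $Q=B(x)\,g(n)$ is the term odd in $n$, with $A,B,C$ as in Assumption~\ref{assmp:E_partial_x_neq_0}. Expanding $(P+Q)^3=P^3+3P^2Q+3PQ^2+Q^3$ and recalling that $n\,g(n)$ is even while $g(n)$ is odd, the terms $3P^2Q$ and $Q^3$ are odd in $n$ and hence integrate to zero against the even weight $p_N(n)$. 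What survives is exactly $\int\!\lrb{P^3+3PQ^2}p_N(n)p_X(x)\,\rd n\,\rd x$, i.e.\ the left-hand side of the inequality in Assumption~\ref{assmp:E_partial_x_neq_0}, which is nonzero by hypothesis; therefore $\sk{x}{p}\neq 0$.

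The conceptual core --- the odd/even parity bookkeeping --- is short, so the real work is technical: I must verify that all third-moment integrals are finite and that differentiation under the integral sign and the splitting of integrals by parity are justified. This is exactly where the regularity conditions of Section~\ref{sec:pre_HSNM} enter, namely $p_X,p_N\in C^1(\cdot,\R_+)$, $\sigma\in L^\infty$ with $\sigma\geq r>0$, and boundedness of the relevant derivatives, which keep $A,B,C$ and $g$ controlled enough for Fubini and dominated convergence to apply. The one place demanding care is matching the chain-rule output to the precise coefficients $A(x),B(x),C(x)$ of Assumption~\ref{assmp:E_partial_x_neq_0}; once that algebra is pinned down, the symmetry cancellation makes the identity with the assumption's integral immediate.
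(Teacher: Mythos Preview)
Your proposal is correct and follows essentially the same approach as the paper's own proof: write the joint density in location--scale form, change variables from $(x,y)$ to $(x,u)$ with $u=(y-f(x))/\sigma(x)$, use the oddness of $p_N'/p_N$ to make the $y$-score third moment vanish, and for the $x$-score expand into the $A,B,C$ terms, drop the odd-in-$u$ cross terms by symmetry, and identify the survivor with the left-hand side of Assumption~\ref{assmp:E_partial_x_neq_0}. Your explicit even/odd decomposition $P+Q$ is a clean way to phrase what the paper does by directly noting $\int p_n'(u)^3/p_n(u)^2\,\rd u=\int p_n'(u)\,\rd u=0$ after expanding the cube.
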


The proof is provided in Appendix~\ref{appen:proo_identifiability}. The asymmetry in the above theorem can be used to determine the causal direction between $X$ and $Y$. Specifically, one could estimate the score function $\nabla \log p(x,y)$ and determine the causal direction based on the magnitude of the skewness of its $x$- and $y$-coordinate. This result naturally generalizes to the multivariate case for $d \geq 2$, as stated in the following corollary: the score's component only along the sink-node coordinate is unskewed, under weak assumptions.

We first clarify the notations: Let $X = (X_ 1, X_ 2,\cdots X_ d) \in \R^d$ follow the multivariate heteroscedastic symmetric noise model given by Eq.~\eqref{model:HSNM_d_nodes}, %
and $p(x) = p(x_ 1,x_ 2,\cdots,x_ d)$ denote the joint distribution of $X$. For simplicity, denote $\rd x = \rd x_ 1 \rd x_ 2 \cdots \rd x_ d $, $f_ i = f_ i(\mathrm{pa}_ i(x))$, $\sigma_ i = \sigma_ i(\mathrm{pa}_ i(x))$, $p_ i = p_ {N_ i}\left( \frac{x_ i-f_ i}{\sigma_ i} \right)$ and $p'_ i = p'_ {N_ i}\left( \frac{x_ i-f_ i}{\sigma_ i} \right)$.

\begin{corollary}\label{coro:multi_d}
    Assume that the following condition holds for every node $k$ with direct causes (i.e., $\mathrm{child}(k)\neq \emptyset$): \begin{equation}
    \int_ {\mathbb{R}^d}\bigg[\frac{p'_ k}{\sigma_ kp_ k}-\sum_ {i\in \mathrm{child}(k)}\left( \frac{p'_ i}{p_ i}\cdot\frac{\sigma_ i\partial_ {x_ k}f_ i+(x_ i-f_ i)\partial_ {x_ k}\sigma_ i}{\sigma_ i^2}+\frac{\partial_ {x_ k}\sigma_ i}{\sigma_ i}\right)\bigg]^3 \prod_ {j=1}^d \frac{p_ j}{\sigma_ j}\rd x \neq 0.\label{eq:assumption_of_multi_d}\end{equation} 
    
    Then, for every node $j$, it is a sink if and only if the score's component in the $x_j$-coordinate follows an unskewed distribution, i.e.,   $$   \sk{x_
j}{p} = 0,
$$
where \(\sk{x_j}{p}\) is defined in Definition~\ref{def:skew}.

\end{corollary}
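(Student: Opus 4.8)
The plan is to reduce the corollary to a single explicit computation of the $x_j$-component of the score, $\partial_{x_j}\log p(x)$, and then to handle the sink and non-sink cases separately. The starting point is the Markov factorization of the joint density induced by the DAG, namely $p(x) = \prod_{j=1}^d p(x_j \mid \mathrm{pa}(x_j)) = \prod_{j=1}^d \frac{1}{\sigma_j} p_{N_j}\big(\tfrac{x_j - f_j}{\sigma_j}\big) = \prod_{j=1}^d \frac{p_j}{\sigma_j}$, where the second equality is the change of variables from $N_j$ to $X_j$ in model~\eqref{model:HSNM_d_nodes}. Taking logarithms and differentiating in $x_k$, I would track the two ways in which $x_k$ enters the sum $\sum_j[\log p_j - \log\sigma_j]$: through its own factor $j=k$ (where $f_k,\sigma_k$ do not depend on $x_k$), contributing $\frac{p'_k}{\sigma_k p_k}$, and through each child factor $i\in\mathrm{child}(k)$ (where $x_k$ enters $f_i$ and $\sigma_i$ as a parent), contributing the summand in~\eqref{eq:assumption_of_multi_d} via the chain rule applied to $\frac{x_i-f_i}{\sigma_i}$. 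This shows the bracketed expression in~\eqref{eq:assumption_of_multi_d} is exactly $\partial_{x_k}\log p(x)$, so that the integral there equals $\E_{X\sim p}[(\partial_{x_k}\log p(X))^3]$, i.e.\ $\pm\,\sk{x_k}{p}$ (using Fact~\ref{fact:score_mean_0} to identify the central and raw third moments).

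With this identity in hand, the non-sink direction is immediate: if $\mathrm{child}(k)\neq\emptyset$, then assumption~\eqref{eq:assumption_of_multi_d} states precisely that $\E[(\partial_{x_k}\log p(X))^3]\neq 0$, hence $\sk{x_k}{p}\neq 0$. By contraposition this yields that $\sk{x_j}{p}=0$ forces $j$ to be a sink, which is one half of the equivalence.

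For the remaining direction, suppose $j$ is a sink, so $\mathrm{child}(j)=\emptyset$ and the score collapses to $\partial_{x_j}\log p(x) = \frac{p'_j}{\sigma_j p_j} = \frac{1}{\sigma_j}\frac{p'_{N_j}(u)}{p_{N_j}(u)}$ with $u = \frac{x_j - f_j}{\sigma_j}$. Because $j$ is a sink, $x_j$ appears only in its own factor, so I would write the expectation as an iterated integral, integrating $x_j$ first while holding $x_{-j}$ (hence $f_j,\sigma_j$) fixed: $\E[(\partial_{x_j}\log p(X))^3] = \int \big[\int (\frac{p'_j}{\sigma_j p_j})^3\frac{p_j}{\sigma_j}\,\rd x_j\big]\prod_{l\neq j}\frac{p_l}{\sigma_l}\,\rd x_{-j}$. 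Substituting $u=\frac{x_j-f_j}{\sigma_j}$ reduces the inner integral to $\sigma_j^{-3}\int \frac{(p'_{N_j}(u))^3}{(p_{N_j}(u))^2}\,\rd u$. The symmetry $p_{N_j}(-u)=p_{N_j}(u)$ makes $p_{N_j}$ even and $p'_{N_j}$ odd, so the integrand is odd and the inner integral vanishes; hence $\sk{x_j}{p}=0$. This is the multivariate analogue of part~1 of Theorem~\ref{thm:identifiability}.

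The main obstacle is the bookkeeping in the first step: correctly recognizing that $x_k$ enters the factorized log-density only through its own conditional and through the conditionals of its children, and carrying out the chain rule on $\frac{x_i-f_i}{\sigma_i}$ to reproduce the exact child terms of~\eqref{eq:assumption_of_multi_d}. Some care is also needed to justify the interchange of integration order (Fubini) and the change of variables, for which the stated regularity and boundedness conditions on $f_i,\sigma_i,p_{N_i}$, together with $\sigma_i\geq r>0$, are precisely what is required.
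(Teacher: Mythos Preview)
Your proposal is correct and follows essentially the same route as the paper: derive $\partial_{x_k}\log p(x)$ from the Markov factorization $p(x)=\prod_j \frac{p_j}{\sigma_j}$ to identify the bracket in~\eqref{eq:assumption_of_multi_d} with the score, then invoke the assumption for non-sinks and use the change of variables $u=\tfrac{x_j-f_j}{\sigma_j}$ plus oddness of $(p'_{N_j})^3/p_{N_j}^2$ for sinks. The paper's argument matches yours step for step, including the Fubini/iterated-integral split when $j$ is a sink.
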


The proof is provided in Appendix~\ref{sec:proof_mutivariate}. This corollary enables us to find the sink nodes of a DAG with the knowledge of the score function. By iteratively applying this process, we can determine the topological order of the graph.  In Section \ref{sec:alg}, we provide a practical algorithm that leverages this Corollary for identifying the causal order. %

\subsection{Algorithm}
\label{sec:alg}
We apply Corollary~\ref{coro:multi_d} and propose an algorithm to determine the causal ordering, named \texttt{Skewscore} (Algorithm~\ref{alg:SkewScore}). Our approach follows the topological order search framework of \citet{rolland2022score}, which determines the topological order by iteratively identifying sink nodes. Different from their method, we identify the node with the minimum skewness of score as the sink node in each iteration.  Compared to existing methods, \texttt{Skewscore} circumvents the need to extract the exogenous noise~\citep{immer2023identifiability, duong2023heteroscedastic, montagna2023causal} or to estimate the diagonal of the Hessian of the log density~\citep{rolland2022score, sanchez2023diffusion}. Additionally, after finding the minimum $\SkewScore_{x_j}$ in Line~\ref{step:leaf} of Algorithm~\ref{alg:SkewScore}, we can check whether $\SkewScore_{x_j}$ is close to zero, where a value not close to zero indicates a violation of the symmetric noise assumption or other model assumptions. We note that the algorithm's numerical stability and sample complexity can depend on the test odd function chosen in the skewness definition, see Appendix~\ref{appen:skew}.

Once the topological order is obtained using Algorithm~\ref{alg:SkewScore}, the complete DAG can be constructed by adding edges with conditional independence tests \citep{zhang2011kernel} associated with the topological order. In this order-based search, $\mathcal{O}(d^2)$ CI tests are performed, compared to exponential complexity in classical methods \citep{spirtes1991pc} in worst-case scenarios.

\begin{algorithm}[h!]
\caption{\texttt{SkewScore}}
\label{alg:SkewScore}
\begin{algorithmic}[1]
\State \textbf{Input:} Data matrix $\mathbf{X} \in \mathbb{R}^{n \times d}$, test odd function $\psi(s)=s^3$, conditional independence test oracle CI (returns p-value), and significance level $\alpha\in (0,1)$. %
\State \textbf{Initialization:} nodes $= \{1, \ldots, d\}$, topological order $\pi = []$, adjacency matrix $A\in \R^{d \times d}$ (all zeros).
\For{$k = 1, \ldots, d$}
    \State Estimate the score function $\mathbf{s}(\mathbf{X}) = \nabla \log P_{\mathbf{X}_\text{nodes}}(\mathbf{X})$. \label{step:score_matching}
    \State Estimate $\textit{SkewScore}_{x_j} = \left|\E_{\mathbf{X}_{\text{nodes}}} \lrb{\psi(\mathbf{s}_j(\mathbf{X}))} \right|$.  \Comment{Skewness of the score's projection on $x_j$}%
    \State $\ell \gets \arg\min_{j \in \text{nodes}} \textit{SkewScore}_{x_j}$ 
    \label{step:leaf} \Comment{Find the sink node}
    \State $\pi \gets [\ell, \pi]$ \Comment{Update the topological order}
    \State nodes $\gets \text{nodes} - \{\ell\}$
    \State Remove $\ell$-th column of $\mathbf{X}$
\EndFor
\For{$i = 1, \ldots, d-1$}
    \For{$j = i+1, \ldots, d$}
        \If{CI$(X_{\pi_i}, X_{\pi_j}, X_{\pi_{\{1, \dots, j-1\} \setminus \{i\}}}) < \alpha$}
            \State $A_{\pi_i, \pi_j} = 1$ \Comment{Add edge $\pi_i \rightarrow \pi_j$}
        \EndIf
    \EndFor
\EndFor
\State \textbf{Output:} Topological order $\pi$, and adjacency matrix $A$.
\end{algorithmic}
\end{algorithm}

\section{Case Study: Robustness to the Latent Confounder in a Bivariate Setting}

\label{sec:nonlinear}
It has been shown that various types of assumption violations may affect the performance of causal discovery \citep{reisach2021beware,montagna2023assumption,ng2024structure}. 
In this section, we conduct a case study to analyze the robustness of Algorithm~\ref{alg:SkewScore} to the latent confounder in a pairwise causal model given by Eq. \eqref{eq:latent_HSNM}. We provide theoretical insights into this robustness in Proposition~\ref{prop:latent_id}, with experimental results presented in Section~\ref{sec:experiment}.

We focus on a three-variable model consisting of two observed variables, $X$ and $Y$, and one latent variable, $Z$, as depicted in Figure \ref{fig:2var_simple}. The model is formally expressed as:
\begin{equation}
\label{eq:latent_HSNM}
\begin{aligned}
    X &= \phi_0(Z)+N_0, \\
    Y &= \lambda \tilde{f}(X)+\phi_1(Z)+\sigma(X)N_1,
\end{aligned}    
\end{equation}
where $N_i\sim q_i$ with $q_i(n) = q_i(-n)$ and $q_i \in C^1(\R,\R_+)$, for $i = 0,1$, and $Z\sim p_z(z)$. In this model, the effect of the latent confounder has an additive structure, and the conditional standard deviation $\sigma(X)$ only depends on the observed variable $X$, without interaction with the latent variable. The function \(f\) is replaced with \(\lambda \tilde{f}\), where \(\lambda > 0\) scales the effect of \(X\) on \(Y\). To analyze the confounding effects of \(Z\), we vary \(\lambda\) while keeping \(\tilde{f}, \sigma, \phi_i, q_i\), and \(p_z(z)\) fixed. As in Section~\ref{sec:pre_HSNM}, we assume \(\tilde{f}, \sigma \in C^1(\mathbb{R})\), \(\sigma, \sigma' \in L^\infty(\mathbb{R})\), and that there exists a constant \(r > 0\) such that \(\sigma(x) \geq r\).
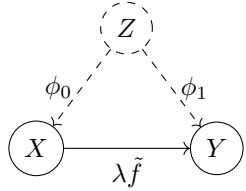
\begin{figure}[ht]
\vspace{-2mm}
\centering
\begin{tikzpicture}[scale=.8]
  \node[draw, circle, dashed] (Z) at (0,2) {$Z$};
  \node[draw, circle] (X) at (-1.5,0) {$X$};
  \node[draw, circle] (Y) at (1.5,0) {$Y$};
  
  \draw[->, dashed] (Z) -- node[left] {$\phi_0$} (X);
  \draw[->, dashed] (Z) -- node[right] {$\phi_1$} (Y);
  \draw[->] (X) -- node[below] {$\lambda \Tilde{f}$} (Y);
\end{tikzpicture}    
\caption{Three-variable HSNM model with latent confounder given by Eq. \eqref{eq:latent_HSNM}.}
\label{fig:2var_simple}
\vspace{-2mm}
\end{figure}

The criterion in Theorem~\ref{thm:identifiability} identifies the causal order without requiring exogenous-noise extraction. This simplicity allows us to quantify the impact of latent confounders. Notably, to recover the causal direction in the infinite sample limit, Line~\ref{step:leaf} in Algorithm~\ref{alg:SkewScore} is equivalent to observing that the score's component in the effect coordinate is less skewed than in the cause.  The following proposition shows that in Model~\eqref{eq:latent_HSNM} with $\lambda $ large enough, the causal direction $X \rightarrow Y$ can be identified by Algorithm~\ref{alg:SkewScore} in the infinite sample limit, with only mild regularity assumptions on the function $\Tilde{f}$, $\phi_i$, $q_i$ for $i\in \{0,1\}$, and $p_z$. These conditions are discussed in Appendix~\ref{appen:proo_latent} along with the proof.

\begin{proposition}\label{prop:latent_id}
Suppose two variables $X$ and $Y$ follow the causal models in Eq.\eqref{eq:latent_HSNM}. Given $\Tilde{f}$, $\sigma$, $\phi_i$, $q_i$ for $i \in \{0,1\}$, and $p_z$, 
let $p_\lambda(x,y)$ denote the joint distribution of $(X,Y)$. %
Under assumption \ref{assump:A3_neq_0},
    there exists $\lambda^*\in \R_+$, such that for every $\lambda\geq \lambda^*$, we have
\[
    \sk{x}{p_{\lambda}} > \sk{y}{p_{\lambda}},
\]
where \(\sk{x}{\cdot}, \sk{y}{\cdot}\) is defined in Definition~\ref{def:skew}.
This indicates the effect $Y$ can be identified in Line~\ref{step:leaf} in Algorithm~\ref{alg:SkewScore} in the infinite sample limit.
\end{proposition}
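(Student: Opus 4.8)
The plan is to reparametrize the model so that all dependence on $\lambda$ is pushed into an affine shift, after which both skewness quantities become explicit functions of $\lambda$. First I would factor the joint density as $p_\lambda(x,y) = p_X(x)\,p_\lambda(y\mid x)$ and note that, since the structural equation for $X$ in Eq.~\eqref{eq:latent_HSNM} does not involve $\lambda$, the marginal $p_X$ is $\lambda$-free. Introducing the shifted variable $W := Y - \lambda\tilde{f}(X)$, a direct computation of the conditional gives
\[
p_\lambda(y\mid x) = G\!\left(x,\, y-\lambda\tilde{f}(x)\right),\qquad G(x,w) := \int \frac{1}{\sigma(x)}\,q_1\!\left(\frac{w-\phi_1(z)}{\sigma(x)}\right) p(z\mid x)\,\rd z,
\]
where $p(z\mid x)\propto q_0(x-\phi_0(z))\,p_z(z)$, so that neither $p(z\mid x)$ nor $G$ carries any dependence on $\lambda$. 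Consequently, under $(X,Y)\sim p_\lambda$ the pair $(X,W)$ has the fixed joint density $p_X(x)\,G(x,w)$, which does not depend on $\lambda$. This structural observation drives the whole argument.

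Next I would evaluate the effect-direction skewness. By Fact~\ref{fact:score_conditional} together with the factorization, $\partial_y \log p_\lambda(x,y) = \partial_w \log G(x,w)\big|_{w=y-\lambda\tilde{f}(x)} =: S(X,W)$, an expression in $(X,W)$ alone. Since the law of $(X,W)$ is $\lambda$-free, $\sk{y}{p_\lambda} = \lvert \mathbb{E}_{(X,W)}[S(X,W)^3]\rvert =: c_y$ is a constant independent of $\lambda$.

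Then I would turn to the cause direction, where the shift produces the $\lambda$-dependence. Writing $\partial_x\log p_\lambda(x,y) = \partial_x\log p_X(x) + \partial_x \log G\!\left(x, y-\lambda\tilde{f}(x)\right)$ and applying the chain rule to the second slot yields
\[
\partial_x \log p_\lambda(X,Y) = R(X,W) - \lambda\,\tilde{f}'(X)\,S(X,W),
\]
where $S(x,w)=\partial_w \log G(x,w)$ is the same function as above and $R(x,w)=\partial_x\log p_X(x)+\partial_x\log G(x,w)$ (partial holding $w$ fixed) gathers the $\lambda$-free terms. Cubing and taking expectations over the $\lambda$-free law of $(X,W)$ expresses the cause-direction quantity as a cubic polynomial $\mathbb{E}[(\partial_x\log p_\lambda)^3] = A_0 + A_1\lambda + A_2\lambda^2 + A_3\lambda^3$, with
\[
A_0=\mathbb{E}[R^3],\quad A_1=-3\,\mathbb{E}[R^2\tilde{f}'(X)S],\quad A_2=3\,\mathbb{E}[R\,(\tilde{f}'(X))^2 S^2],\quad A_3=-\mathbb{E}[(\tilde{f}'(X))^3 S^3],
\]
each finite by the regularity conditions (arguments $(X,W)$ suppressed). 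Assumption~\ref{assump:A3_neq_0} amounts precisely to $A_3\neq 0$, so this polynomial has degree three and $\sk{x}{p_\lambda} = \lvert A_0+A_1\lambda+A_2\lambda^2+A_3\lambda^3\rvert \to \infty$ as $\lambda\to\infty$.

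Finally I would combine the two computations: since $\sk{x}{p_\lambda}\to\infty$ while $\sk{y}{p_\lambda}=c_y$ stays fixed, there is a threshold $\lambda^*\in\R_+$ beyond which $\sk{x}{p_\lambda} > c_y = \sk{y}{p_\lambda}$ for all $\lambda\ge\lambda^*$, which is the claim. I expect the main obstacle to be the analytic bookkeeping rather than the idea: one must justify differentiating $G$ under the $z$-integral, verify that the mixture over the latent $Z$ keeps the moments $A_0,\dots,A_3$ and $c_y$ finite (this is where the regularity assumptions on $\tilde{f},\sigma,\phi_i,q_i,p_z$ and the lower bound $\sigma\ge r$ enter), and confirm that Assumption~\ref{assump:A3_neq_0} indeed forces the leading coefficient $A_3$ — not merely a lower-order one — to be nonzero, so that the cubic genuinely diverges.
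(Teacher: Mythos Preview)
Your approach is essentially identical to the paper's: introduce the shifted variable $v = y - \lambda\tilde f(x)$, write the joint density through a $\lambda$-free function (your $p_X(x)\,G(x,v)$ is exactly the paper's $h(x,v)$), observe that $\sk{y}{p_\lambda}$ becomes a $\lambda$-independent constant, expand $\mathbb E[(\partial_x\log p_\lambda)^3]$ as the cubic $A_0+A_1\lambda+A_2\lambda^2+A_3\lambda^3$ in $\lambda$, and conclude by divergence. One correction: despite its label, Assumption~\ref{assump:A3_neq_0} in the paper is not ``$A_3\neq 0$'' but the weaker $A_1^2+A_2^2+A_3^2\neq 0$; this still suffices since any polynomial of degree $\geq 1$ diverges as $\lambda\to\infty$, so your closing worry about needing the \emph{leading} coefficient specifically to be nonzero is unfounded.
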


Roughly speaking, when \(\lambda\) is sufficiently large, similar to a high signal-to-noise ratio (SNR), the effect of the latent variable \(Z\) can mostly be treated as noise. In this setting, \(\lambda\) serves as an indicator of how much the observed variables dominate over the latent confounder. The threshold \(\lambda^*\) is the critical value beyond which the causal influence of \(X\) on \(Y\) becomes strong enough that the skewness-based measure of the causal direction \(X \rightarrow Y\) consistently surpasses that of the reverse direction \(Y \rightarrow X\). For \(\lambda < \lambda^*\), this skewness-based measure may not clearly distinguish the causal direction. However, once \(\lambda\) exceeds \(\lambda^*\), the log-density derivatives of \(X\) show much higher skewness, allowing for the reliable identification of \(Y\) as the effect variable, even when there is a latent confounder.

For the multivariate extension, a notable limitation is that the DAG recovery method in Lines 11-17 of Algorithm~\ref{alg:SkewScore} may lead to spurious direct links when there are latent confounders. A potential way to mitigate this issue is to leverage the idea from FCI \citep{spirtes2001causation}, which will involve more sophisticated ways of performing conditional independence tests (after obtaining the topological ordering) and output the Partial Ancestral Graph (PAG) instead of just a DAG.

Finally, we present an explicit analysis of a simplified homoscedastic case of the triangular model defined by Eq.~\eqref{eq:latent_HSNM}, where the exogenous noises and the cause follow Gaussian distributions, and the effects of the latent variable \(Z\) on both \(X\) and \(Y\) are assumed to be linear.

\begin{example} \label{eg:ANM_work}%
Consider the following model: \begin{align*}
        \begin{cases}
        X& = Z+N_0, \\
        Y& = Z+f(X)+N_1,
        \end{cases}
    \end{align*}
where $Z,N_i$ for $i\in \{0,1\}$ are independent variables,  all distributed as $\cN(0,1)$. Then, we have: \begin{align}\label{eq:concrete_eg}
    \begin{cases}
\textit{Skew}_x&=  \frac{2\sqrt{2\pi}}{\pi}\abs{\int_{\R}x f'(x)(1+f'(x))e^{-\frac{x^2}{2}}\rd x}, \\
\textit{Skew}_y&= 0.
    \end{cases}
\end{align}
The explicit calculation of Eq.~\eqref{eq:concrete_eg} can be found in Appendix \ref{appen:example_ANM}. If $f$ is a linear function, then $\textit{Skew}_x = 0$ and the model is unidentifiable by Algorithm \ref{alg:SkewScore}. However, in most cases, $\textit{Skew}_x>0$, making the model identifiable by Algorithm \ref{alg:SkewScore}. For example, for a quadartic function $f(x) = ax^2+bx+c$ with $a\neq 0$, $\textit{Skew}_x>0$ if and only if $b\neq \frac{1}{2}$. Similarly, for almost all polynomials with a degree greater than $2$, we have $\textit{Skew}_x>0$, indicating that Algorithm \ref{alg:SkewScore} can effectively determine the causal direction in the infinite sample limit.
\end{example}

\section{Experiments}

\label{sec:experiment}
\textbf{Setup.} We compare \texttt{SkewScore} to state-of-the-art causal discovery methods to demonstrate the effectiveness of our proposed methods. We conduct experiments using synthetic data generated from HSNMs in three settings: (1) bivariate HSNMs; (2) latent-confounded triangular HSNM setting (bivariate model with a latent confounder, as discussed in Section~\ref{sec:nonlinear}); and (3) multivariate HSNMs. For each scenario, noise is sampled from two types of symmetric distributions: Gaussian distribution and Student's \textit{t} distribution. The degree of freedom of the Student's \textit{t} distribution is uniformly sampled from $\{2,3,4,5\}$. 
For the multivariate setting, the causal conditional mean function $f$ is modeled by a Gaussian process with a radial basis function (RBF) kernel (unit bandwidth), while the conditional standard deviation, $\sigma$, is parameterized using a sigmoid function. In both the bivariate and triangular settings, two data generation procedures are used: one with the Gaussian process-sigmoid setup (denoted as ``GP-sig"), the same procedure as the multivariate setting, and another with $f$ modeled as an invertible sigmoid function and $\sigma$ set to the absolute function (denoted as ``Sig-abs").  %
The causal graphs are constructed via the Erd\"{o}s-R\'enyi (ER) model, where for a given number of nodes $d$, the average number of edges is also set to $d$, for $d\geq 2$. The sample size for all experiments is $5000$. The score estimator employs sliced score matching (SSM) with a $3$-layer MLP. We use a batch size of 128 and configure the hidden dimensions to 128 for $d < 10$ and 512 for $d \geq 10$. Optimization is performed using the Adam optimizer with a learning rate of $10^{-3}$, and we subsample 1000 points for conditional independence testing after obtaining the topological order.

\textbf{Comparisons.} We empirically evaluate \texttt{SkewScore} against state-of-the-art bivariate causal discovery methods designed for heteroscedastic noise models: HECI \citep{xu2022causal}; LOCI \citep{immer2023identifiability} and HOST \citep{duong2023heteroscedastic}. Given that Since HOST can also handle multivariate settings, we include it in those comparisons. We also test DiffAN \citep{sanchez2022diffusion}, which uses the second-order derivative of the log-likelihood, estimated through diffusion models. Additionally, we evaluate NoTEARS \citep{zheng2018dags}, which uses continuous optimization for linear DAG learning, and the MLP version of DAGMA \citep{bello2022dagma}, which handles nonlinear DAG learning. For methods that output a DAG, we compute the corresponding topological order. For baseline methods designed only for cause-effect pairs, we perform comparisons using data generated from pairwise causal relations with $d=2$.

\textbf{Metrics.} For the pairwise causal discovery settings, we report the accuracy of the estimated causal directions. For multivariate cases, we compute two quantities: the topological order divergence~\citep{rolland2022score}, and the structural Hamming distance (SHD) between the estimated and the ground truth graphs. The topological order divergence evaluates how well the estimated topological order aligns with the ground truth. Given an ordering $\pi$ and a binary ground-truth adjacency matrix $A$, the topological order divergence is defined as $ D_{\text{top}}(\pi, A) = \sum_{i=1}^d \sum_{j:\pi_i>\pi_j} A_{ij}$.
Lower values of $D_{\text{top}}(\pi, A)$ indicate higher accuracy in the estimated topological order, reflecting more precise causal discovery. The structure Hamming distance measures the difference between two DAGs %
by counting the number of edge additions, deletions, or reversals required to transform one into the other. 

\textbf{Results.} The results of the synthetic experiments for the bivariate, latent-confounded-triangular, and multivariate scenarios are shown in Figure~\ref{fig:bivariate}, Figure~\ref{fig:latent} and Figure~\ref{fig:multi}, respectively. For the pairwise causal discovery task, the results are averaged over 100 independent runs, while for the multivariate task, they are averaged over 10 runs. Our approach, \texttt{SkewScore}, consistently outperforms or matches other methods across the bivariate, latent-confounded triangular, and multivariate settings.
In pairwise causal discovery tasks (Figure~\ref{fig:pairwise}), methods designed for heteroscedastic noise models (HNMs) generally perform well, but tend to make more errors when confounders are present or when the noise follows a Student's \emph{t} distribution. %
In contrast, methods like NoTEARS, DAGMA, and DiffAN—which are not specifically designed for HNMs—face performance limitations in both pairwise and multivariate scenarios. 
\texttt{SkewScore} demonstrates consistently strong performance in the latent-confounded triangular setting, supported by the theoretical insights discussed in Section~\ref{sec:nonlinear}. All HNM-based methods perform well in the latent-confounded setting with the Sig-abs structural causal model formulation, where LOCI handles both Gaussian and Student's \textit{t}-distributed noise effectively, and HOST performs well for the Gaussian noise case. However, when the latent-confounded data is generated using the GP-sig formulation, these HNM baselines do not match the performance of \texttt{SkewScore}.
In more challenging multivariate scenarios, our method outperforms the others, especially as the dimensionality increases. HOST depends on noise extraction and normality tests, which can be difficult with complex data. DiffAN performs well with a small number of variables, but its performance drops as dimensionality grows. This is likely due to its non-retraining approach and the score derived from ANM, which may lead to error accumulation over iterations. 
Additionally, all experiments are run on CPUs, though our method can easily be deployed on GPUs. This suggests the potential for further scalability by leveraging modern computational advancements. More experiments and discussions are provided in Appendix \ref{appendix:experiments}.

\begin{figure}[h]
    \centering

    \begin{subfigure}{.48\textwidth}
        \centering
        \includegraphics[width=\linewidth]{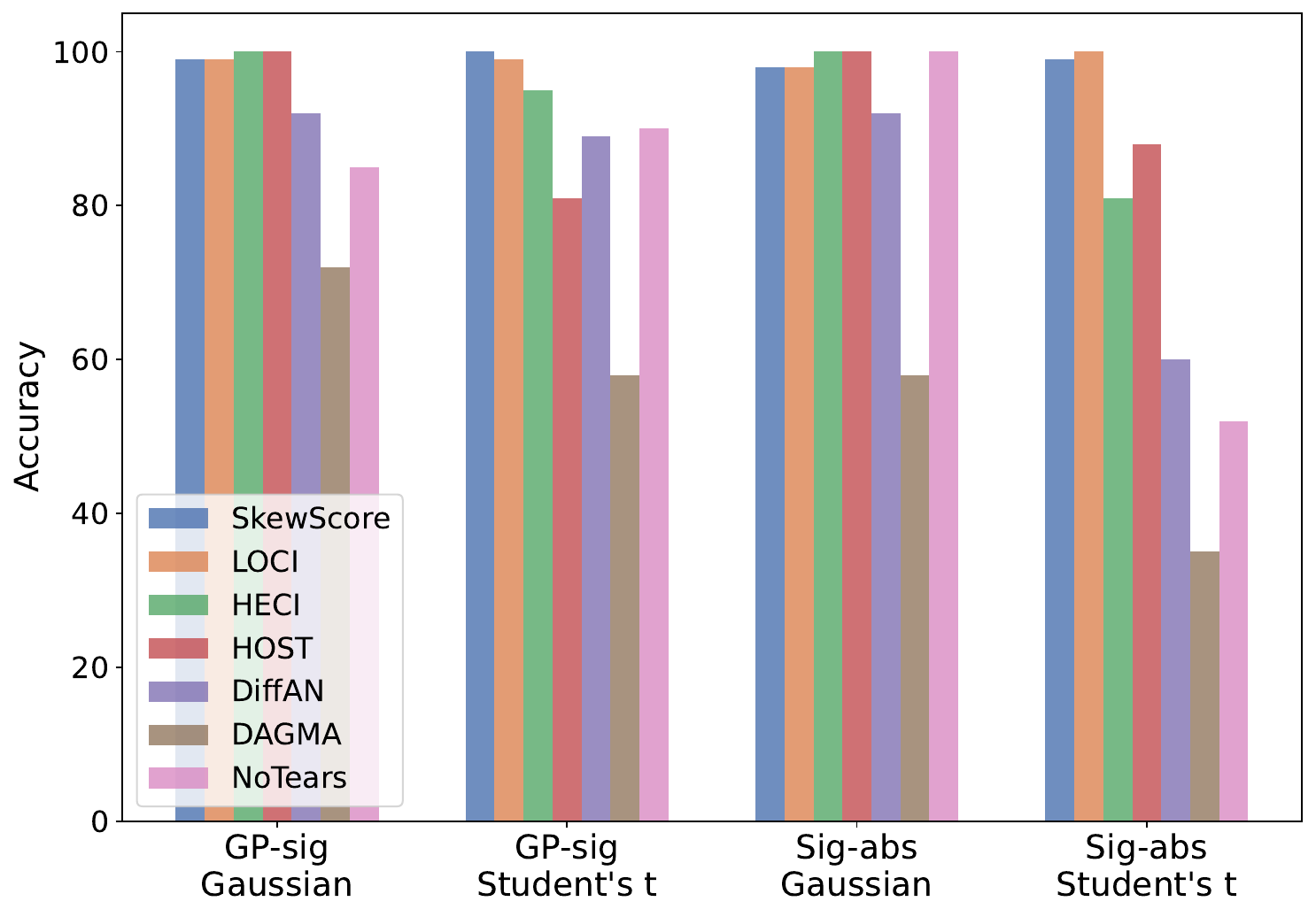}
        \caption{Bivariate heteroscedastic noise model.}
        \label{fig:bivariate}
    \end{subfigure}%
    \begin{subfigure}{.48\textwidth}
        \centering
        \includegraphics[width=\linewidth]{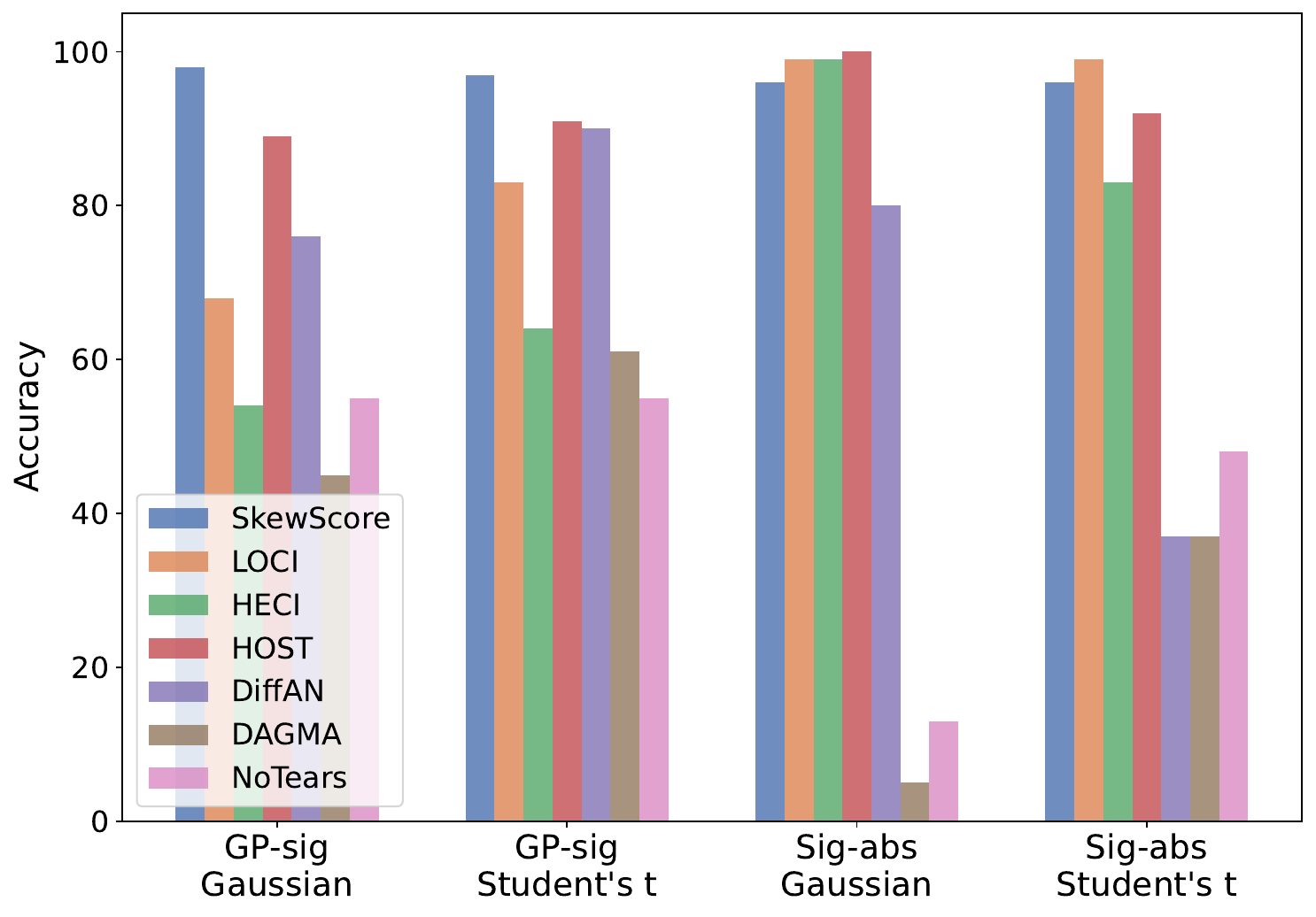}
        \caption{Latent-confounded triangular heteroscedastic noise model.}
        \label{fig:latent}
    \end{subfigure}
    
\caption{Accuracy of causal direction estimation across different data generation processes for (a) bivariate heteroscedastic noise models, and (b) latent-confounded triangular heteroscedastic noise models. Results are averaged over 100 independent runs.}
\label{fig:pairwise}
\end{figure}

\begin{figure}[h]
\vspace{-1mm}
    \centering

    \includegraphics[width=.49\textwidth]{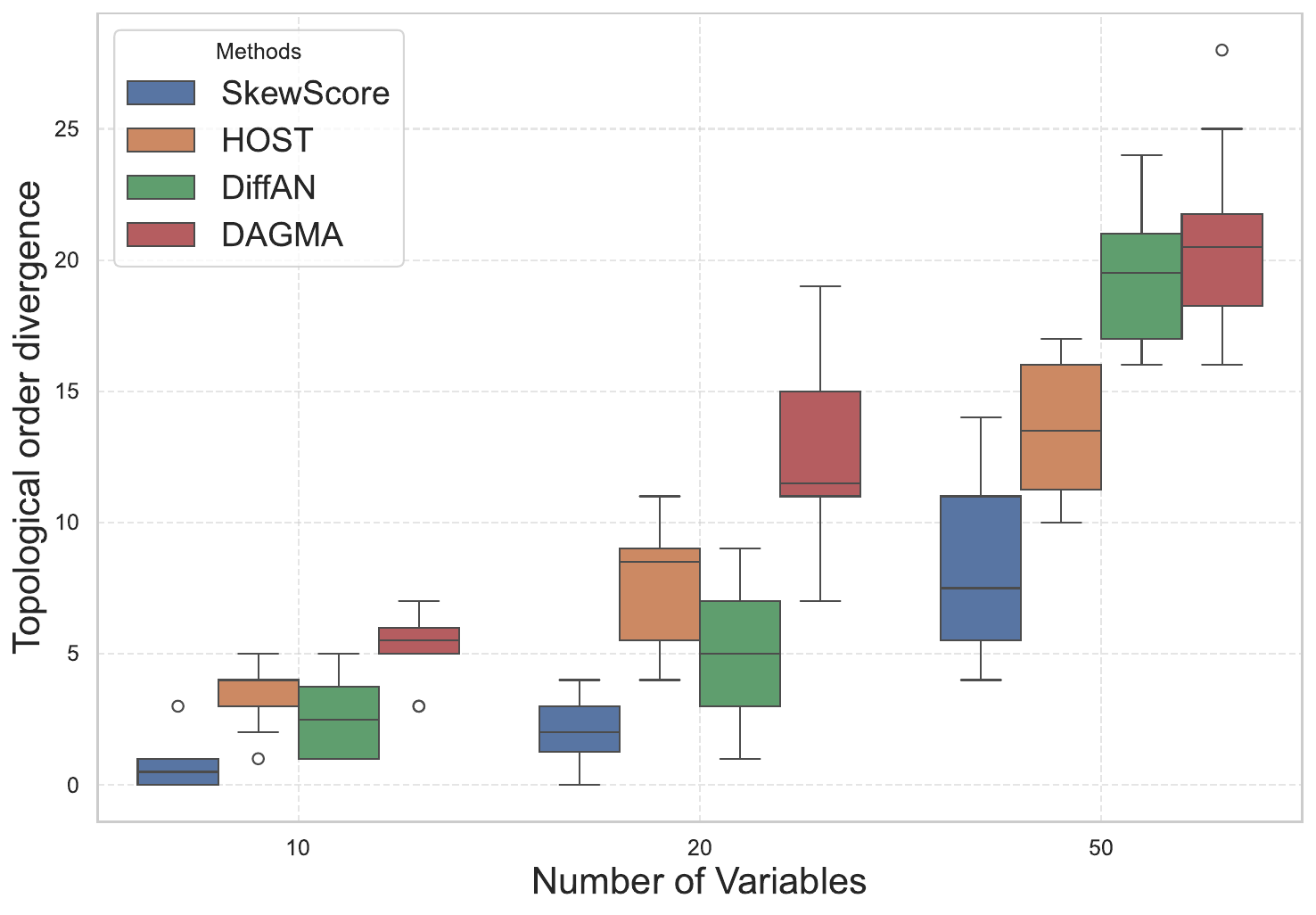}
    \includegraphics[width=.49\textwidth]{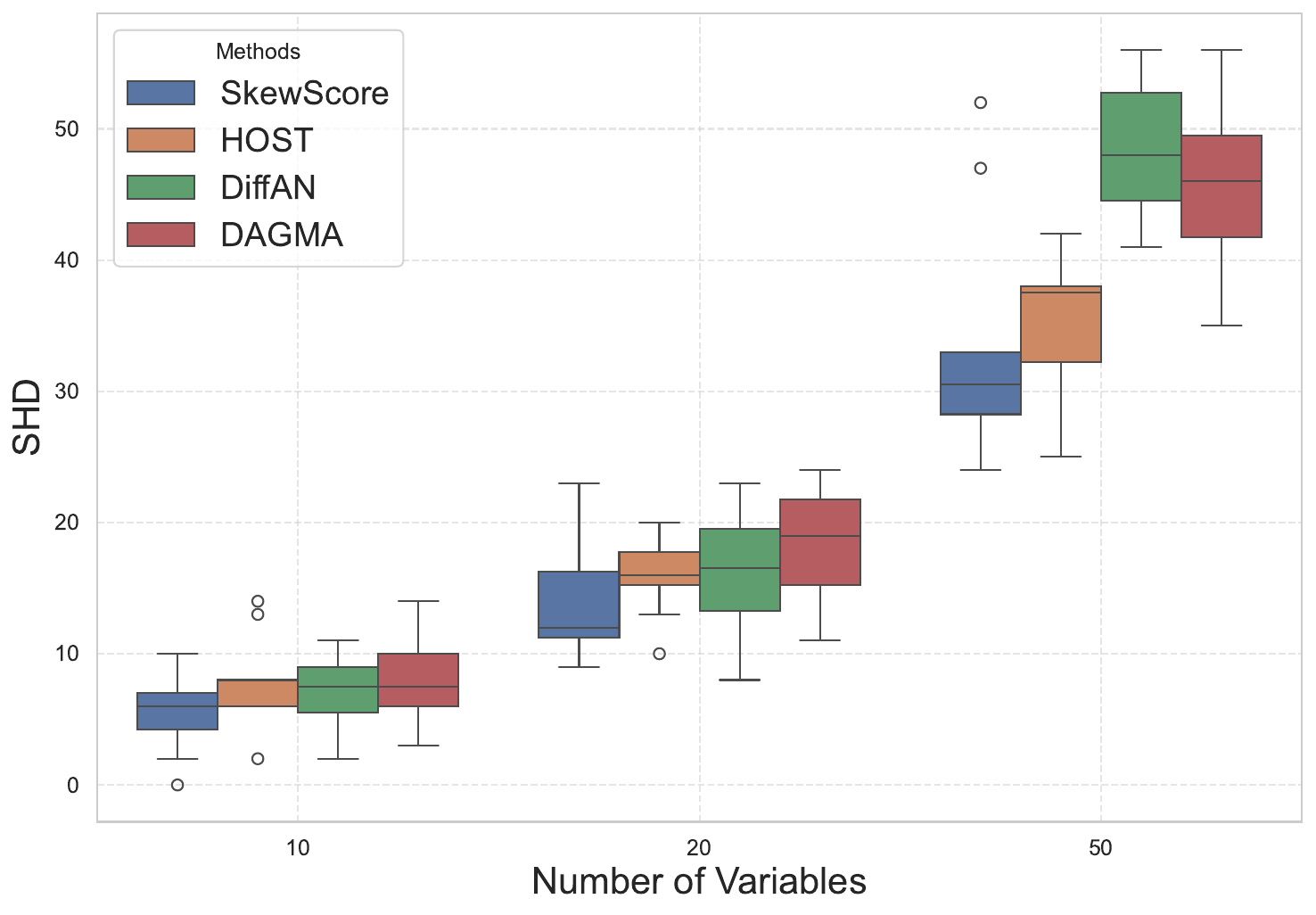}
    
    \caption{Topological order divergence and structural Hamming distance (SHD) across different number of variables (the dimension $d$). Lower values indicate better performance for both metrics.} 
    
    \label{fig:multi}
\end{figure}

\section{Conclusion}
\label{section:conclusion}
In this study, we introduced a novel criterion—the skewness-of-score criterion—for causal discovery with heteroscedastic symmetric noise models. Leveraging this criterion, we developed the algorithm \texttt{SkewScore}, which effectively identifies causal directions without the need for extracting exogenous noise. We also conduct a case study on the robustness of our algorithm in a triangle latent-confounded model, providing theoretical insights into its performance under this setup. Theoretical guarantees and empirical validations support our methodology.
Future work will focus on extending the method to extremely high-dimensional cases, where the number of variables and the complexity of their interactions significantly increase. %
Additionally, future research will explore causal discovery in causal models involving multiple latent confounders and more general latent structures.

\section*{Acknowledgements}
The research is partially supported by the NSF awards: SCALE MoDL-2134209, CCF-2112665 (TILOS).
It is also supported by the U.S. Department of Energy, the Office of Science, the Facebook Research Award, as well as CDC-RFA-FT-23-0069 from the CDC’s Center for Forecasting and Outbreak Analytics. BH is supported by NSF DMS-2428058. We would also like to acknowledge the support from NSF Award No. 2229881, AI Institute for Societal Decision Making (AI-SDM), the National Institutes of Health (NIH) under Contract R01HL159805, and grants from Quris AI, Florin Court Capital, and MBZUAI-WIS Joint Program. YL thanks Sumanth Varambally for the discussion. YH thanks Xuyang Lin for the discussion.

\bibliography{main}

\begin{thebibliography}{61}
\providecommand{\natexlab}[1]{#1}
\providecommand{\url}[1]{\texttt{#1}}
\expandafter\ifx\csname urlstyle\endcsname\relax
  \providecommand{\doi}[1]{doi: #1}\else
  \providecommand{\doi}{doi: \begingroup \urlstyle{rm}\Url}\fi

\bibitem[Adams et~al.(2021)Adams, Hansen, and Zhang]{adams2021identification}
Jeffrey Adams, Niels Hansen, and Kun Zhang.
\newblock Identification of partially observed linear causal models: Graphical conditions for the non-gaussian and heterogeneous cases.
\newblock \emph{Advances in Neural Information Processing Systems}, 34:\penalty0 22822--22833, 2021.

\bibitem[Akbari et~al.(2021)Akbari, Mokhtarian, Ghassami, and Kiyavash]{akbari2021recursive}
Sina Akbari, Ehsan Mokhtarian, AmirEmad Ghassami, and Negar Kiyavash.
\newblock Recursive causal structure learning in the presence of latent variables and selection bias.
\newblock \emph{Advances in Neural Information Processing Systems}, 34:\penalty0 10119--10130, 2021.

\bibitem[Anandkumar et~al.(2013)Anandkumar, Hsu, Javanmard, and Kakade]{anandkumar2013learning}
Animashree Anandkumar, Daniel Hsu, Adel Javanmard, and Sham Kakade.
\newblock Learning linear bayesian networks with latent variables.
\newblock In \emph{International Conference on Machine Learning}, pp.\  249--257. PMLR, 2013.

\bibitem[Anderson \& Arnold(1993)Anderson and Arnold]{anderson1993linnik}
Dale~N. Anderson and Barry~C. Arnold.
\newblock Linnik distributions and processes.
\newblock \emph{Journal of Applied Probability}, 30\penalty0 (2):\penalty0 330--340, 1993.

\bibitem[Bello et~al.(2022)Bello, Aragam, and Ravikumar]{bello2022dagma}
Kevin Bello, Bryon Aragam, and Pradeep Ravikumar.
\newblock Dagma: Learning dags via m-matrices and a log-determinant acyclicity characterization.
\newblock \emph{Advances in Neural Information Processing Systems}, 35:\penalty0 8226--8239, 2022.

\bibitem[B{\"u}hlmann et~al.(2014)B{\"u}hlmann, Peters, and Ernest]{buhlmann2014cam}
Peter B{\"u}hlmann, Jonas Peters, and Jan Ernest.
\newblock Cam: Causal additive models, high-dimensional order search and penalized regression.
\newblock 2014.

\bibitem[Cai et~al.(2019)Cai, Xie, Glymour, Hao, and Zhang]{cai2019triad}
Ruichu Cai, Feng Xie, Clark Glymour, Zhifeng Hao, and Kun Zhang.
\newblock Triad constraints for learning causal structure of latent variables.
\newblock \emph{Advances in neural information processing systems}, 32, 2019.

\bibitem[Chen et~al.(2022)Chen, Xie, Qiao, Hao, Zhang, and Cai]{chen2022identification}
Zhengming Chen, Feng Xie, Jie Qiao, Zhifeng Hao, Kun Zhang, and Ruichu Cai.
\newblock Identification of linear latent variable model with arbitrary distribution.
\newblock In \emph{Proceedings of the AAAI Conference on Artificial Intelligence}, volume~36, pp.\  6350--6357, 2022.

\bibitem[Chickering(2002)]{chickering2002optimal}
David~M. Chickering.
\newblock Optimal structure identification with greedy search.
\newblock \emph{Journal of Machine Learning Research}, 3\penalty0 (Nov):\penalty0 507--554, 2002.

\bibitem[Colombo et~al.(2012)Colombo, Maathuis, Kalisch, and Richardson]{colombo2012learning}
Diego Colombo, Marloes~H Maathuis, Markus Kalisch, and Thomas~S Richardson.
\newblock Learning high-dimensional directed acyclic graphs with latent and selection variables.
\newblock \emph{The Annals of Statistics}, pp.\  294--321, 2012.

\bibitem[Damsleth \& El-Shaarawi(2018)Damsleth and El-Shaarawi]{damsleth2018arma}
Eivind Damsleth and A.~H. El-Shaarawi.
\newblock Arma models with double-exponentially distributed noise.
\newblock \emph{Journal of the Royal Statistical Society: Series B (Methodological)}, 51\penalty0 (1):\penalty0 61--69, 12 2018.

\bibitem[Dong et~al.(2023)Dong, Huang, Ng, Song, Zheng, Jin, Legaspi, Spirtes, and Zhang]{dong2023versatile}
Xinshuai Dong, Biwei Huang, Ignavier Ng, Xiangchen Song, Yujia Zheng, Songyao Jin, Roberto Legaspi, Peter Spirtes, and Kun Zhang.
\newblock A versatile causal discovery framework to allow causally-related hidden variables.
\newblock \emph{arXiv preprint arXiv:2312.11001}, 2023.

\bibitem[Duong \& Nguyen(2023)Duong and Nguyen]{duong2023heteroscedastic}
Bao Duong and Thin Nguyen.
\newblock Heteroscedastic causal structure learning.
\newblock \emph{arXiv preprint arXiv:2307.07973}, 2023.

\bibitem[Glymour et~al.(2019)Glymour, Zhang, and Spirtes]{glymour2019review}
Clark Glymour, Kun Zhang, and Peter Spirtes.
\newblock Review of causal discovery methods based on graphical models.
\newblock \emph{Frontiers in Genetics}, 10, 2019.

\bibitem[Heckerman et~al.(1995)Heckerman, Geiger, and Chickering]{heckerman1995learning}
David Heckerman, Dan Geiger, and David Chickering.
\newblock Learning {Bayesian} networks: The combination of knowledge and statistical data.
\newblock \emph{Machine Learning}, 20:\penalty0 197--243, 1995.

\bibitem[Hoyer et~al.(2008{\natexlab{a}})Hoyer, Janzing, Mooij, Peters, and Sch{\"o}lkopf]{hoyer2008nonlinear}
Patrik Hoyer, Dominik Janzing, Joris~M Mooij, Jonas Peters, and Bernhard Sch{\"o}lkopf.
\newblock Nonlinear causal discovery with additive noise models.
\newblock \emph{Advances in neural information processing systems}, 21, 2008{\natexlab{a}}.

\bibitem[Hoyer et~al.(2008{\natexlab{b}})Hoyer, Shimizu, Kerminen, and Palviainen]{hoyer2008estimation}
Patrik~O Hoyer, Shohei Shimizu, Antti~J Kerminen, and Markus Palviainen.
\newblock Estimation of causal effects using linear non-gaussian causal models with hidden variables.
\newblock \emph{International Journal of Approximate Reasoning}, 49\penalty0 (2):\penalty0 362--378, 2008{\natexlab{b}}.

\bibitem[Huang et~al.(2018)Huang, Zhang, Lin, Sch{\"o}lkopf, and Glymour]{huang2018generalized}
Biwei Huang, Kun Zhang, Yizhu Lin, Bernhard Sch{\"o}lkopf, and Clark Glymour.
\newblock Generalized score functions for causal discovery.
\newblock In \emph{Proceedings of the 24th ACM SIGKDD International Conference on Knowledge Discovery \& Data Mining}, 2018.

\bibitem[Huang et~al.(2022)Huang, Low, Xie, Glymour, and Zhang]{huang2022latent}
Biwei Huang, Charles Jia~Han Low, Feng Xie, Clark Glymour, and Kun Zhang.
\newblock Latent hierarchical causal structure discovery with rank constraints.
\newblock \emph{Advances in Neural Information Processing Systems}, 35:\penalty0 5549--5561, 2022.

\bibitem[Hyv{\"a}rinen \& Dayan(2005)Hyv{\"a}rinen and Dayan]{hyvarinen2005estimation}
Aapo Hyv{\"a}rinen and Peter Dayan.
\newblock Estimation of non-normalized statistical models by score matching.
\newblock \emph{Journal of Machine Learning Research}, 6\penalty0 (4), 2005.

\bibitem[Immer et~al.(2023)Immer, Schultheiss, Vogt, Sch{\"o}lkopf, B{\"u}hlmann, and Marx]{immer2023identifiability}
Alexander Immer, Christoph Schultheiss, Julia~E Vogt, Bernhard Sch{\"o}lkopf, Peter B{\"u}hlmann, and Alexander Marx.
\newblock On the identifiability and estimation of causal location-scale noise models.
\newblock In \emph{International Conference on Machine Learning}, pp.\  14316--14332. PMLR, 2023.

\bibitem[Kersting et~al.(2007)Kersting, Plagemann, Pfaff, and Burgard]{kersting2007most}
Kristian Kersting, Christian Plagemann, Patrick Pfaff, and Wolfram Burgard.
\newblock Most likely heteroscedastic {Gaussian} process regression.
\newblock In \emph{International Conference on Machine Learning}, 2007.

\bibitem[Khemakhem et~al.(2021)Khemakhem, Monti, Leech, and Hyvarinen]{khemakhem2021causal}
Ilyes Khemakhem, Ricardo Monti, Robert Leech, and Aapo Hyvarinen.
\newblock Causal autoregressive flows.
\newblock In \emph{International conference on artificial intelligence and statistics}, pp.\  3520--3528. PMLR, 2021.

\bibitem[Kivva et~al.(2021)Kivva, Rajendran, Ravikumar, and Aragam]{kivva2021learning}
Bohdan Kivva, Goutham Rajendran, Pradeep Ravikumar, and Bryon Aragam.
\newblock Learning latent causal graphs via mixture oracles.
\newblock \emph{Advances in Neural Information Processing Systems}, 34:\penalty0 18087--18101, 2021.

\bibitem[Kummerfeld \& Ramsey(2016)Kummerfeld and Ramsey]{kummerfeld2016causal}
Erich Kummerfeld and Joseph Ramsey.
\newblock Causal clustering for 1-factor measurement models.
\newblock In \emph{Proceedings of the 22nd ACM SIGKDD international conference on knowledge discovery and data mining}, pp.\  1655--1664, 2016.

\bibitem[Li \& Turner(2017)Li and Turner]{li2017gradient}
Yingzhen Li and Richard~E Turner.
\newblock Gradient estimators for implicit models.
\newblock \emph{arXiv preprint arXiv:1705.07107}, 2017.

\bibitem[McGill(1962)]{mcgill1962random}
William McGill.
\newblock Random fluctuations of response rate.
\newblock \emph{Psychometrika}, 27\penalty0 (1):\penalty0 3--17, 1962.

\bibitem[Merz et~al.(2021)Merz, Bl{\"o}schl, Vorogushyn, Dottori, Aerts, Bates, Bertola, Kemter, Kreibich, Lall, and Macdonald]{merz2021causes}
Bruno Merz, G{\"u}nter Bl{\"o}schl, Sergiy Vorogushyn, Francesco Dottori, Jeroen~C.J.H. Aerts, Paul~D. Bates, Miriam Bertola, Matthias Kemter, Heidi Kreibich, Upmanu Lall, and Elena Macdonald.
\newblock Causes, impacts and patterns of disastrous river floods.
\newblock \emph{Nature Reviews Earth \& Environment}, 2:\penalty0 592---609, 2021.

\bibitem[Mitrovic et~al.(2018)Mitrovic, Sejdinovic, and Teh]{mitrovic2018causal}
Jovana Mitrovic, Dino Sejdinovic, and Yee~Whye Teh.
\newblock Causal inference via kernel deviance measures.
\newblock \emph{Advances in neural information processing systems}, 31, 2018.

\bibitem[Mityagin(2015)]{mityagin2015zerosetrealanalytic}
Boris Mityagin.
\newblock The zero set of a real analytic function, 2015.
\newblock URL \url{https://arxiv.org/abs/1512.07276}.

\bibitem[Montagna et~al.(2023{\natexlab{a}})Montagna, Mastakouri, Eulig, Noceti, Rosasco, Janzing, Aragam, and Locatello]{montagna2023assumption}
Francesco Montagna, Atalanti Mastakouri, Elias Eulig, Nicoletta Noceti, Lorenzo Rosasco, Dominik Janzing, Bryon Aragam, and Francesco Locatello.
\newblock Assumption violations in causal discovery and the robustness of score matching.
\newblock In \emph{Advances in Neural Information Processing Systems}, 2023{\natexlab{a}}.

\bibitem[Montagna et~al.(2023{\natexlab{b}})Montagna, Noceti, Rosasco, Zhang, and Locatello]{montagna2023causal}
Francesco Montagna, Nicoletta Noceti, Lorenzo Rosasco, Kun Zhang, and Francesco Locatello.
\newblock Causal discovery with score matching on additive models with arbitrary noise.
\newblock In \emph{Conference on Causal Learning and Reasoning}, pp.\  726--751. PMLR, 2023{\natexlab{b}}.

\bibitem[Montagna et~al.(2023{\natexlab{c}})Montagna, Noceti, Rosasco, Zhang, and Locatello]{montagna2023scalable}
Francesco Montagna, Nicoletta Noceti, Lorenzo Rosasco, Kun Zhang, and Francesco Locatello.
\newblock Scalable causal discovery with score matching.
\newblock \emph{arXiv preprint arXiv:2304.03382}, 2023{\natexlab{c}}.

\bibitem[Ng et~al.(2024)Ng, Huang, and Zhang]{ng2024structure}
Ignavier Ng, Biwei Huang, and Kun Zhang.
\newblock Structure learning with continuous optimization: A sober look and beyond.
\newblock In \emph{Proceedings of the Third Conference on Causal Learning and Reasoning}, 2024.

\bibitem[Pearl(2009)]{pearl2009causality}
Judea Pearl.
\newblock \emph{Causality: Models, Reasoning and Inference}.
\newblock Cambridge University Press, 2009.

\bibitem[Reisach et~al.(2021)Reisach, Seiler, and Weichwald]{reisach2021beware}
Alexander Reisach, Christof Seiler, and Sebastian Weichwald.
\newblock Beware of the simulated dag! causal discovery benchmarks may be easy to game.
\newblock \emph{Advances in Neural Information Processing Systems}, 34:\penalty0 27772--27784, 2021.

\bibitem[Rolland et~al.(2022)Rolland, Cevher, Kleindessner, Russell, Janzing, Sch{\"o}lkopf, and Locatello]{rolland2022score}
Paul Rolland, Volkan Cevher, Matth{\"a}us Kleindessner, Chris Russell, Dominik Janzing, Bernhard Sch{\"o}lkopf, and Francesco Locatello.
\newblock Score matching enables causal discovery of nonlinear additive noise models.
\newblock In \emph{International Conference on Machine Learning}, pp.\  18741--18753. PMLR, 2022.

\bibitem[Runge et~al.(2019)Runge, Bathiany, Bollt, Camps-Valls, Coumou, Deyle, Glymour, Kretschmer, Mahecha, Mu{\~n}oz-Mar{\'i}, van Nes, Peters, Quax, Reichstein, Scheffer, Sch{\"o}lkopf, Spirtes, Sugihara, Sun, Zhang, and Zscheischler]{runge2019inferring}
J.~Runge, S.~Bathiany, E.~Bollt, G.~Camps-Valls, D.~Coumou, E.~Deyle, C.~Glymour, M.~Kretschmer, M.~Mahecha, J.~Mu{\~n}oz-Mar{\'i}, E.~V. van Nes, J.~Peters, R.~Quax, M.~Reichstein, M.~Scheffer, B.~Sch{\"o}lkopf, P.~Spirtes, G.~Sugihara, Jie Sun, Kun Zhang, and J.~Zscheischler.
\newblock Inferring causation from time series in earth system sciences.
\newblock \emph{Nature Communications}, 10, 2019.

\bibitem[Sachs et~al.(2005)Sachs, Perez, Pe'er, Lauffenburger, and Nolan]{sachs2005causal}
Karen Sachs, Omar Perez, Dana Pe'er, Douglas~A Lauffenburger, and Garry~P Nolan.
\newblock Causal protein-signaling networks derived from multiparameter single-cell data.
\newblock \emph{Science}, 308\penalty0 (5721):\penalty0 523--529, 2005.

\bibitem[Salehkaleybar et~al.(2020)Salehkaleybar, Ghassami, Kiyavash, and Zhang]{salehkaleybar2020learning}
Saber Salehkaleybar, AmirEmad Ghassami, Negar Kiyavash, and Kun Zhang.
\newblock Learning linear non-gaussian causal models in the presence of latent variables.
\newblock \emph{Journal of Machine Learning Research}, 21\penalty0 (39):\penalty0 1--24, 2020.

\bibitem[Sanchez et~al.(2022)Sanchez, Liu, O'Neil, and Tsaftaris]{sanchez2022diffusion}
Pedro Sanchez, Xiao Liu, Alison~Q O'Neil, and Sotirios~A Tsaftaris.
\newblock Diffusion models for causal discovery via topological ordering.
\newblock \emph{arXiv preprint arXiv:2210.06201}, 2022.

\bibitem[Sanchez et~al.(2023)Sanchez, Liu, O'Neil, and Tsaftaris]{sanchez2023diffusion}
Pedro Sanchez, Xiao Liu, Alison~Q O'Neil, and Sotirios~A. Tsaftaris.
\newblock Diffusion models for causal discovery via topological ordering.
\newblock In \emph{The Eleventh International Conference on Learning Representations}, 2023.
\newblock URL \url{https://openreview.net/forum?id=Idusfje4-Wq}.

\bibitem[Shi et~al.(2018)Shi, Sun, and Zhu]{shi2018spectral}
Jiaxin Shi, Shengyang Sun, and Jun Zhu.
\newblock A spectral approach to gradient estimation for implicit distributions.
\newblock In \emph{International Conference on Machine Learning}, pp.\  4644--4653. PMLR, 2018.

\bibitem[Shimizu et~al.(2006)Shimizu, Hoyer, Hyv{\"a}rinen, and Kerminen]{shimizu2006lingam}
Shohei Shimizu, Patrik~O Hoyer, Aapo Hyv{\"a}rinen, and Antti Kerminen.
\newblock A linear {non-Gaussian} acyclic model for causal discovery.
\newblock \emph{Journal of Machine Learning Research}, 7\penalty0 (Oct):\penalty0 2003--2030, 2006.

\bibitem[Shimizu et~al.(2009)Shimizu, Hoyer, and Hyv{\"a}rinen]{shimizu2009estimation}
Shohei Shimizu, Patrik~O Hoyer, and Aapo Hyv{\"a}rinen.
\newblock Estimation of linear non-gaussian acyclic models for latent factors.
\newblock \emph{Neurocomputing}, 72\penalty0 (7-9):\penalty0 2024--2027, 2009.

\bibitem[Silva et~al.(2006)Silva, Scheines, Glymour, Spirtes, and Chickering]{silva2006learning}
Ricardo Silva, Richard Scheines, Clark Glymour, Peter Spirtes, and David~Maxwell Chickering.
\newblock Learning the structure of linear latent variable models.
\newblock \emph{Journal of Machine Learning Research}, 7\penalty0 (2), 2006.

\bibitem[Song \& Ermon(2019)Song and Ermon]{song2019generative}
Yang Song and Stefano Ermon.
\newblock Generative modeling by estimating gradients of the data distribution.
\newblock \emph{Advances in neural information processing systems}, 32, 2019.

\bibitem[Song et~al.(2020)Song, Garg, Shi, and Ermon]{song2020sliced}
Yang Song, Sahaj Garg, Jiaxin Shi, and Stefano Ermon.
\newblock Sliced score matching: A scalable approach to density and score estimation.
\newblock In \emph{Uncertainty in Artificial Intelligence}, pp.\  574--584. PMLR, 2020.

\bibitem[Spirtes \& Glymour(1991)Spirtes and Glymour]{spirtes1991pc}
Peter Spirtes and Clark Glymour.
\newblock An algorithm for fast recovery of sparse causal graphs.
\newblock \emph{Social Science Computer Review}, 9:\penalty0 62--72, 1991.

\bibitem[Spirtes et~al.(2001)Spirtes, Glymour, and Scheines]{spirtes2001causation}
Peter Spirtes, Clark Glymour, and Richard Scheines.
\newblock \emph{Causation, prediction, and search}.
\newblock MIT press, 2001.

\bibitem[Strobl \& Lasko(2023)Strobl and Lasko]{strobl2023identifying}
Eric~V Strobl and Thomas~A Lasko.
\newblock Identifying patient-specific root causes with the heteroscedastic noise model.
\newblock \emph{Journal of Computational Science}, 72:\penalty0 102099, 2023.

\bibitem[Tagasovska et~al.(2020)Tagasovska, Chavez-Demoulin, and Vatter]{tagasovska2020distinguishing}
Natasa Tagasovska, Val{\'e}rie Chavez-Demoulin, and Thibault Vatter.
\newblock Distinguishing cause from effect using quantiles: Bivariate quantile causal discovery.
\newblock In \emph{International Conference on Machine Learning}, pp.\  9311--9323. PMLR, 2020.

\bibitem[Teyssier \& Koller(2005)Teyssier and Koller]{teyssier2005ordering}
Marc Teyssier and Daphne Koller.
\newblock Ordering-based search: a simple and effective algorithm for learning bayesian networks.
\newblock In \emph{Proceedings of the Twenty-First Conference on Uncertainty in Artificial Intelligence}, pp.\  584--590, 2005.

\bibitem[Vincent(2011)]{vincent2011connection}
Pascal Vincent.
\newblock A connection between score matching and denoising autoencoders.
\newblock \emph{Neural computation}, 23\penalty0 (7):\penalty0 1661--1674, 2011.

\bibitem[Xie et~al.(2020)Xie, Cai, Huang, Glymour, Hao, and Zhang]{xie2020generalized}
Feng Xie, Ruichu Cai, Biwei Huang, Clark Glymour, Zhifeng Hao, and Kun Zhang.
\newblock Generalized independent noise condition for estimating latent variable causal graphs.
\newblock \emph{Advances in neural information processing systems}, 33:\penalty0 14891--14902, 2020.

\bibitem[Xu et~al.(2022)Xu, Marx, Mian, and Vreeken]{xu2022causal}
Sascha Xu, Alexander Marx, Osman Mian, and Jilles Vreeken.
\newblock Causal inference with heteroscedastic noise models.
\newblock In \emph{Proceedings of the AAAI Workshop on Information Theoretic Causal Inference and Discovery}, 2022.

\bibitem[Yin et~al.(2024)Yin, Gao, Yu, and Ji]{yin2024effective}
Naiyu Yin, Tian Gao, Yue Yu, and Qiang Ji.
\newblock Effective causal discovery under identifiable heteroscedastic noise model.
\newblock In \emph{Proceedings of the AAAI Conference on Artificial Intelligence}, volume~38, pp.\  16486--16494, 2024.

\bibitem[Zhang et~al.(2011)Zhang, Peters, Janzing, and Sch{\"o}lkopf]{zhang2011kernel}
K~Zhang, J~Peters, D~Janzing, and B~Sch{\"o}lkopf.
\newblock Kernel-based conditional independence test and application in causal discovery.
\newblock In \emph{27th Conference on Uncertainty in Artificial Intelligence (UAI 2011)}, pp.\  804--813. AUAI Press, 2011.

\bibitem[Zhang \& Hyv\"{a}rinen(2009)Zhang and Hyv\"{a}rinen]{zhang2009identifiability}
Kun Zhang and Aapo Hyv\"{a}rinen.
\newblock On the identifiability of the post-nonlinear causal model.
\newblock In \emph{Conference on Uncertainty in Artificial Intelligence}, 2009.

\bibitem[Zheng et~al.(2018)Zheng, Aragam, Ravikumar, and Xing]{zheng2018dags}
Xun Zheng, Bryon Aragam, Pradeep~K Ravikumar, and Eric~P Xing.
\newblock Dags with no tears: Continuous optimization for structure learning.
\newblock \emph{Advances in neural information processing systems}, 31, 2018.

\bibitem[Zhou et~al.(2020)Zhou, Shi, and Zhu]{zhou2020nonparametric}
Yuhao Zhou, Jiaxin Shi, and Jun Zhu.
\newblock Nonparametric score estimators.
\newblock In \emph{International Conference on Machine Learning}, pp.\  11513--11522. PMLR, 2020.

\end{thebibliography}
\bibliographystyle{iclr2025_conference}

\newpage
\appendix
\section{Additional Discussion on the $\SkewScore$ meausure}
\label{sec:score_visual}
This section discusses $\SkewScore$ (Definition~\ref{def:skew}), including intuition and motivation for considering this measure.

If the conditional density $p(x_i\mid x_{-i})$ is a symmetric function, then we have $\sk{x_i}{p}=0$. The proof of this result is deferred to the end of this section. For asymmetric distributions, we provide two one-dimensional examples with explicit computations at the end of this section:
\begin{enumerate}
    \item For the Gumbel distribution $
p_{\textnormal{Gumbel}}(x)  = \frac{1}{\beta} \exp {\left(-\frac{x - \mu}{\beta} -e^{-\frac{x - \mu}{\beta}}\right)}$ with $\beta>0$, we have $\SkewScore(p_{\textnormal{Gumbel}}) = \frac{2}{\beta^3}.$
    \item  For the Gamma distribution $\Gamma(k,\theta)$, with $k>3, $ and $\theta>0$, i.e.,
$$p_{\textnormal{Gamma}}(x) = 
\begin{cases} 
\frac{1}{\Gamma(k) \theta^k} x^{k-1} e^{-\frac{x}{\theta}}, & x \geq 0 \\ 
0, & x < 0 
\end{cases}~(\theta>0,k>3),$$
we have $\SkewScore(p_{\textnormal{Gamma}})  =  \theta^{-3} \frac{4}{(k-3) (k-2)}.$
\end{enumerate}

We provide additional discussion on the motivation for considering the skewness of the score, i.e., the $\SkewScore$ measure. Consider the bivariate HSNM model where the variable $X$ causes $Y$, i.e., $X\rightarrow Y$.
Figure~\ref{fig:symmetry} shows that to identify the causal direction, rather than examining the skewness of the residuals, we can focus on the skewness of the conditional distribution. %
However, working directly with the conditional random variables $X|Y$ and $Y|X$ is often challenging.
Therefore, we introduce an auxiliary random variable, $\nabla \log p(X,Y)$, where $(X,Y)$ follows the data distribution $p$. This two-dimensional random variable, induced by the score function on the data distribution, is represented as the black directed line segments in Figure~\ref{fig:score}. This auxiliary random vector's first element, $\partial_ x \log p(X,Y)$, is visualized as the orange directed line segment, while the second element, $\partial_ y \log p(X,Y)$, is shown in blue. Fact~\ref{fact:score_conditional} connects the conditional distribution $p(y|x)$ and $p(x|y)$ with this auxiliary random vector, allowing us to transform the conditional-distribution-estimation problem to a joint-distribution one. Theorem~\ref{thm:identifiability} shows that under Assumption~\ref{assmp:E_partial_x_neq_0}, the orange directed segment (representing $\partial_x \log p(X,Y)$) follows a skewed distribution, whereas the blue directed segment (representing $\partial_y \log p(X,Y)$) follows a symmetric distribution. This means that $\SkewScore_x(p)\neq 0$ while $\SkewScore_y(p)=0$. We summarize the motivation in Figure~\ref{fig:flowchart}.

\begin{figure}[h]
    \hspace*{0.26\textwidth}
    \includegraphics[width=0.6\textwidth]{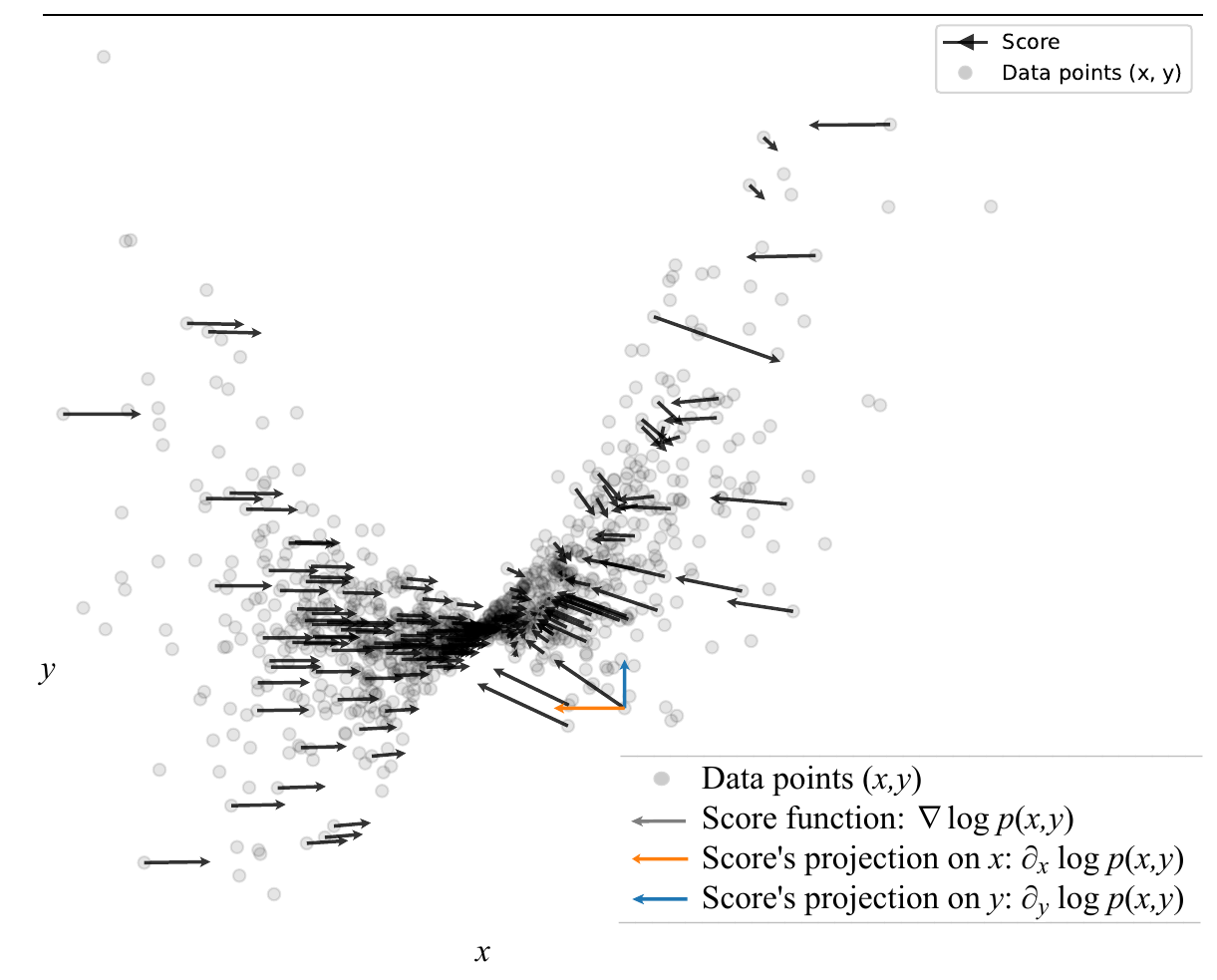} 
    \caption{Score function (a vector field) visualized on data points with projections on the \(x\)-axis and \(y\)-axis. For clarity, only the projections for one data point are shown (highlighted in orange and blue). Theorem~\ref{thm:identifiability} examines the skewness of these projections across the data distribution.}
    \label{fig:score}
\end{figure}

\usetikzlibrary{shapes.geometric, arrows}

\tikzstyle{block} = [rectangle, draw, fill=none, 
    text width=10em, text centered, rounded corners, minimum height=4em]
\tikzstyle{arrow} = [thick,->,>=stealth]

\begin{figure}[h]
    \centering
    \begin{tikzpicture}[node distance=5cm, auto]

        \node (exogenous) [block] {Skewness of the Exogenous Noise $N$ \\ \small (Difficult to extract $N$)};
        \node (conditional) [block, right of=exogenous] {Skewness of the Conditional Distributions \\ $p(x|y)$, $p(y|x)$ \\ \small (Challenging to empirically estimate)};
        \node (score) [block, right of=conditional] {Skewness of the score \\ $\nabla \log p(X,Y)$: \\ $\SkewScore_x(p)$ and $\SkewScore_y(p)$ \\ \small (Our approach)};

        \draw [arrow] (exogenous) -- (conditional);
        \draw [arrow] (conditional) -- (score);

    \end{tikzpicture}
    \caption{Conceptual flow from skewness of exogenous noise to skewness of the score.}
    \label{fig:flowchart}
\end{figure}
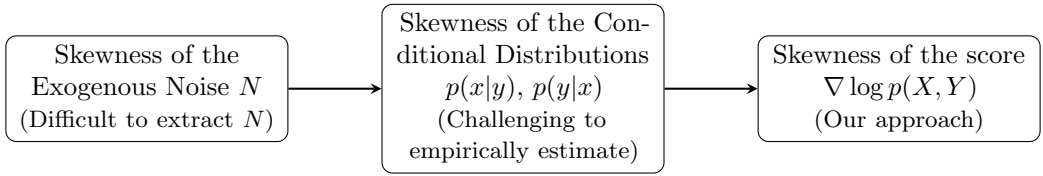

We present the deferred proof and derivation as follows.
\begin{proof}[Proof of: $\sk{x_i}{p} = 0$ for symmetric $p(x_i \mid x_{-i})$.]
By Definition~\ref{def:skew}, we have
\begin{align*}
    \sk{x_i}{p} &=\abs{\E _{X\sim p}\left[\left(\partial_{x_i}\log p(X)\right)^3\right]} \\
    &=\abs{\int_{\R^{d-1}}\int_{\R} \frac{[\partial_{x_i}p(x_1,\cdots,x_d)]^3}{p(x_1,\cdots,x_d)^2}\rd x_i \rd x_{-i}}
\end{align*}
The assumption that $p(x_i\mid x_{-i})$ is symmetric implies: for every $x\in\R^d$, $$p(x_1,\cdots,x_i,\cdots, x_d) = p(x_1,\cdots,-x_i,\cdots, x_d).$$ Therefore, $$\partial_{x_i}p(x_1,\cdots,x_i,\cdots, x_d) = -\partial_{x_i}p(x_1,\cdots,-x_i,\cdots, x_d).$$ 
Now we fix $x_{-i}\in \R^{d-1}$, and let $h(x_i) = \frac{[\partial_{x_i}p(x_1,\cdots,x_i,\cdots, x_d]^3}{p(x_1,\cdots,x_i,\cdots, x_d)^2}$. From the previous analysis, we derive that $h(x_i)= -h(-x_i)$. So $h$ is an odd function and thus 
$$
\int_{\R}h(x_i)\rd x_i = 0.
$$
Consequently, for every $x_{-i}\in \R^{d-1}$,
$$\int_{\R} \frac{[\partial_{x_i}p(x_1,\cdots,x_d)]^3}{p(x_1,\cdots,x_d)^2}\rd x_i =0.$$

\begin{align*}
    \E \lrb{\frac{\partial}{\partial x_{i}} \log p(X_1,X_2,\cdots,X_{d})}&=\int_{\R^d}\frac{\frac{\partial}{\partial x_{i} }p(x_1,x_2,\cdots,x_{d})}{ p(x_1,x_2,\cdots,x_{d})} p(x_1,x_2,\cdots,x_{d}) \rd x_{-i}\rd x_{i} \\
    &=\int_{\R^{d-1}}\int_{\R}\frac{\partial}{\partial x_{i} }  p(x_1,x_2,\cdots,x_{d})\rd x_{i} \rd x_{-i}.
\end{align*}

Thus we prove $\sk{x_i}{p}=0$.
\end{proof}

\begin{proof}[Derivation of $\SkewScore(p_{\textnormal{Gumbel}})$.]
Recall that 
$p(x) = \frac{1}{\beta} \exp {\left(-\frac{x - \mu}{\beta} -e^{-\frac{x - \mu}{\beta}}\right)}.
$
Then $
\log p(x) = \log \left(\frac{1}{\beta}\right) - \left(\frac{x - \mu}{\beta} + e^{-\frac{x - \mu}{\beta}}\right),
$ and 
$\frac{d}{dx} \log p(x) = -\frac{1}{\beta} - \left(-\frac{1}{\beta}\right) e^{-\frac{x - \mu}{\beta}} = \frac{1}{\beta} \left(-1 + e^{-\frac{x - \mu}{\beta}}\right).
$
\begin{align*}
    {\SkewScore}(p_{\textnormal{Gumbel}}) &= \int_{\mathbb{R}} \left[\frac{1}{\beta} \left(-1 + e^{-\frac{x - \mu}{\beta}}\right)\right]^3 \frac{1}{\beta} \exp{\left(-\frac{x - \mu}{\beta} - e^{-\frac{x - \mu}{\beta}}\right)} dx \\
    &\overset{ z = \frac{x - \mu}{\beta}}{=} \frac{1}{\beta^3} \int_{\mathbb{R}} \left(-1 + e^{-z}\right)^3 e^{-\left(z + e^{-z}\right)} dz\\
    &\overset{y = e^{-z}}= \frac{1}{\beta^3} \int_{0}^{\infty} \left(-1 + y\right)^3 e^{-y} dy.
\end{align*}

Finally, since $\int_{0}^{\infty} y^k e^{-y} dy =\Gamma(k+1)= k!$, for $k\in \N$, we conclude that $
{SkewScore}(p) = \frac{2}{\beta^3}.
$
\end{proof}

\begin{proof}[Derivation of $\SkewScore(p_{\textnormal{Gamma}})$.]

We first note that $k > 3 $ implies that \( p(x) \in C^1 \) and ensures the integrals involved are all well-defined. We have $\log p(x) = (k-1) \ln x - \frac{x}{\theta} + \ln \left( \frac{1}{\Gamma(k) \theta^k} \right)$, and thus $\frac{d}{dx} \log p(x) = \frac{k-1}{x} - \frac{1}{\theta}.$
Then we have
\begin{align*}
   & \SkewScore(p) = \mathbb{E} \left[ \left( \frac{d}{dx} \log p(x) \right)^3 \right] \\&~~= \int_0^\infty \left( \frac{k-1}{x} - \frac{1}{\theta} \right)^3 \frac{1}{\Gamma(k) \theta^k} x^{k-1} e^{-\frac{x}{\theta}} dx \\
&~~= \frac{1}{\Gamma(k) \theta^k} \int_0^\infty \bigg[ (k-1)^3 x^{-3} - 3 (k-1)^2 \frac{1}{\theta} x^{-2} + 3 (k-1) \frac{1}{\theta^2} x^{-1} - \frac{1}{\theta^3} \bigg] x^{k-1} e^{-\frac{x}{\theta}} dx \\
&~~= \frac{\theta^{k-3}}{\Gamma(k) \theta^k} \Bigg[
(k-1)^3 \Gamma(k-3)  - 3 (k-1)^2 \Gamma(k-2) 
+ 3 (k-1) \Gamma(k-1) - \Gamma(k) \Bigg].
\end{align*}

By the property of Gamma function $\Gamma(z+1) = z\Gamma(z)$, we have 
\begin{align*}
\mathbb{E} \left[ \left( \frac{d}{dx} \log p(x) \right)^3 \right]&= \frac{1}{\Gamma(k) \theta^k} \Gamma(k) \theta^{k-3} \Bigg[ \frac{(k-1)^3}{ (k-1)(k-2)(k-3)} - \frac{3 (k-1)^2}{(k-1)(k-2)} + \frac{3 (k-1)}{k-1} - 1 \Bigg]\\
&= \theta^{-3} \frac{4}{(k-3) (k-2)}.
\end{align*}
\end{proof}

\section{Discussion on Model 
\label{sec:assumption1}
Assumption~\ref{assmp:E_partial_x_neq_0}}
The following proposition demonstrates that Assumption~\ref{assmp:E_partial_x_neq_0} is not overly restrictive and can generally be satisfied; specifically, the set of models that violate this assumption is relatively small. To establish this, we fix $p_x$, $p_n$, and $\sigma(x)$, and consider $f$ within a finite-dimensional space with a sufficiently large dimension to capture the features of the data. We then demonstrate that the set of functions $f(x)$ that do not satisfy the assumption has zero measure. %

\begin{proposition}
\label{prop:zero_measure}  
Fix $p_x$, $p_n$, and $\sigma(x)$. Consider $f$ within an $M$-dimensional linear space $V$. In this space $V$, define the non-compliant set as $$S_{\mathrm{nc}}(V):=\set{f\in V: f \text{ does not satisfy Assumption \ref{assmp:E_partial_x_neq_0}}}.$$ 

With the notation in Assumption \ref{assmp:E_partial_x_neq_0}, define \begin{align*}
        \begin{cases}
    h(x) := 3\brac{\int_{\R}\frac{p_n'(u)^2}{p_n(u)}\rd u}\frac{A(x)p_x(x)}{\sigma(x)^2}+3\brac{\int_{\R}\frac{p_n'(u)^3u}{p_n(u)^2}\rd u}\frac{C(x)p_x(x)}{\sigma(x)^2},\\
    c := \int \left(A(x)+C(x)\tfrac{p_n'(u)u}{p_n(u)}\right)^3p_n(u)p_x(x)\rd x \rd u,
    \end{cases}
    \end{align*}
which do not involve $f$.

Since $\dim V = M$, $V$ is isomorphic to $\R^M$ and thus naturally inherits the Lebesgue measure on $\R^M$. This induced measure does not depend on the choice of basis and is denoted by $m$. Assume there exists $x_0\in \R$ such that $h(x_0)\neq 0$, then the following holds.
\begin{enumerate}
    \item If $c\neq 0$, then the non-compliant set has zero measure, i.e., 
    $$
    m(S_{\mathrm{nc}}(V)) = 0.
    $$
\item If $c=0$, assume $V$ is the space of polynomial functions of degree at most $M-1$, i.e., $V=\set{\sum_{i=0}^{M-1}a_ix^i:a_0,a_2,\dots,a_{M-1}\in\R}$. Assume $\int_{\R}\brac{p_x(x)^2+p'_x(x)^2}e^{\mu x^2}\rd x<\infty$, for some $\mu\in(0,+\infty)$. Then there exists $M_0\in \mathbb{N}$, such that for every $M\geq M_0$, the non-compliant set has zero measure, i.e.,
$$
m(S_{\mathrm{nc}}(V)) = 0.
$$

\end{enumerate}
  
\end{proposition}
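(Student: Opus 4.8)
The plan is to first make the $f$-dependence of the integral in Assumption~\ref{assmp:E_partial_x_neq_0} completely explicit. Writing $\Phi(f)$ for the left-hand side of that integral and recalling that $A(x)$ and $C(x)$ do not involve $f$ while $B(x) = -f'(x)/\sigma(x)$, I would expand the integrand and carry out the $u$-integration term by term. The cubic term $\brac{A + C\,p_n'(u)u/p_n(u)}^3$ contributes exactly the constant $c$, and the cross term $3\brac{B\,p_n'(u)/p_n(u)}^2\brac{A + C\,p_n'(u)u/p_n(u)}$, after integrating out $u$ and using $B^2 = f'(x)^2/\sigma(x)^2$, collapses to $\int_{\R} f'(x)^2 h(x)\,\rd x$ with $h$ exactly as defined in the statement. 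This yields the clean identity
\begin{equation*}
\Phi(f) = c + \int_{\R} f'(x)^2 h(x)\,\rd x,
\end{equation*}
so that $\Phi$ depends on $f$ only through $f'$ and is a quadratic functional of $f$.

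Next I would coordinatize. Fixing a basis $e_1,\dots,e_M$ of $V$ and writing $f = \sum_i a_i e_i$, the identity above becomes $\Phi(f) = c + \mathbf{a}^\top Q\,\mathbf{a}$, where $Q$ is the symmetric $M\times M$ matrix with entries $Q_{ij} = \int_{\R} e_i'(x) e_j'(x) h(x)\,\rd x$. Thus $S_{\mathrm{nc}}(V)$ is precisely the zero set of the polynomial $P(\mathbf{a}) = \mathbf{a}^\top Q\mathbf{a} + c$ of degree at most two on $\R^M\cong V$. The whole argument then rests on the standard fact that the zero set of a polynomial on $\R^M$ that is not identically zero has Lebesgue measure zero, so it suffices to show $P\not\equiv 0$ in each case. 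Case~1 is immediate: if $c\neq 0$ then $P(\mathbf{0}) = c \neq 0$, so $P$ is a nonzero polynomial and $m(S_{\mathrm{nc}}(V)) = 0$.

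For Case~2, where $c = 0$ and $P(\mathbf{a}) = \mathbf{a}^\top Q\mathbf{a}$, I must show that the matrix $Q$ is nonzero once $M$ is large, since a nonzero quadratic form is a nonzero polynomial. By polarization, $Q = 0$ is equivalent to $\int_{\R} g_1(x) g_2(x) h(x)\,\rd x = 0$ for all $g_1,g_2$ in $\set{f' : f\in V}$, which for $V$ the polynomials of degree $\le M-1$ is the space of polynomials of degree $\le M-2$. Since products of such polynomials span all polynomials of degree $\le 2M-4$, the condition $Q=0$ is equivalent to the vanishing of the moments $\int_{\R} x^k h(x)\,\rd x$ for all $0\le k \le 2M-4$. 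Hence it is enough to produce a single nonvanishing moment of $h$ and take $M_0$ large enough to include it.

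The crux, and the step I expect to be the main obstacle, is showing that $h$ cannot have all moments equal to zero. Here I would first use the regularity of the model to control the decay of $h$: since $\sigma\ge r>0$ with $\sigma,\sigma'\in L^\infty$, the function $h$ can be written as $h = \beta_1 p_x' + \beta_2 p_x$ with $\beta_1,\beta_2$ bounded, so the hypothesis $\int_{\R}(p_x^2 + (p_x')^2)e^{\mu x^2}\,\rd x < \infty$ together with Cauchy--Schwarz gives $\int_{\R} \abs{h(x)} e^{\epsilon x^2}\,\rd x < \infty$ for any $0 < \epsilon < \mu/2$. This Gaussian-type integrability makes the Fourier transform $\hat h$ extend to an entire function, and the moments of $h$ are, up to constants, its Taylor coefficients at the origin. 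If every moment vanished, this would force $\hat h\equiv 0$, hence $h\equiv 0$, contradicting $h(x_0)\neq 0$ for a continuous $h$. Therefore some moment $\int_{\R} x^{k_0} h\,\rd x \neq 0$; choosing $M_0 = \lceil k_0/2\rceil + 2$ guarantees $Q\neq 0$, hence $P\not\equiv 0$, and $m(S_{\mathrm{nc}}(V)) = 0$ for all $M\ge M_0$. The delicate points to verify are the finiteness of the constants $\int p_n'^2/p_n$ and $\int p_n'^3 u / p_n^2$ appearing in $h$, the decay estimate, and the determinacy-of-moments argument.
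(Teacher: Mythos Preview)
Your proposal is correct and follows essentially the same route as the paper: both reduce Assumption~\ref{assmp:E_partial_x_neq_0} to the quadratic identity $\Phi(f) = c + \int_{\R} f'(x)^2 h(x)\,\rd x$, coordinatize to obtain a polynomial $P(\mathbf a)=\mathbf a^{\top}Q\mathbf a + c$ on $\R^M$, dispose of Case~1 via $P(\mathbf 0)=c\neq 0$, and reduce Case~2 to showing that $h$ cannot have all polynomial moments vanish, then invoke the zero-measure property of the zero set of a nonzero polynomial. The only substantive difference is in the moment-determinacy step: the paper places $h$ in the weighted space $L^2(\R,e^{\mu x^2})$ and appeals to completeness of Hermite-type polynomials there, whereas you bound $\int |h|\,e^{\epsilon x^2}<\infty$ and argue that the Fourier transform of $h$ extends to an entire function whose Taylor coefficients at $0$ are the moments. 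Both are classical arguments for the same fact; your Fourier route is more self-contained and sidesteps any ambiguity about which weighted $L^2$ space the Hermite polynomials densify.
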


This proposition indicates that Assumption~\ref{assmp:E_partial_x_neq_0} is generally satisfied for most models within a sufficiently large finite-dimensional space, as the set of functions violating the assumption has zero measure. The proof is provided in Appendix~\ref{Appen:finite_dim_sp_0_measure}.

\begin{remark}
We note that when \(\sigma(x)\) is not constant (i.e., in the heteroscedastic model), \(c \neq 0\) in most cases. However, if \(\sigma(x)\) is constant and \(p_x\) is symmetric, then \(c = 0\). In this scenario, if \(X\) is a Gaussian variable, the assumption in the second case is always satisfied; that is, there exists some \(\mu \in (0, +\infty)\) such that \(\int_{\R} \left( p_x(x)^2 + p'_x(x)^2 \right) e^{\mu x^2} \, \rd x < \infty\).
\end{remark}

For the multivariate case, intuitively, the general validity of the assumption in Corollary \ref{coro:multi_d} is similar to Proposition \ref{prop:zero_measure}, where the set of functions \( (f_i,\sigma_i) \) that violate Eq.~\eqref{eq:assumption_of_multi_d} exhibits a property analogous to having Lebesgue zero measure. Specifically, we parameterize the causal model by parameterizing functions \( f_i^\theta \) and \( \sigma_i^\theta \), where \( \theta \) belongs to the parameter space \( \Theta \subset \mathbb{R}^K \) (e.g., a neural network; see Example \ref{eg:2_layer_nn}). 
The left-hand side of Eq.~\eqref{eq:assumption_of_multi_d} defines a function of the parameter $\theta$, and we write it as $F(\theta)$.
With this parametrization, the model that violates Eq.~\eqref{eq:assumption_of_multi_d} corresponds to the solution of the equation $F(\theta)=0$.
Under regular assumptions on \( f_i \), \( \sigma_i \), \( p_X \), and \( p_{N_i} \), \( F \) is a real-analytic function. Then by applying \cite[Proposition 0]{mityagin2015zerosetrealanalytic}, the violated set \( S_{\text{nc}}(\Theta) = \{\theta \in \Theta : F(\theta) = 0\} \) has zero measure as long as the function $F$ is not a zero constant function. It is worth noting, however, that in the homoscedastic Gaussian linear model, \( F \) is identically zero, and thus the proposition from \cite{mityagin2015zerosetrealanalytic} no longer applies. %

We provide the following examples to give a more intuitive understanding of Assumption \ref{assmp:E_partial_x_neq_0}.
\begin{example}\label{eg:2_layer_nn}
    In this example we consider a Homoscedastic model, where $\sigma(x)=\sigma$ is a constant. Let $f$ be a two-layer Neural Network with Softplus activation function: $$f(x) = w_ 2 \log (1+e^{(w_ 1 x +b_ 1)})+b_  2, \ w_ 1, w_ 2, b_ 1, b_ 2 \in \mathbb{R}.$$
Assuming $p_ x$ is an unimodal distribution symmetric with respect to $0$, then Assumption \ref{assmp:E_partial_x_neq_0} is violated if and only if $w_ 1=0$ or $w_ 2=0$, i.e., when $f$ is a constant function.
\end{example}

\begin{proof}
The LHS of Assumption \ref{assmp:E_partial_x_neq_0} equals to
$$\frac{3w_ 1^2 w_ 2^2}{\sigma^2}\int_  {\mathbb{R}} \frac{1}{(1+e^{-(w_ 1 x+b)})^2} p'_ x(x) d x \int_  {\mathbb{R}} \frac{(p'_ N(u))^2}{p_ N(u)} d u.$$

Notice that $\frac{(p'_ N(u))^2}{p_ N(u)}$ is always non-negative, so $ \int_  {\mathbb{R}} \frac{(p'_ N(u))^2}{p_ N(u)} d u>0$. We only need to prove $\int_  {\mathbb{R}} \frac{1}{(1+e^{-(w_ 1 x+b)})^2} p'_ x(x) d x \neq 0$ when $w_ 1\neq 0$.

Notice that $p_  x(x)$ is an even function thus $p'_  x(x)$ is an odd function, thus
\begin{equation*}
    \begin{aligned}
        \int_  {\mathbb{R}} \frac{ p'_ x(x)}{(1+e^{-(w_ 1 x+b)})^2} d x &=\int_  {0}^{\infty} \frac{ p'_ x(x)}{(1+e^{-(w_ 1 x+b)})^2} p'_ x(x) d x+\int_  {-\infty}^{0} \frac{ p'_ x(x)}{(1+e^{-(w_ 1 x+b)})^2} d x \\
        &=\int_  {0}^{\infty} \left( \frac{1}{(1+e^{-(w_ 1 x+b)})^2} - \frac{1}{(1+e^{-(-w_ 1 x+b)})^2}\right) p'_ x(x) d x \\
        &=\int_  {0}^{\infty}  \frac{(2+e^{-(w_ 1 x+b)}+e^{-(-w_ 1 x+b)}) e^{-b}(e^{w_ 1 x}-e^{-w_ 1 x})}{(1+e^{-(w_ 1 x+b)})^2(1+e^{-(-w_ 1 x+b)})^2} p'_ x(x) d x
    \end{aligned}
\end{equation*}

Since $w_ 1\neq 0$, and $p_ x$ is an unimodal distribution symmetric with respect to $0$. The integrand is always of the same sign (always strictly greater than 0 or strictly less than 0) over the integration interval \((0, \infty)\). Therefore, the above integral is non-zero. 
    
\end{proof}

\begin{example}\label{eg:hetero}
    In this example we consider the Heteroscedastic case with $\sigma(x) = e^{\lambda (x-\mu)^2}$, where $\mu \neq 0,~\lambda> 0$. Let $f(x) = ax+b$, $X\sim \mathcal{N}(0,1)$, $N\sim \mathcal{N}(0,1)$. Then 
$f$ violates Assumption \ref{assmp:E_partial_x_neq_0} if and only if 
$$a = \pm \sqrt{-\frac{c}{\gamma}},$$
where $\gamma = 3\mu\lambda\bigg[\frac{4e^{\frac{\mu^2}{2(4\lambda+1)}} }{(4\lambda+1)\sqrt{4\lambda+1}}+ \frac{6e^{\frac{\mu^2}{2(6\lambda+1)}}}{(6\lambda+1)\sqrt{6\lambda+1}}\bigg]$ and $c$ is defined in Proposition \ref{prop:zero_measure}. Especially, when $\frac{c}{\gamma}>0$, $f$ always satisfy Assumption \ref{assmp:E_partial_x_neq_0}.
\end{example}

It is evident that the non-compliant set $S_{\mathrm{nc}}(V) = \{\sqrt{-\frac{c}{\gamma}}\}\times \mathbb{R}\cup \{-\sqrt{-\frac{c}{\gamma}}\}\times \mathbb{R}$ always has 0 measure (it is an empty set when $\frac{c}{\gamma}<0$), providing an example for proposition \ref{prop:zero_measure}. Additionally, due to the simple structure, we already have $m(S_{\mathrm{nc}}(V)) = 0$ when $c=0$. Therefore, the discussion of the second case in Proposition \ref{prop:zero_measure} is unnecessary. The proof of this example is provided as follows.

\begin{proof}
    Recall in the proof of proposition \ref{prop:zero_measure} (Eq.~\eqref{eq:surface}), we proved that $f$ violates Assumption \ref{assmp:E_partial_x_neq_0} equals to $$\int_ {\mathbb{R}}f'(x)^2h(x)\rd x= -c.$$
As $f(x) = ax+b$ is linear, it equals to $$
a^2\int_ {\mathbb{R}}h(x)\rd x= -c.
$$
On the other hand, we can simplify $h(x)$ as follows: $$h(x) = \frac{3 p_ X'(x)}{\sigma(x)^2}-\frac{9 p_ X(x)\sigma(x)}{\sigma(x)^4} =-\frac{3x}{\sqrt{2\pi}}e^{-\frac{4\lambda(x-\mu)^2+x^2}{2}}-\frac{18\lambda (x-\mu)}{\sqrt{2\pi}}e^{-\frac{6\lambda(x-\mu)^2+ x^2}{2}}.$$
Thus we can compute that 
$$\int_ {\mathbb{R}}h(x)\rd x = 3\mu\lambda\bigg[\frac{4e^{\frac{\mu^2}{2(4\lambda+1)}} }{(4\lambda+1)\sqrt{4\lambda+1}}+ \frac{6e^{\frac{\mu^2}{2(6\lambda+1)}}}{(6\lambda+1)\sqrt{6\lambda+1}}\bigg].$$
Then we know that $\int_ {\mathbb{R}}h(x)\rd x\neq 0$. Therefore, we conclude that $f$ violates Assumption \ref{assmp:E_partial_x_neq_0} if and only if
$$
a^2 = -\frac{c}{\int_ {\mathbb{R}}h(x)\rd x },
$$
which equals to
$$
a = \pm \sqrt{-\frac{c}{\gamma}}.
$$
\end{proof}

In the following example, we discuss a case where $f$ does not satisfy Assumption \ref{assmp:E_partial_x_neq_0}.
\begin{example}
     Let $\sigma(x) = \sigma$ be a constant and $f$ be linear, then $f$ does not satisfy Assumption \ref{assmp:E_partial_x_neq_0} if and only if: \begin{equation}\label{eq:example_11}
\int_ {\mathbb{R}}\frac{p'_ x(x)^3}{p_ x(x)^2}\rd x = 0.\end{equation}
\end{example}

When $X$ follows a symmetric distribution, Eq.~\eqref{eq:example_11} holds. Specifically, when $X$ follows a Gaussian distribution, Eq.~\eqref{eq:example_11} holds, implying that $f$ fails to satisfy Assumption \ref{assmp:E_partial_x_neq_0}.
This indicates that the Gaussian Linear model is unidentifiable by our algorithm, consistent with previous research such as \citep{hoyer2008nonlinear}. However, as illustrated in example~\ref{eg:hetero}, if $\sigma(x)$ is replaced by a non-constant function, the model becomes identifiable for most cases. The proof of this example is provided as follows.

\begin{proof}
    We prove it by directly simplifying the formula in Assumption \ref{assmp:E_partial_x_neq_0}. Notice that when $\sigma(x) = \sigma$ is a constant, $f(x) = ax+b$ is linear, $C(x) = 0$, $B(x) = -\frac{a}{\sigma}$, $A(x) = \frac{p'_ x(x)}{p_ x(x)}$. So $f$ does not satisfy Assumption \ref{assmp:E_partial_x_neq_0} equals to 
    \begin{equation*}
        \int_ {\mathbb{R}}\frac{p'_ x(x)^3}{p_ x(x)^2}\rd x +\frac{3a^2}{\sigma^2}\int_ {\mathbb{R}}\frac{p_n'(u)^2}{p_n(u)}\int_ {\mathbb{R}}p'_ x(x)\rd x = 0.
    \end{equation*}
    Notice that $\int_ {\mathbb{R}}p'_ x(x)\rd x = 0$, it then equals to $$
     \int_ {\mathbb{R}}\frac{p'_ x(x)^3}{p_ x(x)^2}\rd x = 0,
    $$
    thus completing the proof.
\end{proof}

\section{Discussion on the Skewness Definition}
\label{appen:skew}
To enhance the flexibility of our algorithm, we redefine the skewness of a random variable \(W\) as \( Skew_{\psi}(W) = \mathbb{E}[\psi(W - \mathbb{E}W)]\), employing a nonlinear odd test function \( \psi \) that satisfies \( \psi(-s) = -\psi(s) \). Specifically, in Theorem~\ref{thm:identifiability}, the chosen function is \( \psi(s) = s^3 \). Note that for any symmetric random variable \( W \), \( Skew_{\psi}(W) = 0 \) for any odd function \( \psi \), thus upholding the validity of Eq.~\eqref{eq:identifiable_1} under this generalized definition. To ensure that a \( \psi \)-analog of Eq.~\eqref{eq:identifiable_2} also holds in the anti-causal direction, it is essential to verify the \( \psi \)-analog of Assumption \ref{assmp:E_partial_x_neq_0}. A necessary condition for $\psi$ is nonlinearity due to Fact \ref{fact:score_mean_0}. The choice of the odd test function affects the robustness of the algorithm: an effective odd test function captures the asymmetry of the random variable while requiring less sample complexity.

\section{Proofs of supporting lemmas and facts}
\label{appen:fact}
\begin{proof}[Proof of Fact~\ref{fact:score_conditional}]
    Without loss of generality, it suffices to provide the proof when $p\in C^1(\R^2)$. Notice that the derivative of the log conditional density translates to the derivative of the log joint distribution, then
 \begin{align*}
 \frac{\partial}{\partial x_{i}} \log p(x_{i}|x_{-i}) &= \frac{\partial}{\partial x_{i}} \log \lrp{\frac{p(x_1,x_2,\cdots,x_{d})}{p(x_{-i})}}\\ 
 &=\frac{\partial}{\partial x_{i}} \lrp{\log p(x_1,x_2,\cdots,x_{d}) - \log p(x_{-i})} \\
 &= \frac{\partial}{\partial x_{i}} \log p(x_1,x_2,\cdots,x_{d}).
 \end{align*}
\end{proof}

\begin{proof}[Proof of Fact~\ref{fact:score_mean_0}]
    \begin{align*}
    \E \lrb{\frac{\partial}{\partial x_{i}} \log p(X_1,X_2,\cdots,X_{d})}&=\int_{\R^d}\frac{\frac{\partial}{\partial x_{i} }p(x_1,x_2,\cdots,x_{d})}{ p(x_1,x_2,\cdots,x_{d})} p(x_1,x_2,\cdots,x_{d}) \rd x_{-i}\rd x_{i} \\
    &=\int_{\R^{d-1}}\int_{\R}\frac{\partial}{\partial x_{i} }  p(x_1,x_2,\cdots,x_{d})\rd x_{i} \rd x_{-i}.
\end{align*}
Since $p$ is continuously differentiable, we apply the Newton-Leibniz formula to $x_i$. For any $(x_1,\cdots,x_{i-1},x_{i+1},\cdots x_d)\in \R^{d-1}$, we have
$$\int_{\mathbb{R}}\frac{\partial}{\partial x_i}p(x_1,\cdots, x_i, \cdots,x_d) d x_i = p(x_1,\cdots, x_i, \cdots,x_d) |^{x_i=+\infty}_{x_i = -\infty} = 0-0 = 0,$$ 
where $\lim_{x_i\rightarrow+\infty}p(x_1,\cdots, x_i, \cdots,x_d) =\lim_{x_i\rightarrow -\infty}p(x_1,\cdots, x_i, \cdots,x_d) =0 $ because $p$ is a probability density. Therefore, we have shown that $\E \lrb{\frac{\partial}{\partial x_{i}} \log p(X_1,X_2,\cdots,X_{d})} = 0$.
\end{proof}

We also note that the assumption $p_X(x)\in C^1(\mathbb{R}^d,\mathbb{R}_+)$ can be relaxed. %
For example, Fact 2 holds for uniform distribution on $[-1,1]$ despite having singularities at $-1$ and $1$, as the densities at these two endpoints cancel out after the Newton-Leibniz step. More generally, Fact~\ref{fact:score_mean_0} still holds when \( p \) is piecewise continuously differentiable with singularity points \( w_1< w_2< \ldots< w_m \), and \(\sum_{i=1}^m [p(w_i +) - p(w_i -)] = 0\). 

Attentive readers may observe, however, that for the exponential distribution (\( p_{\exp}(x) = \lambda e^{-\lambda x} \) for \( x \geq 0 \) and 0 otherwise), Fact~\ref{fact:score_mean_0} does not hold. This is because of its singularity at $x=0$, where $p_{\exp}(0-)=0$ and $p_{\exp}(0+)=1$.

\section{Proof of Theorem~\ref{thm:identifiability}}
\label{appen:proo_identifiability}
\begin{assumption}[Regularity assumption for $p_x,p_n$]\label{assump:id_regu_p}$p_n,p_x \in C^1$,
\begin{enumerate} 
    \item  $\int_{\mathbb {R}}\frac{\abs{(p_n'(u))^3}}{p_n^2(u)} du <\infty$,
    \item $\int_{\R}\abs{\frac{p_x'(x)^3}{p_x(x)^2}}dx<\infty$, $\int_{\mathbb {R}}\frac{\abs{(p_n'(u))^3u^3}}{p_n^2(u)} du <\infty$.
\end{enumerate}
\end{assumption}
\begin{assumption}[Regularity assumption for $f$] \label{assump:id_regu_f} $f\in C^1$,
$$\int_{\R}\abs{f'(x)^3}p_x(x)\rd x<\infty.$$
\end{assumption}

\begin{remark}
    Assumption \ref{assump:id_regu_p} and \ref{assump:id_regu_f} ensures all the integrals we discuss here are well-defined. 
\end{remark}
\begin{proof}[Proof of Theorem~\ref{thm:identifiability}]
We first prove Eq.~\eqref{eq:identifiable_1}. As $Y = f(X)+\sigma(X)N$, the density of $(X,Y)$ should be
\begin{equation*}
    p(x,y) = p_x(x)p(y|x) = \frac{p_x(x)}{\sigma(x)}p_n\left(\frac{y-f(x)}{\sigma(x)}\right).
\end{equation*}
\begin{align*}
\mathbb{E}_{X,Y}\left(\partial_y \log p(X,Y)\right)^3&=
\int_{\mathbb{R}^2}\frac{p_x(x)}{\sigma(x)^4}\frac{\left(p_n'\left(\frac{y-f(x)}{\sigma(x)}\right)\right)^3}{p_n^2\left(\frac{y-f(x)}{\sigma(x)}\right)}\rd x\rd y \\
&= \int_{\mathbb{R}^2}\frac{p_x(x)}{\sigma(x)^3}\frac{\left(p_n'(u)\right)^3}{p_n^2(u)}\rd x\rd u \\
& = \int_{\mathbb{R}}\frac{p_x(x)}{\sigma(x)^3}  \rd x \int_{\mathbb {R}}\frac{(p_n'(u))^3}{p_n^2(u)} \rd u   \\
&=0.
\end{align*} 

For Eq. \eqref{eq:identifiable_2}, we first expand it explicitly:
\begin{align}
\mathbb{E}_{X,Y}\left(\partial_x \log p(X,Y)\right)^3&=
\int_{\mathbb{R}^2}\bigg[
\frac{p'_x(x)}{p_x(x)}-\frac{\sigma'(x)}{\sigma(x)}-\frac{p_n'(u)}{p_n(u)}\left(\frac{f'(x)}{\sigma(x)}+\frac{\sigma'(x)u}{\sigma(x)^2}\right)\bigg]^3p_x(x)p_n(u)\rd x\rd u\\
&= \int_{\mathbb{R}}p_x(x)\int_{\mathbb{R}} \bigg[A(x)+B(x)\frac{p_n'(u)}{p_n(u)}+C(x)\frac{p_n'(u)u}{p_n(u)}\bigg]^3p_n(u)\rd u \rd x.\label{eq:E_partial_x}
\end{align}
where $A(x) = \frac{p'_x(x)}{p_x(x)}-\frac{\sigma'(x)}{\sigma(x)}$, $B(x) = -\frac{f'(x)}{\sigma(x)} $ and $C(x) =\frac{\sigma'(x)}{\sigma(x)^2} $, as is denoted in Assumption~\ref{assmp:E_partial_x_neq_0}. \\
As $p_n(u) = p_n(-u)$, we derive $\int_{\R}\frac{p_n'(u)^3}{p_n(u)^2} du  =  \int_{\R}p_n'(u) du =  0 $, so we can can expand Eq. \eqref{eq:E_partial_x} and get 
\begin{align}
\mathbb{E}_{X,Y}\left(\partial_x \log p(X,Y)\right)^3 &= \int_{\mathbb{R}}p_x(x)\int_{\mathbb{R}} \Bigg[\left(A(x)+C(x)\frac{p_n'(u)u}{p_n(u)}\right)^3 \rd x \\
&~~+3\left(B(x)\frac{p_n'(u)}{p_n(u)}\right)^2\left(A(x)+C(x)\frac{p_n'(u)u}{p_n(u)}\right)\Bigg]p_n(u)\rd u. \label{eq: partial_x}
\end{align}
By assumption, $\mathbb{E}_{X,Y}\left(\partial_x \log p(X,Y)\right)^3 \neq 0$, thus completing the proof.
    
\end{proof}

\section{Proof of corollary \ref{coro:multi_d}}
\label{sec:proof_mutivariate}

\begin{proof}[Proof of corollary \ref{coro:multi_d} ]
As in prior work, we assume all integrals below are well-defined. We first compute that: for every $k\in \{1,2,\cdots d\}$, \begin{equation}\label{eq:equation_12}
    \partial_ {x_ k}\log p(x) = \frac{p'_ k}{\sigma_ kp_ k}-\sum_ {i\in child(k)}\left( \frac{p'_ i}{p_ i}\cdot\frac{\sigma_ i\partial_ {x_ k}f_ i+(x_ i-f_ i)\partial_ {x_ k}\sigma_ i}{\sigma_ i^2}+\frac{\partial_ {x_ k}\sigma_ i}{\sigma_ i}\right).
    \end{equation}
It suffices to notice that the joint distribution of $X$ can be written as:
\begin{align*}
p(x) & =\prod_ {i=1}^d p\left(x_ i | \mathrm{pa}_ i(x)\right) \\
\log p(x) & =\sum_ {i=1}^d \log p\left(x_ i \mid \mathrm{pa}_ i(x)\right) \\
& =\sum_ {i=1}^d\log \left({ p_ {N_ i}\left(\frac{x_ i-f_ i\left(\mathrm{pa}_ i(x)\right)}{\sigma_ i(\mathrm{pa}_ i(x))}\right)}\right) - \sum_ {i=1}^d\log{\sigma_ i(\mathrm{pa}_ i(x))}.
\end{align*}    
We then differentiate with respect to $x_k$ and directly derive Eq. \eqref{eq:equation_12}.

We are now prepared to prove the first assertion. Considering a sink $X_ k$, we know that $\text{child}(k) = \emptyset$. Therefore, by the formula in the first line of the proof: $$
\partial_ {x_ k}\log p(x) = \frac{p'_ k}{\sigma_ kp_ k}.
$$ 
Then we can compute:
\begin{align*}
\mathbb{E}_ {X}\left(\partial_ {x_ k} \log p(X)\right)^3&= \int_ {\mathbb{R}^d}\left[ \frac{ p'_ {N_ k}\left(\frac{x_ k-f_ k\left(\mathrm{pa}_ k(x)\right)}{\sigma_ k(\mathrm{pa}_ k(x))}\right)}{\sigma_ k(\mathrm{pa}_ k(x))\cdot  p_ {N_ k}\left(\frac{x_ k-f_ k\left(\mathrm{pa}_ k(x)\right)}{\sigma_ k(\mathrm{pa}_ k(x))}\right)} \right]^3\prod_ {i=1}^d p\left(x_ i | \mathrm{pa}_ i(x)\right) \rd x\\
&=
\int_ {\mathbb{R}^d}\frac{p(\mathrm{pa}_ k(x))}{\sigma_ k(\mathrm{pa}_ k(x))^4}\cdot \frac{\left(p'_ {N_ k}\left(\frac{x_ k-f_ k\left(\mathrm{pa}_ k(x)\right)}{\sigma_ k(\mathrm{pa}_ k(x))}\right)\right)^3}{p^2_ {N_ k}\left(\frac{x_ k-f_ k\left(\mathrm{pa}_ k(x)\right)}{\sigma_ k(\mathrm{pa}_ k(x))}\right)}\prod_ {i\neq k} p\left(x_ i | \mathrm{pa}_ i(x)\right) \rd x\\
&= \int_ {\mathbb{R}^d}\frac{p(\mathrm{pa}_ k(x))}{\sigma_ k(\mathrm{pa}_ k(x))^3}\cdot\frac{\left(p_ {N_ k}'(u)\right)^3}{p_ {N_ k}^2(u)}~\rd u~ \prod_ {i\neq k}( p\left(x_ i | \mathrm{pa}_ i(x)\right) \rd x_ i) \\
&= \int_ {\mathbb{R}^{d-1}}\frac{p(\mathrm{pa}_ k(x))}{\sigma_ k(\mathrm{pa}_ k(x))^3}\cdot \prod_ {i\neq k}( p\left(x_ i | \mathrm{pa}_ i(x)\right) \rd x_ i) \int_ {\mathbb{R}}  \frac{\left(p_ {N_ k}'(u)\right)^3}{p_ {N_ k}^2(u)}~\rd u \\
&=0.
\end{align*}

We obtain the last equality because $ \int_ {\mathbb{R}}  \frac{\left(p_ {N_ k}'(u)\right)^3}{p_ {N_ k}^2(u)}~du = 0$. More explicitly, due to the assumption that $p_ {N_ k}'(u)$ is symmetric, $\frac{\left(p_ {N_ k}'(u)\right)^3}{p_ {N_ k}^2(u)}$ is a odd function and thus its integral is $0$.

We then prove the second assertion. Recall that 
$$\partial_ {x_ k}\log p(x) = \frac{p'_ k}{\sigma_ kp_ k}-\sum_ {i\in child(k)}\left( \frac{p'_ i}{p_ i}\cdot\frac{\sigma_ i\partial_ {x_ k}f_ i+(x_ i-f_ i)\partial_ {x_ k}\sigma_ i}{\sigma_ i^2}+\frac{\partial_ {x_ k}\sigma_ i}{\sigma_ i}\right).$$

Therefore, $$
\mathbb{E}_ {X}\left(\partial_ {x_ i} \log p(X)\right)^3  = \int_ {\mathbb{R}^d}\bigg[\frac{p'_ k}{\sigma_ kp_ k}-\sum_ {i\in child(k)}\left( \frac{p'_ i}{p_ i}\cdot\frac{\sigma_ i\partial_ {x_ k}f_ i+(x_ i-f_ i)\partial_ {x_ k}\sigma_ i}{\sigma_ i^2}+\frac{\partial_ {x_ k}\sigma_ i}{\sigma_ i}\right)\bigg]^3 \prod_ {j=1}^d \frac{p_ j}{\sigma_ j}\rd x.
$$
Thanks to our assumption \eqref{eq:assumption_of_multi_d}, we derive that:
$$
\mathbb{E}_ {X}\left(\partial_ {x_ i} \log p(X)\right)^3 \neq 0,$$
thus completing the proof.
\end{proof}

\section{Proof of Proposition \ref{prop:zero_measure}}\label{Appen:finite_dim_sp_0_measure}

\begin{lemma}\label{lm:h_nonzero}
   If $\int_{\R}\brac{p_x(x)^2+p'_x(x)^2}e^{\mu x^2}\rd x<\infty$ for some $\mu\in(0,+\infty)$,  recalling 
   $$ h(x) := 3\brac{\int_{\R}\frac{p_n'(u)^2}{p_n(u)}\rd u}\frac{A(x)p_x(x)}{\sigma(x)^2}+3\brac{\int_{\R}\frac{p_n'(u)^3u}{p_n(u)^2}\rd u}\frac{C(x)p_x(x)}{\sigma(x)^2},$$
   then we have :
   $$
   \int_{\R} h(x)^2e^{\mu x}\rd x<\infty.
   $$
\end{lemma}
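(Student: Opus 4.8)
The plan is to dominate the integrand $h(x)^2 e^{\mu x}$ pointwise by a constant multiple of $\brac{p_x(x)^2+p_x'(x)^2}e^{\mu x^2}$, which is integrable by hypothesis. First I would abbreviate the two noise integrals in the definition of $h$ as $K_1:=\int_\R \frac{p_n'(u)^2}{p_n(u)}\rd u$ and $K_2:=\int_\R \frac{p_n'(u)^3 u}{p_n(u)^2}\rd u$; both are finite constants that do not depend on $f$ or $x$. (Finiteness of $K_1$ follows from H\"older's inequality together with $\int_\R p_n=1$, and finiteness of $K_2$ by splitting over $\abs{u}\le 1$ and $\abs{u}>1$ and invoking the two bounds of Assumption~\ref{assump:id_regu_p}.) Substituting the identities $A(x)p_x(x)=p_x'(x)-\frac{\sigma'(x)}{\sigma(x)}p_x(x)$ and $C(x)p_x(x)=\frac{\sigma'(x)}{\sigma(x)^2}p_x(x)$ into $h$ rewrites it as
\[
h(x)=\frac{3K_1 p_x'(x)}{\sigma(x)^2}-\frac{3K_1\sigma'(x)p_x(x)}{\sigma(x)^3}+\frac{3K_2\sigma'(x)p_x(x)}{\sigma(x)^4},
\]
a linear combination of $p_x'(x)$ and $p_x(x)$ whose coefficients are built only from $\sigma$ and $\sigma'$.

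Next I would exploit the structural hypotheses on $\sigma$ from the model setup: since $\sigma(x)\ge r>0$ and $\sigma,\sigma'\in L^\infty$, each coefficient above is uniformly bounded, so there are explicit constants $C_1,C_2$, depending only on $r$, $\|\sigma'\|_\infty$, $K_1$, and $K_2$, such that $\abs{h(x)}\le C_1\abs{p_x'(x)}+C_2\abs{p_x(x)}$ for every $x$. Applying $(a+b)^2\le 2a^2+2b^2$ then gives the pointwise bound $h(x)^2\le 2C_1^2\,p_x'(x)^2+2C_2^2\,p_x(x)^2$.

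The only genuinely non-mechanical step is reconciling the weight $e^{\mu x}$ appearing in the conclusion with the Gaussian-type weight $e^{\mu x^2}$ furnished by the hypothesis. For this I would use the elementary pointwise inequality $e^{\mu x}\le e^{\mu/4}\,e^{\mu x^2}$, which holds for all $x\in\R$ because $\mu(x-x^2)$ is maximized at $x=\tfrac12$ with value $\mu/4$. Combining this with the bound on $h^2$ yields
\[
\int_\R h(x)^2 e^{\mu x}\rd x\le e^{\mu/4}\int_\R\brac{2C_1^2\,p_x'(x)^2+2C_2^2\,p_x(x)^2}e^{\mu x^2}\rd x<\infty,
\]
where finiteness is exactly the hypothesis $\int_\R\brac{p_x(x)^2+p_x'(x)^2}e^{\mu x^2}\rd x<\infty$. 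I expect the main obstacle to be precisely this weight comparison rather than the routine bounding, since for large negative $x$ the lighter weight $e^{\mu x}$ alone gives no decay, and it is the quadratic exponent in the hypothesis that makes the estimate close.
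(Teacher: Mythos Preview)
Your argument is correct and follows essentially the same route as the paper's proof: bound $|h(x)|$ by a constant times $|p_x'(x)|+p_x(x)$ using the boundedness of $\sigma,\sigma'$ and the lower bound $\sigma\ge r$, then square and integrate against the hypothesized weight. You are in fact slightly more careful than the paper on the weight comparison step, supplying the explicit inequality $e^{\mu x}\le e^{\mu/4}e^{\mu x^2}$ where the paper writes the final inequality without recording the intervening constant.
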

\begin{proof}
    As $A(x) = \frac{p'_x(x)}{p_x(x)}-\frac{\sigma'(x)}{\sigma(x)}$ and $C(x) =\frac{\sigma'(x)}{\sigma(x)^2} $ and $\sigma, \sigma '$ are bounded and $\sigma(x)\geq r>0$, there exists a constant $b>0$ s.t. $\abs{A(x)}\leq   \abs{\frac{p'_x(x)}{p_x(x)}}+b$, $\abs{C(x)}\leq b$. Notice that $\int_{\R}\frac{p_n'(u)^2}{p_n(u)}\rd u$ and $\int_{\R}\frac{p_n'(u)^3u}{p_n(u)^2}\rd u$ are all constant, then there exists constant $c_1>0$ s.t.
    $$
    \abs{h(x)}\leq c_1\brac{ p_x(x)+ \abs{p_x'(x)}}.
    $$
    Therefore
     \begin{align*}
    \abs{h(x)}^2&\leq 2c_1^2\brac{ p_x(x)^2+ p_x'(x)^2},\\
  \implies \int_{\R} h(x)^2e^{\mu x}\rd x&\leq \int_{\R}\brac{p_x(x)^2+p'_x(x)^2}e^{\mu x^2}\rd x <\infty.
\end{align*}
\end{proof}

Defining $$L^2(\R,e^{\mu x^2}): = \set{h \in L^2(\R): \int_{\R} h(x)^2e^{\mu x}\rd x<\infty},$$ lemma \ref{lm:h_nonzero} equals to saying that $h\in L^2(\R,e^{\mu x^2})$. The following lemma is interesting as the integral is taken over $\R$ instead of a finite interval, and thus the Stone Weierstrass theorem cannot be applied.

\begin{lemma}\label{lm:Hermite_h=0}
    For $h\in  L^2(\R,e^{\mu x^2})$, if $\int_{\R}x^nh(x)dx = 0$ for every $n\in \N$, then $h = 0$ almost everywhere.
\end{lemma}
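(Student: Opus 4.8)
The plan is to reduce the statement to the uniqueness theorem for the Fourier transform, by showing that the hypothesis forces the two-sided Laplace transform of $h$ to be an entire function all of whose Taylor coefficients at the origin vanish. The Gaussian weight is exactly what powers this argument: because $h\in L^2(\R,e^{\mu x^2})$, pairing $h$ with the complementary Gaussian $e^{-\mu x^2}$ shows that $h$ is integrable not merely in $L^2$ but against $e^{a\abs{x}}$ for \emph{every} $a>0$, which is far stronger than square-integrability alone.

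First I would record finiteness of all the relevant integrals. By Cauchy--Schwarz,
$$
\int_\R \abs{x^n h(x)}\,\rd x \le \brac{\int_\R x^{2n}e^{-\mu x^2}\,\rd x}^{1/2}\brac{\int_\R h(x)^2 e^{\mu x^2}\,\rd x}^{1/2}<\infty ,
$$
using $h\in L^2(\R,e^{\mu x^2})$, so every moment is well defined (and, by hypothesis, zero). Replacing $\abs{x}^n$ by $e^{a\abs{x}}$ in the same splitting and using that $e^{2a\abs{x}-\mu x^2}$ is integrable for each fixed $a$ gives $\int_\R \abs{h(x)}e^{a\abs{x}}\,\rd x<\infty$ for all $a>0$; in particular $h\in L^1(\R)$.

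Next I would define $F(z):=\int_\R h(x)e^{zx}\,\rd x$ for $z\in\mathbb{C}$. The bound $\abs{h(x)e^{zx}}\le \abs{h(x)}e^{\abs{\mathrm{Re}\,z}\,\abs{x}}$ together with the integrability just established shows that the integral converges absolutely and, on each vertical strip $\abs{\mathrm{Re}\,z}\le a$, uniformly; the same domination justifies repeated differentiation under the integral sign, so $F$ is entire with $F^{(n)}(0)=\int_\R x^n h(x)\,\rd x$. By hypothesis these all vanish, hence every Taylor coefficient of $F$ at $0$ is zero, and by the identity theorem $F\equiv 0$ on $\mathbb{C}$. Restricting to the imaginary axis, $F(i\xi)=\int_\R h(x)e^{i\xi x}\,\rd x$ is, up to the sign convention, the Fourier transform of $h\in L^1(\R)$, and it is identically zero; by uniqueness of the Fourier transform, $h=0$ almost everywhere.

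The main obstacle is the third step: establishing that $F$ is genuinely entire and that term-by-term differentiation is legitimate. This is precisely where the Gaussian nature of the weight is indispensable, since it is the $x^2$ in the exponent (rather than a polynomial or one-sided exponential weight) that yields $\int_\R\abs{h}e^{a\abs{x}}\,\rd x<\infty$ for all $a$, equivalently that the associated moment problem is determinate. Thus the argument exploits the strength of the weight, not just membership in $L^2(\R)$; with a weaker weight the vanishing of all moments would not force $h$ to vanish, and the conclusion could fail.
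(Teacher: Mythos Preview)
Your argument is correct. The Cauchy--Schwarz step with the Gaussian weight is exactly what is needed to show $\int_\R |h(x)|e^{a|x|}\,\rd x<\infty$ for every $a>0$, and from there the two-sided Laplace transform $F(z)$ is entire with vanishing Taylor series at the origin, so $F\equiv 0$ and Fourier uniqueness finishes.

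The paper takes a different route: it invokes completeness of the Hermite polynomials. Writing $g(x)=h(x)e^{\mu x^2}$, the assumption $h\in L^2(\R,e^{\mu x^2})$ becomes $g\in L^2(\R,e^{-\mu x^2})$, and the vanishing of all moments $\int x^n h\,\rd x=\int x^n g\,e^{-\mu x^2}\,\rd x=0$ says that $g$ is orthogonal to every monomial, hence to every Hermite polynomial, in the weighted inner product; since the Hermite polynomials form an orthogonal basis of $L^2(\R,e^{-\mu x^2})$, this forces $g=0$. (The paper's phrasing that the $H_n$ are a basis of $L^2(\R,e^{\mu x^2})$ is a slip---polynomials are not even in that space---but the intended argument is this one.) Your approach is more self-contained, since Hermite completeness is itself typically proved via a moment-determinacy or Fourier argument of the same flavour as yours; the paper's approach is shorter once one is willing to cite that fact as known. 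Both proofs hinge on the same phenomenon you identify: the Gaussian weight makes the moment problem determinate, which is what fails for lighter weights.
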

\begin{proof}
    Without loss of generality, we assume $\mu = 1$. It is well known that $L^2(\R,e^{\mu x^2})$ is a Hilbert space. Then by knowledge of Hermite polynomials, there exist a series of polynomials $\set{H_n}_{n=1}^{\infty}$ which is a orthogonal basis of $L^2(\R,e^{\mu x^2})$. $\int_{\R}x^nh(x)dx = 0$ for every $n\in \N$ implies that $\int_{\R}H_n(x)h(x)dx = 0$ for every $n\in \N$, thus proving $h = 0$. 
\end{proof}

\begin{proof}[Proof of Proposition \ref{prop:zero_measure}]
   For every finite dimensional subspace $V\subset V_0$, $f\in S_{\text{nc}}(V)$ equals to 
     \begin{align}\label{eq:proof_prop_1}
        0 = \int \lrb{\left(A(x)+C(x)\tfrac{p_n'(u)u}{p_n(u)}\right)^3+3\left(B(x)\tfrac{p_n'(u)}{p_n(u)}\right)^2\left(A(x)+C(x)\tfrac{p_n'(u)u}{p_n(u)}\right)}p_n(u)p_x(x)\rd x \rd u.
    \end{align}
    In the above equation, only $B(x)$ involves $f(x)$. So we can simplified Eq. \eqref{eq:proof_prop_1} into \begin{equation} \label{eq:surface}
    \int_{\R}f'(x)^2h(x)\rd x+c = 0,
    \end{equation}
    where \begin{align*}
        \begin{cases}
    h(x) = 3\brac{\int_{\R}\frac{p_n'(u)^2}{p_n(u)}\rd u}\frac{A(x)p_x(x)}{\sigma(x)^2}+3\brac{\int_{\R}\frac{p_n'(u)^3u}{p_n(u)^2}\rd u}\frac{C(x)p_x(x)}{\sigma(x)^2},\\
    c = \int \left(A(x)+C(x)\tfrac{p_n'(u)u}{p_n(u)}\right)^3p_n(u)p_x(x)\rd x \rd u.
    \end{cases}
    \end{align*}
    Let $f_i$ ($1\leq i\leq d$) be a basis of $V$, $f = \sum_{i} a_i f_i$, then Eq. \eqref{eq:surface} equals to $$\sum_{1\leq i,j\leq d}a_ia_j\int_{\R}f_i'(x)f_j'(x)h(x)\rd x +c =  0.$$ Let $J_V(a_1,a_2\cdots, a_n):=\sum_{1\leq i,j\leq d}a_ia_j\int_{\R}f_i'(x)f_j'(x)h(x)\rd x +c$. $J$ is a polynomial and thus a real analytic function. Define 
$KerJ_{V}:=\set{(a_1,a_2,\cdots,a_d): J_{V}(a_1,a_2,\cdots, a_n) = 0}$, 
then \begin{align*}
    f\in S_{\text{nc}}(V) \iff (a_1,a_2,\cdots, a_n)\in Ker J_V.
\end{align*}
    
    We first consider the second case where $\int_{\R}\brac{p_x(x)^2+p'_x(x)^2}e^{\mu x^2}\rd x<\infty$ for some $\mu\in(0,+\infty)$. By lemma \ref{lm:h_nonzero}, $\int_{\R} h(x)^2e^{\mu x}\rd x<\infty$. If $\int_{\R}x^n h(x)dx = 0$ for every $n\in\N$, then by lemma \ref{lm:Hermite_h=0}, $h=0$ almost everywhere. Notice that $h$ is also continuous, implying that $h(x) = 0$ for every $x\in \R$. This contradicts with our assumption on $h$. So we know that there exist $M_0$, s.t. $\int_{\R}x^{M_0}h(x)\rd x \neq 0$. Therefore, for $M\geq M_0+1$, if we take $V$ to be the space of polynomials with degree smaller than $M$ and $f_i(x) = x^{i-1}$, then $J_{V}$ is a non-zero function. By basic knowledge of real analysis \cite{mityagin2015zerosetrealanalytic}[Proposition 0], $KerJ_{V}$ has zero measure. So $m(S_{\text{nc}}(V)) = m(KerJ_V) = 0$
    
    For the first case, if $c\neq 0$, then $J_V(0,0,\cdots0)=c\neq 0$. So $J_V$ is a non-zero function for every finite dimensional space $V\subset C^1(\R)$. Similar to the analysis above, we derive $m(S_{\text{nc}}(V)) = 0$ and thus completing the proof.
    
\end{proof}

\section{Proof of Proposition~\ref{prop:latent_id}}
\label{appen:proo_latent}

Let  $$h(x,v):= \int_{\R}  p(z)q_0(x-\phi_0(z))\sigma(x)^{-1}q_1\brac{\frac{v-\phi_1(z)}{\sigma(x)}} dz .$$
Under our assumption in \ref{sec:nonlinear}, it is easy to verify that $h$ is well-defined and belong to $C^{1,1}(\R^2)\cap L^{\infty}$.
Formally, we denote
\begin{align*}
\begin{cases}
A_3&= -\int_{\R^2}\frac{f'(x)^3\partial_vh(x,v)^3}{h(x,v)^2}dxdv,\\ 
A_2 &= 3\int_{\R^2}\frac{f'(x)^2\partial_vh(x,v)^2\partial_xh(x,v)}{h(x,v)^2}dxdv,\\ 
A_1 &= -3\int_{\R^2}\frac{f'(x)\partial_vh(x,v)\partial_xh(x,v)^2}{h(x,v)^2}dxdv,\\
A_0 &= \int_{\R^2}\frac{\partial_xh(x,v)^3}{h(x,v)^2}dxdv.
\end{cases}
.\end{align*}
The well-defineness of $A_i$ ($0\leq i\leq 3$) will be verified in the following lemma.

\begin{assumption}[Restriction on Model with a Latent Confounder]
\label{assump:A3_neq_0}
\begin{enumerate}
    \item  (Regularity assumption.) It exists an open subset $U\subset (0,+\infty)$ s.t. for $i\in \set{x,y}$ and every $\lambda\in U$, $\E_{X,Y}\abs{\brac{\partial_i \log p_{\lambda}(X,Y)}^3}<\infty$.
    \item $A_1^2+A_2^2+A_3^2\neq 0$.
\end{enumerate}
    
\end{assumption}

\begin{lemma} \label{lm:latent_polynomial}
    Under assumption \ref{assump:A3_neq_0}, $A_i$ ($i=0,1,2,3$) are all well-defined integral and 
     $$\E_{X,Y}\brac{\partial_x \log p_{\lambda}(X,Y)}^3= A_3\lambda^3+ A_2\lambda^2+A_1\lambda+A_0.$$
\end{lemma}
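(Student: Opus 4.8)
The plan is to reduce the whole computation to the auxiliary function $h$, observing that the parameter $\lambda$ enters the joint density only through a shear of the $y$-coordinate. First I would write the joint density by marginalizing over the latent variable $Z$: conditioning on $Z=z$ gives $X\mid z$ with density $q_0(x-\phi_0(z))$ and $Y\mid(x,z)$ with density $\sigma(x)^{-1}q_1\big((y-\lambda\tilde f(x)-\phi_1(z))/\sigma(x)\big)$, so that
\begin{equation*}
p_\lambda(x,y)=\int_\R p_z(z)\,q_0(x-\phi_0(z))\,\sigma(x)^{-1}q_1\Big(\tfrac{y-\lambda\tilde f(x)-\phi_1(z)}{\sigma(x)}\Big)\rd z=h\big(x,\,y-\lambda\tilde f(x)\big).
\end{equation*}
Hence $p_\lambda(x,y)=h(x,v)$ with $v:=y-\lambda\tilde f(x)$, and all dependence on $\lambda$ is carried by this substitution.

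Next I would differentiate and change variables. By the chain rule, writing $\partial_x h,\partial_v h$ for the partials of $h$ evaluated at $(x,v)$,
\begin{equation*}
\partial_x\log p_\lambda(x,y)=\frac{\partial_x h}{h}-\lambda\,\tilde f'(x)\,\frac{\partial_v h}{h}=:a+\lambda b,\qquad a:=\frac{\partial_x h}{h},\quad b:=-\tilde f'(x)\frac{\partial_v h}{h}.
\end{equation*}
Then $\E_{X,Y}\big[(\partial_x\log p_\lambda)^3\big]=\int_{\R^2}(a+\lambda b)^3\,p_\lambda(x,y)\,\rd x\,\rd y$, and applying the change of variables $(x,y)\mapsto(x,v)$, whose Jacobian determinant is $1$ and under which $p_\lambda(x,y)\,\rd x\,\rd y=h(x,v)\,\rd x\,\rd v$, gives $\int_{\R^2}(a+\lambda b)^3\,h\,\rd x\,\rd v$. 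Expanding the cube by the binomial theorem and grouping by powers of $\lambda$, the integrals of $a^3 h,\;3a^2b\,h,\;3ab^2 h,\;b^3 h$ are exactly $A_0,A_1,A_2,A_3$ (identifying the $f$ of the $A_i$ with $\tilde f$), which yields the claimed identity $\E_{X,Y}[(\partial_x\log p_\lambda)^3]=A_3\lambda^3+A_2\lambda^2+A_1\lambda+A_0$.

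The step I expect to require the most care is justifying that each $A_i$ is an individually well-defined (absolutely convergent) integral, since the expansion above a priori only controls the full integrand. Here I would use Assumption~\ref{assump:A3_neq_0}(1): for every $\lambda$ in the open set $U$ we have $\int_{\R^2}\big|(a+\lambda b)^3\big|\,h\,\rd x\,\rd v<\infty$. Since $U$ is open and hence infinite, I would pick four distinct values $\lambda_0,\dots,\lambda_3\in U$. Viewing $(a+\lambda b)^3$ as a cubic in $\lambda$ with monomial coefficients $a^3,3a^2b,3ab^2,b^3$, the Vandermonde matrix $[\lambda_j^{\,i}]_{0\le i,j\le 3}$ is invertible, so each monomial is a fixed finite linear combination $\sum_j c_{ij}(a+\lambda_j b)^3$. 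Multiplying by $h\ge 0$ and using the triangle inequality, $\int|a^3|\,h\le\sum_j|c_{0j}|\int|(a+\lambda_j b)^3|\,h<\infty$, and likewise for the other three monomials, establishing finiteness of $A_0,\dots,A_3$. The remaining regularity — marginalization over $Z$, differentiation under the integral defining $\partial_x\log p_\lambda$, and Fubini together with the change of variables — follows from $h\in C^{1,1}(\R^2)\cap L^\infty$ and the stated conditions on $\tilde f,\sigma,\phi_i,q_i,p_z$.
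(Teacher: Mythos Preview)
Your proposal is correct and follows essentially the same approach as the paper: write $p_\lambda(x,y)=h(x,y-\lambda\tilde f(x))$, change variables $(x,y)\mapsto(x,v)$ to obtain $\int_{\R^2}\frac{(\partial_x h-\lambda\tilde f'\partial_v h)^3}{h^2}\,\rd x\,\rd v$, expand as a cubic in $\lambda$, and then use four distinct values $\lambda_0,\dots,\lambda_3\in U$ together with the invertibility of the Vandermonde matrix to deduce each coefficient function lies in $L^1(\R^2)$ from Assumption~\ref{assump:A3_neq_0}(1). The only cosmetic difference is that the paper works directly with the functions $g_i(x,v)$ rather than your shorthand $a,b$, but the argument is identical.
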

\begin{proof}
    \begin{align*}
     \E_{X,Y}\brac{\partial_x \log p_{\lambda}(X,Y)}^3 &=
     \int_{\R^2} \brac{\partial_x\log p_{\lambda}(x,y)}^3 p_{\lambda}(x,y)dxdy\\
        &= \int_{\R^2} \frac{\brac{\partial_x p_{\lambda}(x,y)}^3 }{p_{\lambda}(x,y)^2}dxdy \\
 & =  \int_{\R^2} \frac{\brac{\partial_x h(x,v)-\lambda f'(x)\partial_v h(x,v)}^3 }{h(x,v)^2}dxdv \\
,
\end{align*}
Consequently we can write the integrand as: \begin{align*}
    \frac{\brac{\partial_x h(x,v)-\lambda f'(x)\partial_v h(x,v)}^3 }{h(x,v)^2} = \sum_{i=0}^3 g_i(x,v)\lambda^i,
\end{align*}
where
\begin{align*}
\begin{cases}
g_3(x,v)&= -\frac{f'(x)^3\partial_vh(x,v)^3}{h(x,v)^2},\\ 
g_2(x,v) &= 3\frac{f'(x)^2\partial_vh(x,v)^2\partial_xh(x,v)}{h(x,v)^2},\\ 
g_1(x,v) &= -3\frac{f'(x)\partial_vh(x,v)\partial_xh(x,v)^2}{h(x,v)^2},\\
g_0(x,v)&= \frac{\partial_xh(x,v)^3}{h(x,v)^2}.
\end{cases}
\end{align*}
As $h(x,v)>0$, $g_i$ are continuous. Taking $\lambda_k\in U$ ($0\leq k\leq 3$) s.t. $\lambda_i\neq \lambda_j$ for $i\neq j$. By assumption \ref{assump:A3_neq_0}, for $0\leq k\leq 3$, $\int_{\R^2}\abs{\sum_{i=0}^3 g_i(x,v)\lambda_k^i}dxdv<\infty$, i.e. $\sum_{i=0}^3 \lambda_k^i g_i\in L^1(\R^2)$. Notice that the matrix $A=(A_{ik})_{0\leq i,k\leq 3}$ such that $A_{ik} = \lambda_i^k$ is a Vandermonde matrix and thus invertible. Then since $L^1{(\R^2)}$ is a linear space, we conclude that $g_i\in L^1{(\R^2)}$ for $0\leq i\leq 3$. Consequently $A_i = \int_{\R^2}g(x,v)dxdv$ is well-defined and 
 $$\E_{X,Y}\brac{\partial_x \log p(X,Y)}^3= A_3\lambda^3+ A_2\lambda^2+A_1\lambda+A_0.$$
\end{proof}

\begin{proof}[Proof of Proposition~\ref{prop:latent_id}] 
    
\begin{align*}
p(x,y,z) = p(z)p(x|z)p(y|x,z) = p(z)q_0(x-\phi_0(z))\sigma(x)^{-1}q_1\brac{\frac{y-\lambda f(x)-\phi_1(z)}{\sigma(x)}}.
\end{align*}
We write $p_{\lambda}(x,y)$ as $p(x,y)$ for short, therefore,
\begin{align*}
p(x,y) =\int_{\R}  p(z)q_0(x-\phi_0(z))\sigma(x)^{-1}q_1\brac{\frac{y-\lambda f(x)-\phi_1(z)}{\sigma(x)}} dz.
\end{align*}

Let $h(x,v):= \int_{\R}  p(z)q_0(x-\phi_0(z))\sigma(x)^{-1}q_1\brac{\frac{v-\phi_1(z)}{\sigma(x)}} dz $. Notice that $h$ is independent of $\lambda $, $f$ and $h\in C^{1,1}(\R^2)$ under our assumption. As $h(x,y-\lambda f(x))= p(x,y)$, we derive: \begin{align*}
\begin{cases}
    \partial_y p(x,y) &= \partial_v h(x,y-\lambda f(x)),  \\ 
     \partial_x p(x,y) &= \partial_x h(x,y-\lambda f(x)) -\lambda f'(x)\partial_v h(x,y-\lambda f(x)).
     \end{cases}
\end{align*}

 Our first observation is that: $\E_{X,Y}\brac{\partial_y \log p(X,Y)}^3 $ is independent of $\lambda $ and $f$, as is shown below:
\begin{align*}
    \E_{X,Y}\brac{\partial_y \log p(X,Y)}^3 &=  \int_{\R^2} \brac{\partial_y\log p(x,y)}^3 p(x,y)dxdy\\
    &= \int_{\R^2} \frac{\brac{\partial_y p(x,y)}^3 }{p(x,y)^2}dxdy \\
\text{(By change of variables)} & =  \int_{\R^2} \frac{\brac{\partial_y p(x,v+\lambda f(x))}^3 }{p(x,v+\lambda f(x))^2}dxdv \\
& = \int_{\R^2} \frac{\brac{\partial_v h(x,v)}^3 }{h(x,v)^2}dxdv.
\end{align*}
On the other hand, by lemma \ref{lm:latent_polynomial} we get:
\begin{align*}
     \abs{\E_{X,Y}\brac{\partial_x \log p(X,Y)}^3} = \abs{A_3\lambda^3+ A_2\lambda^2+A_1\lambda+A_0}.
\end{align*}
Consequently, by assumption \ref{assump:A3_neq_0} $\abs{\E_{X,Y}\brac{\partial_x \log p(X,Y)}^3}\rightarrow +\infty$ as $\lambda\rightarrow+\infty$, thus there exists a constant $\lambda^*$ s.t. for every $\lambda\geq \lambda^*$:
$$
\abs{\E_{X,Y}\brac{\partial_x \log p(X,Y)}^3}>
\abs{\E_{X,Y}\brac{\partial_y \log p(X,Y)}^3}.$$
\end{proof}

\begin{remark}
    In particular, if $A_3\neq 0$, let $R = \abs{\E_{X,Y}\brac{\partial_y \log p(X,Y)}^3}$, we can set $\lambda^* = 1+\max\set{\frac{3\abs{A_2}}{\abs{A_3}},\sqrt{\frac{3\abs{A_1}}{\abs{A_3}}},\sqrt[3]{\frac{3\abs{A_0}+R}{\abs{A_3}}}}$. It is easy to verify that $$\abs{A_3}\lambda^3-\abs{A_2}\lambda^2-\abs{A_1}\lambda- \abs{A_0}> R,$$ when 
$\lambda\geq\lambda^*$.
\end{remark}

\section{Explicit computation of example \ref{eg:ANM_work}}\label{appen:example_ANM}
$p(x,y,z)  \sim \exp\brac{-\frac{z^2}{2}-\frac{(x-z)^2}{2}-\frac{(y-z-f(x)^2)}{2}}$, 
\begin{align*}
p(x,y) = \int_{\R}p(x,y,z)dz &= C \exp\brac{-\frac{2}{3}\brac{x^2+(y-f(x))^2-x(y-f(x))}}\\
& = C\exp\brac{-\frac{2}{3}\brac{y-\frac{x}{2}-f(x)}^2-\frac{x^2}{2}},
\end{align*}
$C = \frac{1}{\sqrt{3}\pi}$ is a normalization constant. Therefore,
\begin{align*}
    \begin{cases}
        \partial_y \log p(x,y)& = -\frac{4}{3}\brac{y-\frac{x}{2}-f(x)},\\
        \partial_x\log p(x,y)& = \frac{4}{3}\brac{y-\frac{x}{2}-f(x)}\brac{\frac{1}{2}+f'(x)}-x \\
        & = \frac{2}{3}\brac{(1+2f'(x))y-f(x)-2x-xf'(x)-2f(x)f'(x)}\\
        & = \frac{2}{3} \brac{\alpha(x)y-\beta(x)},
    \end{cases}
\end{align*}
where $\alpha(x) = 1+2f'(x) $ and $\beta(x) = f(x)+2x+xf'(x)+2f(x)f'(x)$.
\begin{align*}
    \textit{Skew}_y &= \frac{64C}{27}\abs{\int_{\R}\int_{\R}(y-\frac{x}{2}-f(x))^3 e^{-\frac{2}{3}\brac{y-\frac{x}{2}-f(x)}^2} \rd y~e^{-\frac{x^2}{2}}\rd x } \\
    &=\frac{64C}{27}\abs{\int_{\R}\int_{\R}u^3 e^{-\frac{2}{3}u^2} \rd u~e^{-\frac{x^2}{2}}\rd x } =  0.
\end{align*}
On the other hand,
\begin{align*}
      \textit{Skew}_x &= \frac{8C}{27}\abs{\int_{\R}\int_{\R}\brac{\alpha(x)y-\beta(x)}^3e^{-\frac{2}{3}\brac{y-\frac{x}{2}-f(x)}^2} \rd y~e^{-\frac{x^2}{2}}\rd x}.
\end{align*}
Let $\gamma(x) = \frac{x}{2}+f(x) $. Notice that $\alpha(x)\gamma(x)-\beta(x) = -\frac{3x}{2}$
\begin{align*}
    &\int_{\R}\brac{\alpha(x)y-\beta(x)}^3e^{-\frac{2}{3}\brac{y-\frac{x}{2}-f(x)}^2} \rd y =  \int_{\R}\brac{\alpha(x)(y+\gamma(x))-\beta(x)}^3e^{-\frac{2}{3}y^2} \rd y \\
    &~~= 3\alpha(x)^2(\alpha(x)\gamma(x)-\beta(x))\int_{\R}y^2e^{-\frac{2}{3}y^2} \rd y + (\alpha(x)\gamma(x)-\beta(x))^3 \int_{\R}e^{-\frac{2}{3}y^2} \rd y \\
    &~~ = -\frac{9}{2}x\alpha(x)^2\int_{\R}y^2e^{-\frac{2}{3}y^2} \rd y  -\frac{27}{8}x^3\int_{\R}e^{-\frac{2}{3}y^2} \rd y.
\end{align*}

Let $\mu=\int_{\R}y^2e^{-\frac{2}{3}y^2} \rd y = \frac{3\sqrt{6\pi}}{8}$, $\int_{\R}e^{-\frac{2}{3}y^2} \rd y$.  \begin{align*}
      \textit{Skew}_x &= \frac{8C}{27}\abs{\int_{\R}\brac{-\frac{9\mu}{2}x\alpha(x)^2 - \frac{27\mu_0}{8}x^3 }e^{-\frac{x^2}{2}}\rd x} \\
      & = \frac{4C\mu }{3}\abs{\int_{\R}x\alpha(x)^2e^{-\frac{x^2}{2}}\rd x} \\
        & = \frac{4C\mu}{3}\abs{\int_{\R}x(1+2f'(x))^2e^{-\frac{x^2}{2}}\rd x} \\
         & = \frac{16C\mu}{3}\abs{\int_{\R}x(f'(x)+f'(x)^2)e^{-\frac{x^2}{2}}\rd x}\\
          & =\frac{2\sqrt{2\pi}}{\pi}\abs{\int_{\R}x(f'(x)+f'(x)^2)e^{-\frac{x^2}{2}}\rd x}.
\end{align*}

\newpage
\section{Additional experiments} \label{appendix:experiments}
\textbf{Runtime}
Table \ref{exp:runtime} presents the runtime, in seconds, for \texttt{SkewScore} using Sliced Score Matching (SSM) \citep{song2020sliced}, for various variable counts \(d = \{10, 20, 50\}\), where the number of edges is twice the number of variables. All experiments are on 12 CPU cores with 24 GB RAM.

\begin{table}[h] 
    \centering
     \caption{Wall-clock runtime (in seconds) of \texttt{SkewScore} on ER2 graph with \(1000\) samples.}
    \label{exp:runtime}
    \begin{tabular}{l c c c}
        \toprule
        \textbf{Method} & $d=10$ & $d=20$ & $d=50$  \\
        \midrule
        \texttt{SkewScore} (SSM) & \(310.24 \pm 7.17\)s   & \(704 \pm 19.82\)s & \(1825.92 \pm 39.23\)s \\
        \bottomrule
    \end{tabular}
   
\end{table}

\textbf{Effect of Non-Symmetric Noise} We conducted experiments adhering to the same bivariate HSNM data generation process outlined in Section \ref{sec:experiment}, except the noise now follows a Gumbel distribution. Our method remains relatively robust to Gumbel noise, a skewed distribution. The results in the table \ref{exp:gumbel} demonstrate that the reduction in accuracy for SkewScore is less significant than for HOST, which operates under the assumption of Gaussian noise. LOCI and HECI are designed to handle various noise types, which unsurprisingly results in strong performance. However, these two methods are designed for bivariate models.
\begin{table}[h!]
\caption{Accuracies of estimated causal relationships in the bivariate case with a non-symmetric Gumbel noise. The results are averaged over 100 independent runs.}
    \label{exp:gumbel}
    \centering
    \begin{tabular}{l c c c c}
        \toprule
        Methods & HNMs (Gumbel)\\
        \midrule
        SkewScore &89 \\
        LOCI & \textbf{100}\\
        HECI & 99\\
        HOST & 33\\
        DiffAN & 92 \\
        DAGMA &27 \\
        NoTears & 70\\
        \bottomrule
    \end{tabular}
    
\end{table}

\textbf{Performance on Additive Noise Model (ANM)} Though \texttt{SkewScore} is designed for heteroscedastic noise, it is still valid on nonlinear models with homoscedastic noise. We explore the algorithm's performance using data generated according to Additive Noise Models (ANM) with a GP as the nonlinear function and Gaussian noise.
After obtaining the topological order with \texttt{SkewScore}, CAM pruning \citep{buhlmann2014cam} is applied.  We calculate the topological order divergence and the structural Hamming distance (SHD) across varying dimensions, focusing on graphs with \(d\) and \(2d\) edges, represented as ER1 and ER2 graphs, respectively. The results, presented in Tables \ref{exp:ANM-order divergence} and \ref{exp:ANM-SHD}, indicate that \texttt{SkewScore} maintains adequate performance in ANM settings. Results are averaged over 10 independent runs.

\begin{table}[h!]
\caption{Topological order divergence across dimensionalities ($d\in\{10, 20, 50\}$) using data generated from ANMs.}
    \label{exp:ANM-order divergence}
    \centering
    \resizebox{\textwidth}{!}{%
    \begin{tabular}{@{}lcccccc@{}}
        \toprule
        & \multicolumn{3}{c}{ER1} & \multicolumn{3}{c}{ER2} \\
        \cmidrule(lr){2-4} \cmidrule(lr){5-7}
        Methods & d=10 & d=20 & d=50 & d=10 & d=20 & d=50 \\
        \midrule
        SkewScore & $\textbf{1.00} \pm \textbf{1.00}$ & $\textbf{3.10} \pm \textbf{0.70}$ & $\textbf{8.50} \pm \textbf{3.17}$ & $3.80 \pm 2.44$ & $11.00 \pm 4.12$ & $\textbf{23.40} \pm \textbf{5.82}$ \\
        HOST & $3.90 \pm 1.45$ & $8.40 \pm 2.11$ & $15.00 \pm 3.74$ & $11.70 \pm 2.45$ & $22.20 \pm 3.79$ & $51.60 \pm 3.77$ \\
        DiffAN & $1.80 \pm 0.98$ & $4.50 \pm 3.07$ & $14.89 \pm 2.18$ & $\textbf{2.70} \pm \textbf{1.19}$ & $\textbf{7.00} \pm \textbf{2.32}$ & $23.43 \pm 2.50$ \\
        DAGMAMLP & $3.70 \pm 1.10$ & $7.70 \pm 1.79$ & $15.70 \pm 4.65$ & $10.20 \pm 0.75$ & $17.10 \pm 1.51$ & $35.00 \pm 2.37$ \\
        \bottomrule
    \end{tabular}
    }
    
\end{table}

\begin{table}[h!]
\caption{SHD across different dimensionalities ($d\in\{10, 20, 50\}$) using data generated from ANMs.}
    \label{exp:ANM-SHD}
    \centering
    \resizebox{\textwidth}{!}{
    \begin{tabular}{@{}lcccccc@{}}
        \toprule
        & \multicolumn{3}{c}{ER1} & \multicolumn{3}{c}{ER2} \\
        \cmidrule(lr){2-4} \cmidrule(lr){5-7}
        \raisebox{1.5ex}[0pt]{Methods} & d=10 & d=20 & d=50 & d=10 & d=20 & d=50 \\
        \midrule
        SkewScore & $\mathbf{2.90\pm 1.47}$  & $\mathbf{7.08\pm2.32}$  & $\mathbf{19.32\pm7.26}$  & $8.26 \pm 2.42$  & $30.50 \pm 5.52$ & $\mathbf{76.50\pm5.50}$ \\
        HOST & $7.20 \pm 2.04$ & $17.30 \pm 3.95$ & $44.70 \pm 6.56$ & $17.80 \pm 2.44$ & $37.10 \pm 4.04$ & $95.70 \pm 6.42$ \\
        DiffAN & $3.50 \pm 1.40$ & $9.90 \pm 1.87$ & $23.56 \pm 1.27$ & $\mathbf{7.31 \pm 1.42}$ & $\mathbf{30.10 \pm 4.06}$ & $82.29 \pm 5.95$ \\
        DAGMAMLP & $6.90 \pm 1.04$ & $15.60 \pm 1.28$ & $37.60 \pm 4.18$ & $12.30 \pm 0.64$ & $37.50 \pm 1.20$ & $92.40 \pm 1.62$ \\
        \bottomrule
    \end{tabular}
    }
    
\end{table}

\textbf{Effect of Sample Size} Table \ref{tab:results} illustrates the order divergence for various models on synthetic data generated, measured across increasing sample sizes \(n = \{100, 1000, 10000\}\). This analysis was conducted with a fixed dimension \(d = 10\) with an equal number of edges, utilizing an ER graph model. The results indicate that our method, specifically \texttt{SkewScore}, demonstrates enhanced performance as the sample size increases. This trend suggests that \texttt{SkewScore} can efficiently leverage larger datasets to improve estimation accuracy. Notably, even with smaller sample sizes, \texttt{SkewScore} maintains commendable performance, underscoring its robustness and effectiveness in scenarios with limited data availability. This capability makes it particularly advantageous in practical applications where acquiring large volumes of data may be challenging.

\begin{table}[h]
\caption{Topological order divergence on synthetic data is analyzed as a function of sample size. We fix \(d = 10\) with the same number of edges, using an ER graph model, and vary the sample size across \(n = \{100, 1000, 10000\}\). Results are provided for both the HSNM and the HSNM with latent confounders.}
    \label{tab:results}
    \centering
    \resizebox{\textwidth}{!}{%
    \begin{tabular}{ccccccc}
        \toprule
        & \multicolumn{3}{c}{HSNMs} & \multicolumn{3}{c}{HSNMs with Confounders} \\
        \cmidrule(lr){2-4} \cmidrule(lr){5-7}
        & n=100 & n=1000 & n=10000 & n=100 & n=1000 & n=10000 \\
        \midrule
        SkewScore & $\mathbf{1.90 \pm 1.70}$ & $\mathbf{0.90 \pm 0.75}$ & $\mathbf{0.60 \pm 0.32}$ & $\mathbf{2.10\pm0.83}$ & $\mathbf{1.30\pm0.64}$ & $\mathbf{1.10 \pm 0.58}$ \\
        HOST & $4.40\pm1.91$ & $4.40\pm1.11$ & $3.80\pm1.47$ & $4.56\pm1.34$ & $4.20\pm1.60$ & $3.90\pm1.45$ \\
        DiffAN & $4.70\pm1.68$ & $2.30\pm1.19$ & $2.40\pm0.92$ & $3.60\pm1.50$ & $2.80\pm1.47$ & $2.75\pm1.39$ \\
        VarSort \citep{reisach2021beware} & $2.10\pm1.45$ & $2.00\pm1.38$ & $1.50\pm1.81$ & $2.20\pm1.54$ & $2.10\pm1.51$ & $1.90\pm1.37$ \\
        \bottomrule
    \end{tabular}
    }
    
\end{table}

\textbf{Real Data}
Our method's accuracy is $62.6\%$ in the Tübingen cause-effect pairs dataset, excluding multivariates and binary pairs (47, 52-55, 70, 71, 105 and 107) as \cite{immer2023identifiability}. According to the reported results from \cite{immer2023identifiability}, our method performance is slightly better than LOCI. GRCI \cite{strobl2023identifying} performs the best, and QCCD \cite{tagasovska2020distinguishing} also performs quite well. These methods are likely more robust to assumption violation in real data such as low sample complexity. The Tübingen cause-effect pairs dataset contains discrete datasets, where the score function is not well-defined, and datasets that do not satisfy the symmetric heteroscedastic assumption, which is challenging for our method.

\textbf{Multivariate extension of the case study in Section~\ref{sec:nonlinear}}  Finally, we conduct experiments to explore the multivariate extension of the case study in Section~\ref{sec:nonlinear}. The data generation process for experiments for multivariate HSNMs with latent confounders is based on \cite{montagna2023assumption}. Let $ Z \in \mathbb{R}^d $ represent the latent common cause. For each pair of distinct nodes $X_ i$ and $X_ j$, a Bernoulli random variable $ C_ {ij} \sim \text{Bernoulli}(\rho) $ is sampled, where $ C_ {ij} = 1$ implies a confounding effect $Z$ between $ X_ i $ and $ X_ j $. The parameter $ \rho$ determines the sparsity of confounded pairs in the graph. In our experiments, we choose $\rho=0.2$. Data generation follows a HSNM with latent confounders: \(X_i = f_i(\text{pa}_X(X_i)) + \phi_i(\text{pa}_Z(X_i)) + \sigma_i(\text{pa}_X(X_i)) \cdot N_i,\) for each variable \( i = 1, \ldots, d \). Here, \( \text{pa}_X(X_i) \) and \( \text{pa}_Z(X_i) \) represent the observed and hidden variable parents of \( X_i \), respectively. The model incorporates the Gaussian processes with an RBF kernel of unit bandwidth to generate the function \( f_i \), a linear function \( \phi_i \), and a conditional standard deviation \( \sigma_i \) modeled as a sigmoid function of the observed parents' linear combination. For a given number of observed variables $d$, we adjust the sparsity of the sampled graph by setting the average number of edges to either $d$ (ER1) or $2d$ (ER2).
The results are shown in \ref{exp:ER1-latent} and \ref{exp:ER2-latent}, where the lower topological order divergence means better performance. In general, \texttt{SkewScore}, Diffan, and NoTears have the best performances. In sparser graph (ER1), \texttt{SkewScore} performs the best when $d\in \{10,20,30,40\}$. In the ER2 graph, \texttt{SkewScore} and Diffan have similar good performances. The theoretical insights for this extension of the case study in Section~\ref{sec:nonlinear} will be the focus of future work.

\begin{table}[ht]
\caption{Topological order divergence: Synthetic data generated using multivariate HSNM with latent confounders with an ER1 graph.}
    \label{exp:ER1-latent}
\centering
\begin{tabular}{lccccc}
\toprule
Method & d=10 & d=20 & d=30 & d=40 & d=50 \\
\midrule
SkewScore & $\mathbf{1.20 \pm 0.98}$ & $\mathbf{2.40 \pm 2.33}$ & $\mathbf{4.60 \pm 1.80}$ & $\mathbf{6.60 \pm 3.29}$ & $7.20 \pm 2.64$ \\
HOST & $4.00 \pm 1.48$ & $8.10 \pm 1.64$ & $10.40 \pm 2.29$ & $14.90 \pm 2.66$ & $17.50 \pm 2.73$ \\
DiffAN & $2.80 \pm 1.17$ & $5.60 \pm 2.62$ & $8.60 \pm 2.91$ & $9.50 \pm 2.46$ & $11.60\pm3.67$ \\
DAGMAMLP & $6.00 \pm 2.00$ & $14.50 \pm 2.11$ & $19.30 \pm 2.49$ & $27.10 \pm 3.33$ & $34.50 \pm 4.13$ \\
NoTears & $1.50 \pm 1.02$ & $5.30 \pm 2.45$ & $5.00 \pm 2.57$ & $6.80 \pm 3.97$ & $\mathbf{6.30 \pm 2.45}$ \\
\bottomrule
\end{tabular}
    
\end{table}

\begin{table}[ht]
\caption{Topological order divergence: Synthetic data generated using multivariate HSNM with latent confounders with an ER2 graph.}
\label{exp:ER2-latent}
\centering
\resizebox{\textwidth}{!}{ %
\begin{tabular}{lccccc}
\toprule
Method & d=10 & d=20 & d=30 & d=40 & d=50 \\
\midrule
SkewScore & $\mathbf{3.00 \pm 1.61}$ & $8.60 \pm 2.11$ & $14.00 \pm 4.73$ & $\mathbf{16.60 \pm 2.42}$ & $25.80 \pm 4.26$ \\
HOST      & $11.80 \pm 2.44$ & $22.80 \pm 4.45$ & $32.90 \pm 5.47$ & $41.90 \pm 6.55$ & $51.80\pm6.27$ \\
DiffAN    & $3.30 \pm 2.10$ & $\mathbf{7.20 \pm 2.40}$ & $\mathbf{13.20 \pm 3.49}$ & $18.10 \pm 3.86$ & $\mathbf{22.10\pm4.44}$ \\
DAGMAMLP  & $17.60 \pm 1.80$ & $34.90 \pm 2.47$ & $54.20 \pm 2.18$ & $74.40 \pm 2.15$ & $91.20\pm3.19$ \\
NoTears   & $8.10 \pm 2.81$ & $16.70 \pm 6.81$ & $20.90 \pm 5.73$ & $25.30 \pm 7.06$ & $27.00\pm4.45$ \\
\bottomrule
\end{tabular}
}
\end{table}

The data used for the barplots in Figures~\ref{fig:bivariate} and \ref{fig:latent} are provided in Table~\ref{tab:num_barplot}. \texttt{SkewScore} maintains at least 95\% accuracy across all eight settings in Table~\ref{tab:num_barplot}, while other baselines fall below 85\% accuracy in at least one setting.

\begin{table}[h!]
    \centering
    \caption{Accuracy (\%) of causal direction estimation across different data generation processes for bivariate and latent-confounded triangular heteroscedastic noise models. These data were used for the barplots in Figures~\ref{fig:bivariate} and \ref{fig:latent}. Additional notes: \textcolor{green!35!black}{\textbf{Dark green bold}} indicates the best performance, and \textcolor{green!35!black}{dark green regular} indicates the second best performance.}
    \begin{tabular}{lcccccccc}
        \toprule
         & \multicolumn{4}{c}{Bivariate (no latent confounders)} & \multicolumn{4}{c}{Latent-confounded Triangular} \\
        \cmidrule(lr){2-5} \cmidrule(lr){6-9}
         & \makecell{GP-Sig \\ (Gauss)} & \makecell{GP-Sig \\ (\textit{t})} & \makecell{Sig-abs \\ (Gauss)} & \makecell{Sig-abs \\ (\textit{t})} & \makecell{GP-Sig \\ (Gauss)} & \makecell{GP-Sig \\ (\textit{t})} & \makecell{Sig-abs \\ (Gauss)} & \makecell{Sig-abs \\ (\textit{t})} \\
        \midrule
        SkewScore & \textcolor{green!35!black}{99} & \textcolor{green!35!black}{\textbf{100}} & \textcolor{green!35!black}{98} & \textcolor{green!35!black}{95} & \textcolor{green!35!black}{\textbf{98}} & \textcolor{green!35!black}{\textbf{97}} & 96 & \textcolor{green!35!black}{\textbf{100}} \\
        LOCI & \textcolor{green!35!black}{99} & \textcolor{green!35!black}{99} & \textcolor{green!35!black}{98} & \textcolor{green!35!black}{\textbf{96}} & 68 & 83 & \textcolor{green!35!black}{99} & \textcolor{green!35!black}{\textbf{100}} \\
        HECI & \textcolor{green!35!black}{\textbf{100}} & 95 & \textcolor{green!35!black}{\textbf{100}} & 64 & 54 & 64 & \textcolor{green!35!black}{99} & 83 \\
        HOST & \textcolor{green!35!black}{\textbf{100}} & 81 & \textcolor{green!35!black}{\textbf{100}} & 89 & \textcolor{green!35!black}{89} & \textcolor{green!35!black}{91} & \textcolor{green!35!black}{\textbf{100}} & \textcolor{green!35!black}{91} \\
        DiffAN & 92 & 89 & 92 & 48 & 76 & 90 & 80 & 35 \\
        DAGMA & 72 & 58 & 58 & 26 & 45 & 61 & 5 & 40 \\
        NoTears & 85 & 90 & \textcolor{green!35!black}{\textbf{100}} & 48 & 55 & 55 & 13 & 52 \\
        \bottomrule
    \end{tabular}
    \label{tab:num_barplot}
\end{table}

\end{document}